  \providecommand\BibTeX{{%
    \normalfont B\kern-0.5em{\scshape i\kern-0.25em b}\kern-0.8em\TeX}}}
\def\halfcheckmark{{\checkmark}\textsuperscript{{\kern-0.55em\tiny$\times$}}}
\newcommand{\specialcell}[2][c]{%
	\begin{tabular}[#1]{@{}c@{}}#2\end{tabular}}
\newcommand{\mytextcircled}[1]{\textcircled{\raisebox{-0.8pt}{#1}}}
\newcommand{\mydot}{\diamond}
\newcommand{\otherdot}{{\bar{\diamond}}}
\newcommand{\host}{\textit{Party A}\xspace}
\newcommand{\guest}{\textit{Party B}\xspace}
\newcommand{\hostul}{\ul{\textit{Party A}}\xspace}
\newcommand{\guestul}{\ul{\textit{Party B}}\xspace}
\newcommand{\party}{\textit{Party}\xspace}
\newcommand{\enc}[1]{\llbracket {#1} \rrbracket}
\newcommand{\enca}[1]{\enc{#1}_{A}}
\newcommand{\encb}[1]{\enc{#1}_{B}}
\newcommand{\pka}{pk_{A}}
\newcommand{\pkb}{pk_{B}}
\newcommand{\pkdot}{pk_{\mydot}}
\newcommand{\pkotherdot}{pk_{\otherdot}}
\newcommand{\ska}{sk_{A}}
\newcommand{\skb}{sk_{B}}
\newcommand{\skotherdot}{sk_{\otherdot}}
\renewcommand{\ss}[1]{\langle {#1} \rangle}
\newcommand{\xa}{X_A}
\newcommand{\xb}{X_B}
\newcommand{\xdot}{X_\mydot}
\newcommand{\xodot}{X_\otherdot}
\newcommand{\tw}{W}
\newcommand{\twa}{\tw_A}
\newcommand{\twb}{\tw_B}
\newcommand{\twdot}{\tw_\mydot}
\newcommand{\ua}{U_A}
\newcommand{\ub}{U_B}
\newcommand{\udot}{U_\mydot}
\newcommand{\va}{V_A}
\newcommand{\vb}{V_B}
\newcommand{\vdot}{V_\mydot}
\newcommand{\vodot}{V_\otherdot}
\newcommand{\qa}{Q_A}
\newcommand{\qb}{Q_B}
\newcommand{\qdot}{Q_\mydot}
\newcommand{\sa}{S_A}
\renewcommand{\sb}{S_B}
\newcommand{\sdot}{S_\mydot}
\newcommand{\ta}{T_A}
\newcommand{\tb}{T_B}
\newcommand{\tdot}{T_\mydot}
\newcommand{\lk}{\textup{lkup}}
\newcommand{\dlk}{\textup{lkup\_bw}}
\newcommand{\ea}{E_A}
\newcommand{\eb}{E_B}
\newcommand{\edot}{E_\mydot}
\definecolor{myred}{rgb}{1.0,0.7,0.8}
\definecolor{mygreen}{RGB}{0,166,0}
\definecolor{lightgreen}{rgb}{0.56, 0.93, 0.56}
\definecolor{myorange}{RGB}{252,107,4}
\definecolor{darkgreen}{RGB}{0,153,102}
\definecolor{lightblue}{rgb}{0.53, 0.81, 0.92}
\definecolor{lightgray}{gray}{0.9}
\newcommand{\eps}{\varepsilon}
\newtheorem*{assumption*}{\assumptionnumber}
\providecommand{\assumptionnumber}{}
\newtheorem*{adversary*}{\adversarynumber}
\providecommand{\adversarynumber}{}
\newcommand{\alg}{\textsc{BlindFL}\xspace}
\newcommand{\mysubsubsection}[1]{{\textbf{\textit{#1}.\xspace}}}
\newcommand{\myparagraph}[1]{{\textbf{#1}\xspace}}
\newmdenv[shadow=false,font=\small,backgroundcolor=white,leftmargin=0pt,rightmargin=0pt]{shadedbox}
\begin{document}
\fancyhead{}

\title{{\alg}: Vertical Federated Machine Learning without \\Peeking into Your Data}


\author{Fangcheng Fu}
\authornote{School of Computer Science \& Key Lab of High Confidence Software Technologies (MOE), Peking University}
\affiliation{
\institution{Peking University}
}
\email{ccchengff@pku.edu.cn}

\author{Huanran Xue}
\affiliation{
\institution{Tencent Inc.}
}
\email{huanranxue@tencent.com}

\author{Yong Cheng}
\affiliation{
\institution{Tencent Inc.}
}
\email{peterycheng@tencent.com}

\author{Yangyu Tao}
\affiliation{
\institution{Tencent Inc.}
}
\email{brucetao@tencent.com}

\author{Bin Cui}
\authornotemark[1]
\authornote{Institute of Computational Social Science, Peking University (Qingdao), China}
\affiliation{
\institution{Peking University}
}
\email{bin.cui@pku.edu.cn}


\begin{abstract}
Due to the rising concerns on privacy protection, 
how to build machine learning (ML) models 
over different data sources with security guarantees 
is gaining more popularity. 
Vertical federated learning (VFL) describes such a case 
where ML models are built upon the private data of 
different participated parties that own disjoint features 
for the same set of instances, 
which fits many real-world collaborative tasks. 
Nevertheless, we find that existing solutions for VFL 
either support limited kinds of input features 
or suffer from potential data leakage 
during the federated execution. 
To this end, this paper aims to investigate 
both the functionality and security 
of ML modes in the VFL scenario. 

To be specific, we introduce \alg, 
a novel framework for VFL training and inference. 
First, to address the functionality of VFL models, 
we propose the federated source layers 
to unite the data from different parties. 
Various kinds of features 
can be supported efficiently 
by the federated source layers, 
including dense, sparse, numerical, 
and categorical features. 
Second, we carefully analyze the security 
during the federated execution 
and formalize the privacy requirements. 
Based on the analysis, 
we devise secure and accurate algorithm protocols, 
and further prove the security guarantees 
under the ideal-real simulation paradigm. 
Extensive experiments show that 
\alg supports diverse datasets and models efficiently 
whilst achieves robust privacy guarantees. 
\end{abstract}

\begin{CCSXML}
<ccs2012>
   <concept>
       <concept_id>10002978.10002991.10002995</concept_id>
       <concept_desc>Security and privacy~Privacy-preserving protocols</concept_desc>
       <concept_significance>500</concept_significance>
   </concept>
   <concept>
       <concept_id>10010147.10010257</concept_id>
       <concept_desc>Computing methodologies~Machine learning</concept_desc>
       <concept_significance>500</concept_significance>
   </concept>
 </ccs2012>
\end{CCSXML}

\ccsdesc[500]{Computing methodologies~Machine learning}
\ccsdesc[500]{Security and privacy~Privacy-preserving protocols}

\keywords{Vertical Federated Learning, Data Privacy}

\maketitle

\section{Introduction}
\label{sec:intro}

\begin{figure}[!t]
\centering
\includegraphics[width=3.4in]{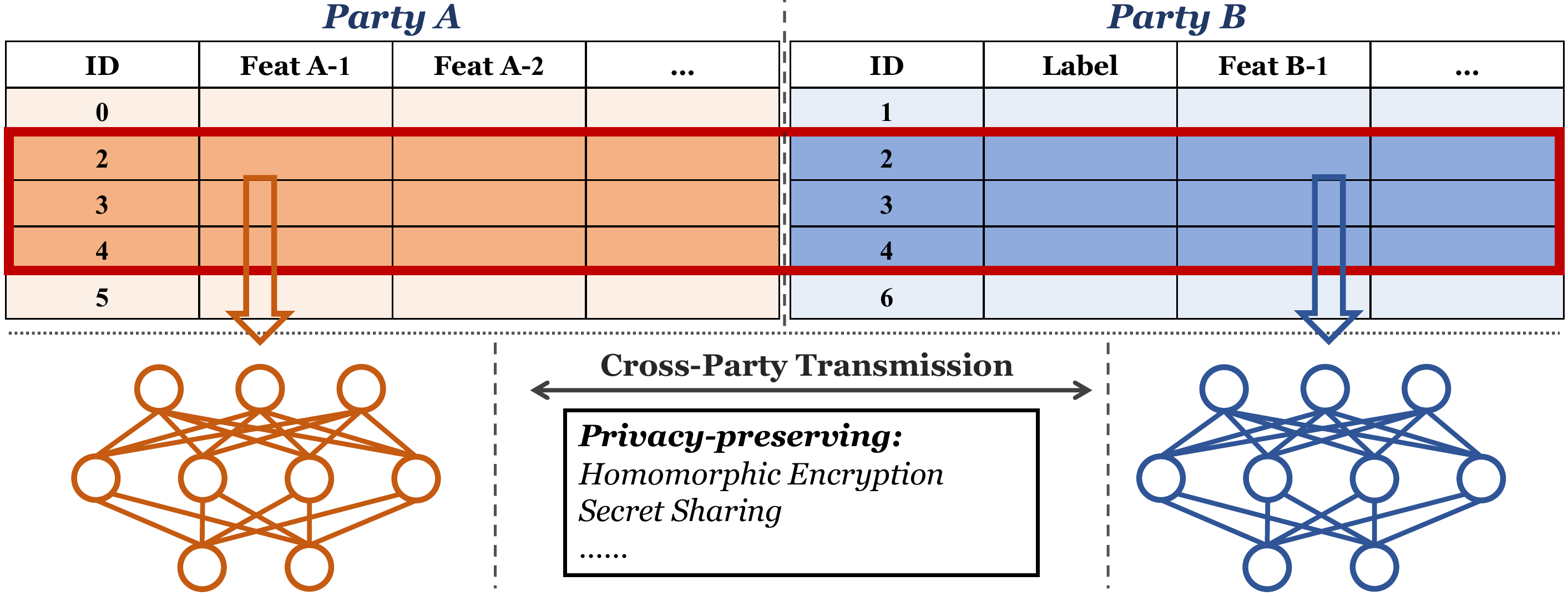}
\caption{{
An illustration of vertical federated learning (VFL).
Two parties own disjoint features
but have overlapping instances. 
Privacy-preserving techniques are utilized 
to protect the data in each party.
}}
\label{fig:vertical_fl}
\end{figure}

\mysubsubsection{Background and Motivation}
In recent years, 
following the explosive surge of data volume and 
the remarkable success of machine learning (ML) 
in the whole world, 
more and more enterprises are thirsty to 
collect tremendous user data, such as 
media data, text messages, and daily locations, 
to build better ML models. 
Meanwhile, it leads to the increasingly 
notorious abuse of personal data 
or even illegal leakage of individual privacy. 
Thus, the society has raised 
a growing attention to 
the protection of data privacy and 
the supervision on the potential risks to 
data leakage. 
Enormous lawful regulations 
have been established to 
protect the individual privacy
~\cite{gdpr,ccpa,cdpa}. 
Consequently, many enterprises are now 
restricted from collecting a great deal of 
data for ML tasks.

Owing to such a dilemma of ``data shortage'', 
researchers and data scientists are interested in 
building ML models with the data of 
different parties (typically, enterprises or organizations) 
on the basis of zero data leakage for each individual party. 
Specifically, 
a new paradigm called Federated Learning (FL) 
~\cite{fl_concepts,fl_challenge,konevcny2016federated_opt,konevcny2016federated_learn,vfl_ea,McMahan2017_fl}
conveys a possibility to train ML models 
over multiple data sources that are physically distributed 
with privacy preservation. 
In this work, we consider the vertical FL (VFL) scenario. 
As illustrated in Figure~\ref{fig:vertical_fl}, 
two participated parties own 
disjoint features (a.k.a. attributes) 
but overlap on some instances (a.k.a. samples). 
The overlapping instances together 
form a virtually joint dataset, 
which is vertically partitioned into two parties. 
\guest further holds the ground truth labels. 
VFL inputs the virtually joint dataset 
and outputs a federated model that is 
trained collaboratively.

\begin{figure*}[!t]
\centering
\includegraphics[width=\linewidth]{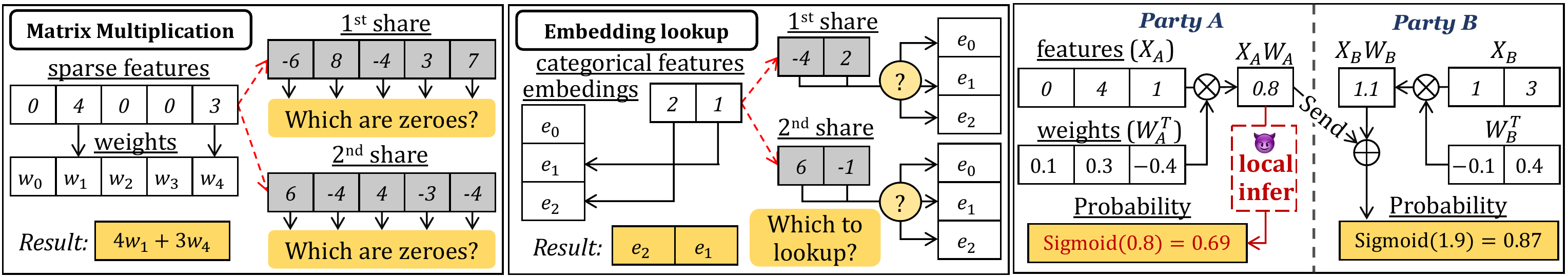}
\caption{{
Limitations of existing paradigms. 
Left-most and middle: 
Data outsourcing is not suitable for 
sparse and/or categorical features 
(illustrated by two examples that secretly share features). 
Right-most: 
A simple example of logistic regression 
following the split learning paradigm. 
\host can infer the labels accurately 
since the bottom model (i.e., $\twa$) is accessible. 
}}
\label{fig:limits}
\end{figure*}

Our industrial partner runs popular social apps 
and is able to gather rich user data 
and precise user profiling. 
Many collaborators wish to improve the ML ability 
of their tasks 
with the help of the data of our industrial partner, 
such as a Fintech company that 
hopes to build a more powerful risk model,  
or an E-commerce company that 
wishes to make more accurate recommendation{\footnote{
Formally speaking, there can be 
more than one \host's. 
However, most cross-enterprise collaboration 
follows the two-party setting. 
Furthermore, our work 
can be easily generalized to more \host's. 
Thus, we only describe the case of 
only one \host for simplicity, 
whilst discuss the multi-party setting 
in our appendix.}}. 
To this end, VFL is a good fit for 
such kind of cross-enterprise collaboration.

To be formal, the goal of VFL 
is to unite the features of \host and \guest, 
which are denoted as $\xa, \xb$, respectively, 
to learn a federated model that 
fits the target labels $y$ in \guest, 
without any privacy leakage of data in both parties. 
In this work, we focus on tabular data 
since it is rare for an image or a sentence 
to be split and distributed into different parties. 
Most importantly, the federated model should achieve 
comparable performance (e.g., accuracy, loss) as 
the non-federated model trained on collocated datasets.

\mysubsubsection{Challenges} 
We find that the existing VFL solutions can be 
categorized into two different lines according to 
how the private features are processed. 
Nevertheless, the two existing paradigms 
have complementary strengths and drawbacks.

The first line of works 
~\cite{secureml,aby3,cryptonets,cryptodl} 
leverages secure multi-party computation 
(MPC)~\cite{smpc} techniques, such as 
homomorphic encryption (HE) and secret sharing (SS), 
to achieve intact privacy guarantees. 
Typically, they utilize the data outsourcing technique, 
which is widely used in database services
~\cite{dbaas,secure_data_outsource}, 
and outsource the datasets to non-colluding servers 
for ML training or inference. 
To maintain privacy, 
all feature values are turned into 
HE or SS variables when outsourcing 
so that the servers cannot know the original values. 
However, such an approach does not fit 
many real-world datasets. 
We provide two examples 
that secretly share features onto two servers 
in Figure~\ref{fig:limits}. 
First, for high-dimensional and sparse datasets, 
the outsourced features become fully dense 
since the non-zero feature indexes 
are also private. 
It prevents us from sparsifying the computation, 
leading to a performance hazard when the sparsity is high. 
Second, for categorical features, 
the embedding lookup operation requires 
knowing the exact categorical values. 
However, performing lookup operations 
on the outsourced values is invalid. 
As a result, although these MPC-based methods 
have promising security guarantees, 
the data outsourcing nature is not suitable for 
sparse and/or categorical features.

Another line of works follows the split learning paradigm
~\cite{split_learning,split_fed,vfl_lr,interactive,vfl_label_protect,fdml,async_vfl}, 
which does not outsource the original datasets 
so that sparse and categorical features 
can be handled well. 
Typically, each party maintains a bottom model in plaintext 
that extracts the forward activations 
(a.k.a. hidden representations) 
using its own private features. 
The activations of all parties will be exchanged 
and fed into a top model to make predictions. 
(See Section~\ref{sec:pre} for more details.)
Nevertheless, the values generated in the bottom models 
are released in plaintext and 
would cause data leakage. 
For instance, Figure~\ref{fig:limits} illustrates 
a simple example for logistic regression. 
Although \host cannot get access to \guest's data, 
it can still infer the labels accurately 
by analyzing $\xa\twa$, 
because the bottom model (i.e., $\twa$) is managed by \host. 
Undoubtedly, such data leakage 
is forbidden and even illegal in 
real-world applications. 
As we will analyze in Section~\ref{sec:anatomy}, 
although these works are more flexible 
to support sparse and/or categorical features, 
such a design of local bottom models 
cannot provide provable security guarantees 
like the MPC-based methods, 
and thus there exist potential safety hazards.

\begin{table}[!t]
\small
\centering
\caption{\small{Comparison of different works in terms of (i) whether they are suitable for these types of features; (ii) whether they have provable security guarantees under the ideal-real simulation paradigm.}}
\begin{tabular}{|c|c|c|c|c|}
\hline
\multirow{3}*{\specialcell{\textbf{Paradigm \& How}\\\textbf{Features are Processed}}} & 
\multicolumn{3}{c|}{\textbf{Supported Features}} & 
\multirow{3}*{\specialcell{\textbf{Security}\\\textbf{Guarantees}}} \\
\cline{2-4}
& \multicolumn{2}{c|}{\scriptsize{Numerical}} 
& \multirow{2}*{\scriptsize{Categorical}}
& \\
\cline{2-3}
& \scriptsize{Dense} & \scriptsize{Sparse} & & \\
\hline
\hline
\specialcell{MPC-based\\(Data Outsourcing)}
& \checkmark & & & \checkmark \\
\hline
\specialcell{Split Learning\\(Local Bottom Model)}
& \checkmark & \checkmark & \checkmark & \\
\hline
\specialcell{\alg (this work)\\(Federated Source Layer)}
& \checkmark & \checkmark & \checkmark & \checkmark \\
\hline
\end{tabular}
\label{tb:summary} 
\end{table}

\mysubsubsection{Summary of Contributions}
As shown in Table~\ref{tb:summary}, 
the aforementioned paradigms 
either support limited feature types 
or fail to convey promising security guarantees. 
Motivated by this, 
we develop a novel VFL framework, 
namely \alg (pronounced as ``blindfold''), 
to address these challenges.
The major contributions 
of our work are summarized as follows. 

\myparagraph{Proposal of \alg.}
We propose \alg, a brand new framework for 
VFL training and inference. 
\alg keeps the private datasets inside each party 
without outsourcing 
and unites the features by an important component called 
``federated source layer''. 
By doing so, \alg can support various kinds of 
input features, including dense, sparse, 
numerical, and categorical features, 
whilst achieves security guarantees in the meantime. 
\alg has been deployed in many productive applications 
of our industrial partner. 

\myparagraph{Analysis of Privacy Requirements.}
We anatomize the privacy requirements 
of VFL training and inference thoroughly. 
To be specific, 
we present a comprehensive analysis of 
the informativeness of 
all kinds of values generated in the learning process, 
including forward activations, backward derivatives, 
model weights, and model gradients, 
i.e., how it would cause leakage 
once they are obtained by a party. 
Upon the analysis, we formulate several privacy requirements 
--- the detailed contents 
that each party must not get access to 
during the execution. 
These privacy requirements 
shed light on how to judge whether a federated algorithm 
is secure or not and 
provide a template to design new algorithm protocols. 

\myparagraph{Design of Algorithm Protocols.}
Based on the privacy requirements, 
we devise the federated source layers, 
the basic building blocks 
to unite the features from different data sources. 
The federated source layer 
leverages the HE and SS techniques 
to accomplish the aforementioned privacy requirements 
throughout the algorithm protocols. 
Two kinds of federated source layers, namely 
\texttt{MatMul} and \texttt{Embed-MatMul}, 
are designed for numerical features and 
categorical features, respectively. 
We further prove that our algorithm protocols 
are secure in the presence of semi-honest adversaries 
that can corrupt up to one party 
under the ideal-real paradigm.
With these two kinds of source layers, 
\alg is able to support various kinds of features 
and build diverse VFL models, including 
generalized linear models (GLMs) and 
neural networks (NNs). 

\myparagraph{Experimental Evaluation.}
Comprehensive experiments are conducted to evaluate 
the effectiveness of \alg. 
First, empirical results show that 
\alg is more robust against the semi-honest adversaries 
and protects the data privacy well. 
Second, our work outperforms the existing works 
over 50$\times$ in terms of running speed 
and supports much larger scale of datasets. 
Third, extensive experiments on 
a wide range of datasets and models 
prove that 
\alg achieves comparable model performance 
as non-federated learning on collocated datasets, 
verifying its lossless property.

\section{Preliminaries}
\label{sec:pre}
In this section, we briefly introduce the 
preliminary literature related to our work. 
For the sake of simplicity, in the rest of the paper, 
we use the symbol ``$\mydot$'' to represent 
an arbitrary party 
and ``$\otherdot$'' to represent the other party, 
respectively.

\subsection{Common ML Ops for Input Features}
Given a dataset $\langle X, y \rangle$ 
and a loss function $f$, 
where $X$ is the features and 
$y$ is the labels, 
our goal is to learn a model $\theta$ that 
predicts $\hat{y}$ for $X$ 
and minimizes the loss $f(y, \hat{y})$. 
The most prevailing way to solve 
such a supervised ML problem 
is to use the first-order gradient optimization, 
typically, the mini-batch stochastic gradient descent 
(SGD) and its variants. 
In each iteration, a mini-batch of instances 
$\langle X^{(B)}, y^{(B)} \rangle$ 
is sampled to calculate the model gradients 
$\nabla\theta = 
\nicefrac{\partial f(y^{(B)}, \hat{y}^{(B)})}{\partial\theta}
$ and the model weights are updated via 
$\theta = \theta - \eta\nabla\theta$, 
where $\eta$ is the learning rate (a.k.a. step size). 
In the rest of the paper, we omit the superscript ${(B)}$ 
for simplicity.

\textbf{Matrix Multiplication.\xspace}
Matrix multiplication is one of the most 
common operations in ML. 
Given the \textit{model weights} 
$\tw \in \mathbb{R}^{IN \times OUT}$, 
where $IN, OUT$ are the input and output dimensionalities, 
it computes $Z = X \tw$ in the forward propagation, 
which are also known as the \textit{forward activations}. 
The subsequent modules will take as input $Z$ 
to perform their forward propagation routines. 
During the backward propagation, 
the \textit{backward derivatives} $\nabla Z = 
\nicefrac{\partial f}{\partial Z}$ are propagated 
from the subsequent modules and 
the \textit{model gradients} are computed as 
$\nabla \tw = X^T \nabla Z$ according to the chain rule. 

\textbf{Embedding Lookup.\xspace}
For categorical features, applying 
the matrix multiplication is not a common choice in ML 
since the ordering of categorical values 
should not matter. 
In contrast, an embedding table is usually learned 
for the categorical inputs{\footnote{
Feature engineering techniques such as 
one-hot encoding or frequency encoding 
can also be applied to the categorical features, 
however, we do not discuss them in this work 
since they are orthogonal.}}.
In this work, the embedding table is denoted as $Q$. 
During the forward period, 
an embedding lookup operation $E = \lk(Q, X)$ 
queries the embedding entries 
given the categorical indices. 
The backward propagation computes the model gradients 
as $\nabla Q = \dlk(\nabla E, X)$. 
Embedding lookup is usually followed by 
a matrix multiplication, 
which outputs $Z = E \tw$ 
given the model weights $\tw$.

\subsection{Privacy-Preserving Techniques}
Security is a widely studied area to 
protect private data from any adversaries. 
In this work, we adopt two well-known
privacy-preserving techniques 
--- homomorphic encryption and secret sharing ---
to derive our federated algorithms.

\textbf{Homomorphic Encryption.\xspace}
Homomorphic encryption (HE)~\cite{homo,survey_homo} 
describes the cryptographic methods 
that allow arithmetic computation in the space of ciphers. 
There are different kinds of HE, 
such as fully HE, somewhat HE, additive HE, 
and multiplicative HE
~\cite{gentry2009fully_homo,somewhat_he,paillier,ckks_homo,rsa}. 

In this work, we focus on additive HE. 
For instance, Paillier cryptosystem~\cite{paillier} 
is a well-known additive HE method 
and has been used in many FL algorithms
~\cite{vfboost,secureboost,vfl_lr,private_vlr,privacy_vfl_tree}. 
Paillier cryptosystem initializes with 
a key pair $\langle pk, sk \rangle$. 
The public key $pk$ is utilized in encryption 
and can be made public to the other party, 
whilst the secret key (a.k.a. private key) $sk$ 
is for decryption and must be kept secret. 
Given values $u, v$, 
Paillier cryptosystem 
supports the following types of operations: 
\begin{itemize}[leftmargin=*,label=$\triangleright$]
\item
Encryption: 
$\textup{\ul{Enc}}(v, pk) = \enc{v}$; 
     
\item
Decryption: 
$\textup{\ul{Dec}}(\enc{v}, sk) = v$; 
     
\item
Homomorphic addition: 
$\enc{u} + \enc{v} = \enc{u + v}$; 
     
\item
Scalar addition: 
$\enc{u} + v = \enc{u} + \textup{\ul{Enc}}(v, pk) = \enc{u + v}$; 

\item
Scalar multiplication: 
$u\enc{v} = \enc{uv}$.
\end{itemize}

In the rest of the paper, we assume 
both parties have generated their own key pairs 
and exchanged the public keys on initialization. 
Thus, we omit $pk$ or $sk$ 
and denote $\enc{v}_\mydot$ as a cipher 
corresponding to the $sk$ of \party $\mydot$.

\textbf{Secret Sharing.\xspace}
As another powerful privacy-preserving technique, 
secret sharing (SS)~\cite{ss_shamir,ss_arithmetic,aby,aby3,unbalance_sharing_sm4} 
breaks a value into pieces of sharing 
and distributes them to 
different parties so that none of the parties 
knows the exact value. 
For instance, 
if \party $\mydot$ wishes to secretly share a value $v$, 
it randomly generates its own piece of sharing $v_\mydot$ 
and sends the other piece $v_\otherdot = v - v_\mydot$ 
to \party $\otherdot$. 
Whenever a party wishes to restore an SS variable, 
it receives the other piece of sharing 
and simply adds the two pieces together.

In this work, we focus on two-party additive SS. 
Given two SS variables 
$\langle v_A, v_B \rangle, \langle u_A, u_B \rangle$, 
the addition arithmetic can be executed locally as 
$\langle u_A + v_A, u_B + v_B \rangle$. 
A common way to multiply two SS variables 
is based on the Beaver triplets. 
However, each triplet can only be used once 
and it is time-consuming to generate. 
We refer interested readers to~\cite{beaver,beaver_gen} 
for more details.

\begin{algorithm}[!t]
\caption{{The procedure to transform an HE variable $\enc{v}$ 
into an SS variable $\langle \phi, v - \phi \rangle$.
$v$ is a scalar or a tensor.}}
\label{alg:he2ss}
\DontPrintSemicolon
\SetKwFunction{HEtoSS}{HE2SS}
\SetKwProg{Fn}{Function}{:}{}

\Fn{\HEtoSS{$\enc{v}_\otherdot=None$}}
{
	\uIf (\tcp*[h]{\party $\otherdot$ that owns $\skotherdot$}) 
	{$\enc{v}_\otherdot$ is None}
	{
		Receive $\enc{v - \phi}_\otherdot$ and decrypt;
		\Return $v - \phi$
	}
	\uElse (\tcp*[h]{\party $\mydot$ that does not own $\skotherdot$})
	{
		Randomly generate $\phi$ with the same shape of $\enc{v}_\otherdot$\; 
		Send $\enc{v - \phi}_\otherdot = \enc{v}_\otherdot - \phi$;
		\Return $\phi$
	}
}
\end{algorithm}

It is worthy to note that HE and SS variables 
can be transformed easily. 
For instance, we can transform an HE variable to SS variable 
via Algorithm~\ref{alg:he2ss}. 
Moreover, the operations on HE or SS variables 
can be easily vectorized. 
For instance, 
the matrix multiplication $\enc{Z} = X\enc{Y}$ 
is achieved by letting 
$\enc{Z_{ij}} = \sum_k X_{ik} \enc{Y_{kj}}$.

\textbf{Security model.\xspace}
Like many previous works
~\cite{secureml,securenn1,securenn2,vfboost,privacy_vfl_tree}, 
we consider the semi-honest 
(a.k.a. honest-but-curious) security model, 
where all parties honestly execute the protocols,
whilst the curious parties try to analyze the others' data 
through any information obtained during the protocols. 
We assume that a polynomial-time adversary 
can corrupt one of the two parties during the execution. 
When analyzing our protocols, we follow 
the ideal-real paradigm, which is defined as follows. 
\begin{definition}
(\cite{foundations_of_crypto,how_to_simulate})
Let $\Pi$ be a real-world protocol and 
$\mathcal{F}$ be an ideal functionality. 
We say $\Pi$ securely realizes $\mathcal{F}$ 
if for every adversary $\mathcal{A}$ 
attacking the real interaction, 
there exist a probabilistic polynomial-time simulator 
$\mathcal{S}$ attacking the ideal interaction, 
such that any environment on any input 
cannot tell apart 
the real interaction from the ideal interaction, 
except with negligible probability 
(in the security parameter $\kappa$). 
\end{definition}

\subsection{ML over Different Data Sources}
Under the VFL setting, training data come from 
different sources. 
The most challenging problem is  
how to build ML models over different data sources 
to achieve comparable model performance as 
non-federated learning on collocated data, 
whist guarantee the data privacy of 
each participated party meanwhile. 
A series of works are developed to tackle this problem. 
We divide them into two categories 
according to how they process the features. 

\textbf{MPC-based solutions (data outsourcing).}
The first kind of works outsources the data of all parties 
to one or more non-colluding servers 
for ML training or inference. 
Before data outsourcing, they transform the feature values 
into HE or SS variables and leverage the arithmetic properties 
to perform ML ops. 
For instance, SecureML~\cite{secureml} 
secretly shares the features and models 
onto two servers. 
Then the matrix multiplication is performed as 
$\ss{X}\ss{W}$. 
Similarly, there are also works that 
encrypt and send the features to a single server, 
and utilize the arithmetic properties of HE 
to carry out the computation~\cite{cryptodl,cryptonets}.
With the help of the HE and SS techniques, 
none of the servers can reveal the original values, 
hence these works usually achieve 
provable security guarantees 
under the ideal-real paradigm. 
To apply these works under the VFL setting, 
we can let each party work as one server 
and outsource the features among the parties 
for ML computation.

\begin{figure}[!t]
\centering
\includegraphics[width=3in]{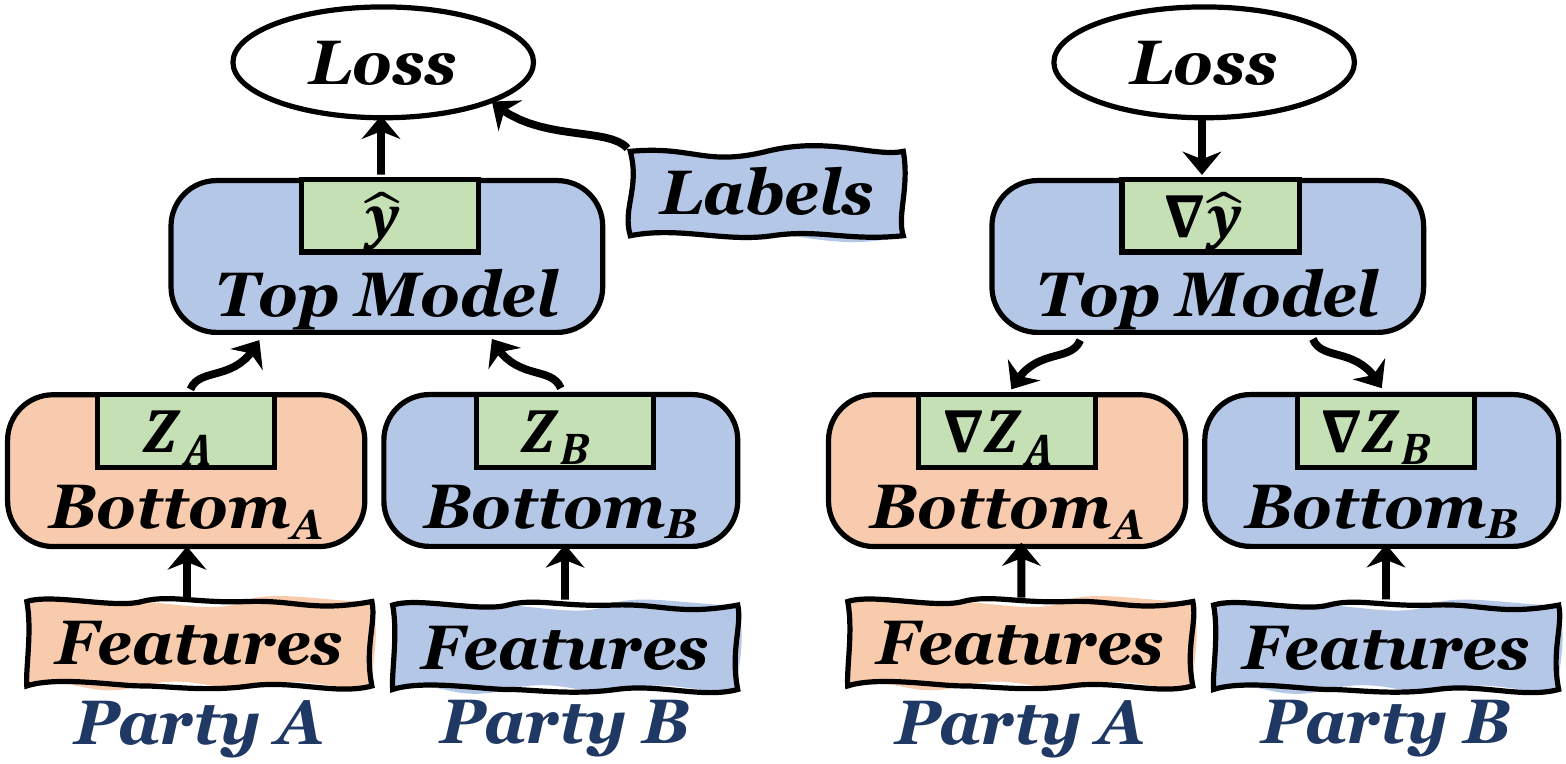}
\caption{{The bottom-to-top architecture of split learning.}}
\label{fig:split_learning}
\end{figure}

\begin{figure*}[!t]
\begin{minipage}{.45\textwidth}
    \centering
    \includegraphics[width=3in]{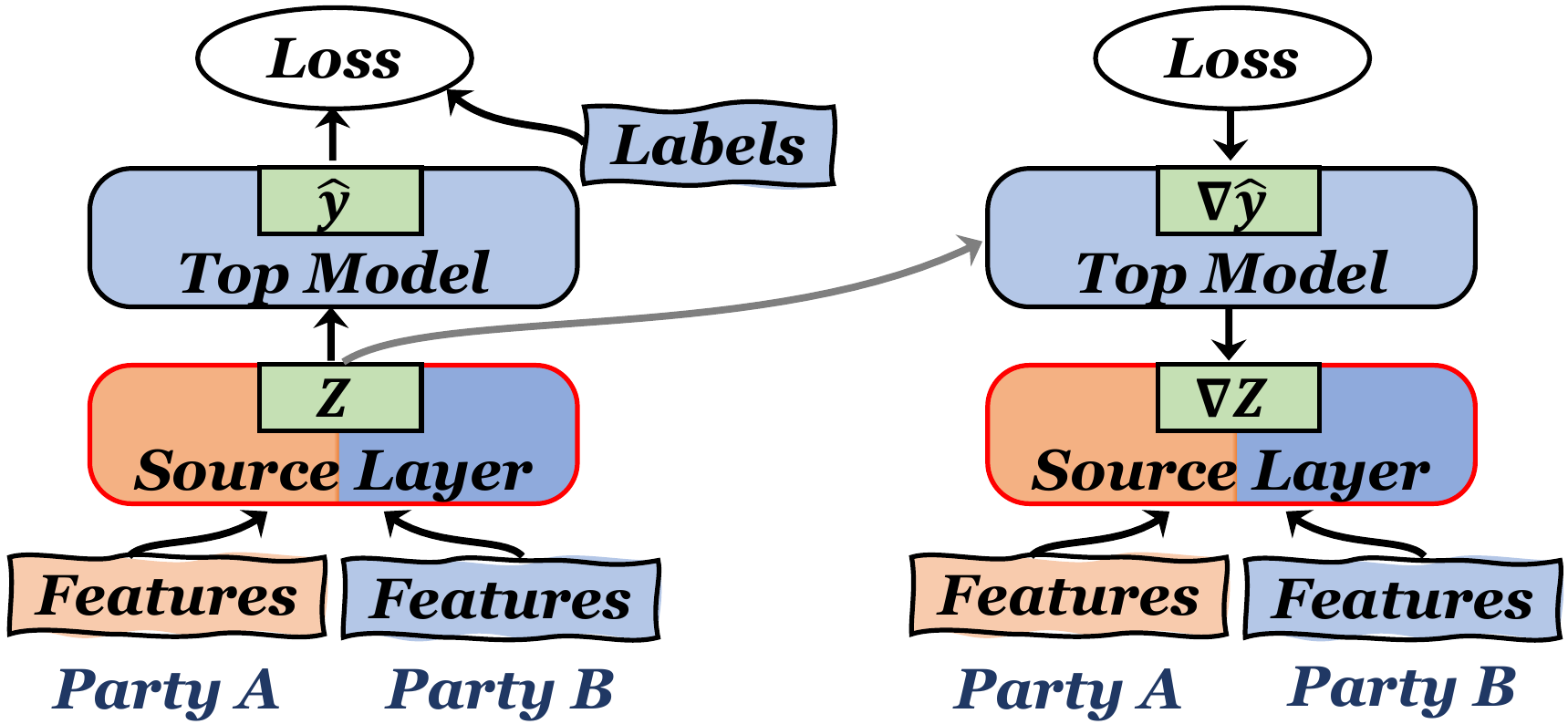}
    \captionof{figure}{{Overview of the forward (left) and backward (right) propagation of \alg.}}
    \label{fig:overview}
\end{minipage}
\begin{minipage}{.01\textwidth}
$ $
\end{minipage}
\begin{minipage}{.5\textwidth}
    \centering
    \includegraphics[width=3in]{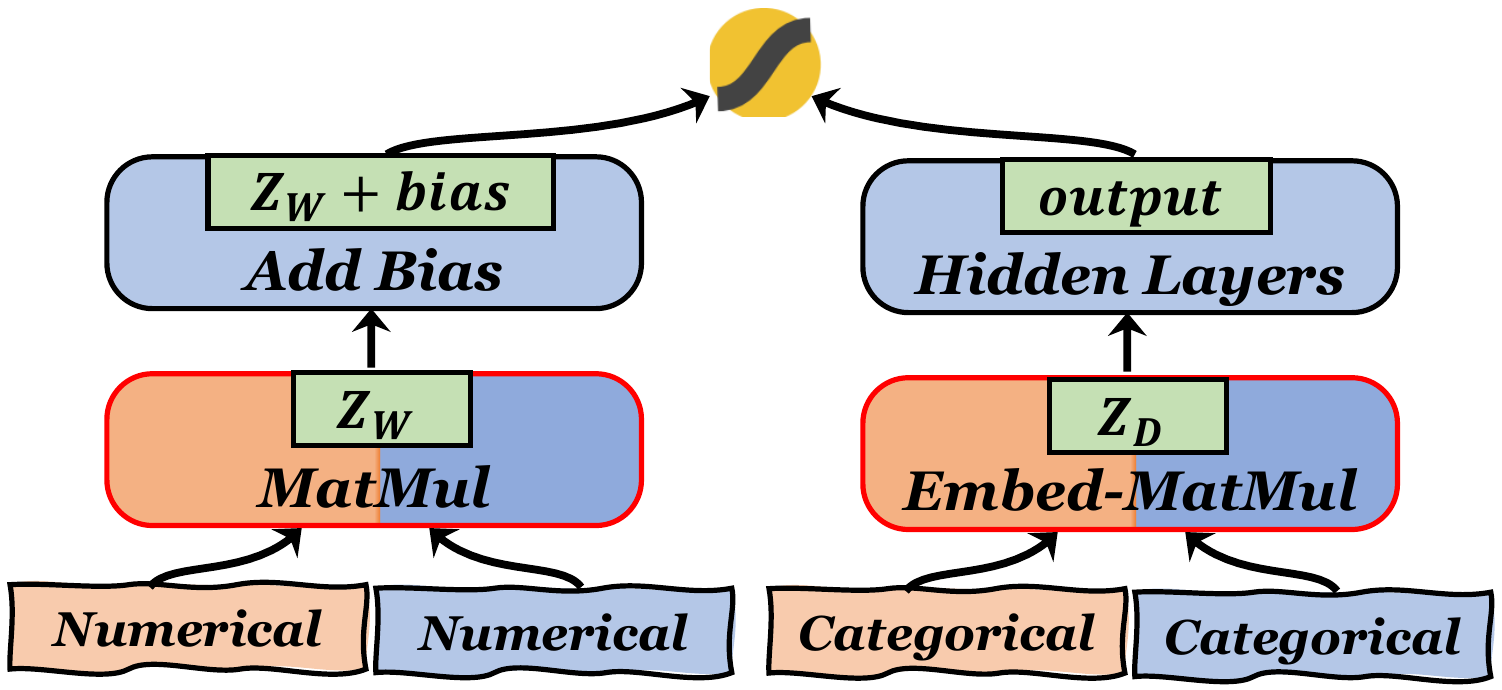}
    \captionof{figure}{{Example of the wide and deep (WDL) model tailored for \alg. There are two source layers --- \texttt{MatMul} handles the sparse, numerical features (the wide part) and \texttt{Embed-MatMul} handles the categorical ones (the deep part).}}
    \label{fig:wdl_overview}
\end{minipage}
\end{figure*}

\textbf{Split learning (local bottom models).}
The second category follows 
the split learning paradigm~\cite{split_learning,split_fed}. 
As shown in Figure~\ref{fig:split_learning}, 
each party holds its own data rather than outsourcing. 
Each party is associated with a bottom model 
that works as the feature extractor. 
\guest further manages a top model that 
makes the final predictions. 
The forward activations $Z_A$ and 
backward derivatives $\nabla Z_A$ are exchanged 
between the parties. 
With such a design of bottom models, 
split learning is very flexible to 
support various kinds of features. 
For instance, to support sparse matrix multiplication, 
the bottom model of \party $\mydot$ computes 
$\xdot\twdot$ using sparsified computation, 
where $\twdot$ is the model weights, 
and the top model of \guest aggregates $Z=\xa\twa+\xb\twb$. 
To support categorical features, 
each party manages an embedding table in the bottom model 
and performs the lookup operation locally. 
Thus, several VFL algorithms are proposed in such a pattern
~\cite{vfl_lr,interactive,vfl_label_protect,fdml,async_vfl}.

\section{Anatomy of Existing Paradigms}
\label{sec:anatomy}

In this section, we anatomize the existing paradigms 
from the perspective of supported features and 
security guarantees, respectively.

\mysubsubsection{The MPC-based solutions}
We first consider the MPC-based solutions, 
which outsource the features 
using privacy-preserving techniques 
to carry out the ML computation. 

\textbf{Supported features}. 
As introduced in Section~\ref{sec:intro}, 
data outsourcing is not suitable for 
sparse and/or categorical features. 
On the one hand, sparse datasets will become fully dense, 
since the outsourced variables should not reveal 
whether the original values are zeroes or not. 
On the other hand, after the categorical values 
are transformed into HE or SS variables, 
the embedding lookup operation cannot be performed. 
However, many ML tasks adopt feature engineering techniques 
such as hashing, quantile binning, and Cartesian product, 
which make many real-world datasets 
high-dimensional and sparse. 
In addition, categorical features are also widely used 
in practical applications to learn latent representation 
for better model performance. 
Therefore, these outsourcing-based solutions 
are not suitable for many real-world datasets. 

\textbf{Security guarantees}. 
Despite the aforementioned limitations, 
the MPC-based solutions usually achieve 
very promising security guarantees. 
For instance, SecureML~\cite{secureml} 
securely realizes the ideal functionality 
of ML training in the presence of semi-honest adversaries. 
As a result, methods in this line 
have a strong privacy preservation ability. 

\mysubsubsection{Split learning}
Next, we consider the methods in the split learning paradigm, 
where each party is able to 
process the features locally via a bottom model.

\textbf{Supported features}. 
Compared with the outsourcing-based methods, 
split learning is more suitable for 
sparse and/or categorical features 
since all data are kept local inside each party. 
During the ML tasks, each party can easily 
identify whether a feature value is zero 
or perform the lookup operation given a categorical value.

\textbf{Security guarantees}. 
Since the features are visible to the owner party, 
it is an intuitive idea to 
let each party process its own features locally 
and only share the intermediate results between parties. 
However, although the raw data are not exposed directly, 
the intermediate results are also informative 
and would cause severe data leakage. 
To be specific, we identify two kinds of leakage: 
\begin{itemize}[leftmargin=*,label=$\triangleright$]
\item Label leakage. 
On the one hand, since \host owns the bottom model, 
it is able to compute $Z_A$ individually, 
which would leak the labels. 
Take Figure~\ref{fig:limits} as an example. 
\host can steal the labels by analyzing $Z_A = \xa\twa$ 
with high confidence. 
Many existing works are vulnerable to this problem
~\cite{vfl_lr,async_vfl,private_vlr}. 
On the other hand, as analyzed by~\citet{vfl_label_protect} 
and verified by our experiments, 
the backward derivatives $\nabla Z_A$ can reveal 
almost all training labels. 
The essential idea is that logistic loss produces 
opposite directions for different labels, 
so the directions of derivatives 
reflect the label information inevitably.
\item \host's feature leakage. 
Since the forward activations $Z_A$ 
are originated from $\xa$ and independent from $\xb$, 
\guest can analyze a certain level of feature similarities 
of $\xa$. 
In other words, if the features of two instances 
{\footnotesize $\xa^{(i)}, \xa^{(j)}$} 
are very similar, the corresponding activations 
{\footnotesize $Z_A^{(i)}, Z_B^{(j)}$} 
would also be very close. 
\end{itemize}
With the hope of avoiding data leakage, 
some of the existing works try to 
enhance privacy via MPC techniques 
such as HE and SS~\cite{vfl_lr,interactive}. 
Nevertheless, these works fail to 
address all the leakage cases. 
For instance, ~\citet{vfl_lr} propose to 
encrypt the derivatives in the backward propagation 
to avoid the label leakage from derivatives. 
However, they do not consider the label leakage 
from $Z_A = \xa\twa$ as shown in Figure~\ref{fig:limits}. 
Thus, \host can still make accurate guesses on the labels. 

There are also methods that try to 
add random noises to perturb these sensitive values. 
For instance, \citet{vfl_label_protect} propose to 
protect the labels by adding noises to $\nabla Z_A$. 
Nevertheless, we do not consider these methods in this work 
for three reasons. 
First, there is a tradeoff between 
protection ability and model quality. 
When strong privacy is required, 
the model accuracy will be harmed significantly. 
Second, these methods cannot provide a formal 
security guarantee like the MPC-based ones, 
which is expected in our work. 
Third, to the best of our knowledge, 
there is no such method that addresses 
all these leakage cases together.

In fact, the root cause of such potential data leakage 
is the design of local bottom models. 
For instance, as long as \host owns the bottom model, 
it is able to compute $Z_A$ anyway, 
leading to label leakage. 
To this end, the design of local bottom models 
cannot provide security guarantees 
under the ideal-real paradigm. 
How to build VFL models on non-outsourced data 
with provable security guarantees 
needs careful re-investigation.

\mysubsubsection{Our solution}
In order to support various kinds of features 
and guarantee privacy preservation in the meantime, 
this work draws the strengths of both categories. 
First, our work does not outsource the original datasets. 
In contrast, we follow the split learning paradigm 
to keep the private data inside each party. 
Second, we develop the federated source layers to 
collaboratively process the features of all parties. 
Unlike the bottom models, 
each party is not able to process the features individually 
in our design. 
Furthermore, our federated source layers 
achieve provable security guarantees. 
Putting them together, our work 
enjoys the flexibility to support various kinds of features 
whilst being privacy preserving.

\section{Overview and Privacy Formulation}
\label{sec:method}

\subsection{Overview}
\label{sec:method_overview}

Figure~\ref{fig:overview} depicts 
the overview of \alg, 
which can be decoupled into two parts. 
First, the federated source layer 
works as the basic building block that 
unites the features of both parties. 
Given the output of the source layer, denoted as $Z$, 
a non-federated sub-module in \guest, 
namely the top model, 
plays the role of classifier/predictor{\footnote{
The top model can also be a federated module. 
However, a non-federated top model is more common 
in practice to address efficiency. 
Therefore, this work mainly focuses on 
non-federated top models 
whilst discusses how to adapt 
our federated source layers 
to federated top models 
in our appendix.}}.
During the backward process, 
\guest computes the loss function via 
the final predictions and ground truth labels, 
and then back propagates along the top model 
to obtain the backward derivatives $\nabla Z$. 
Finally, a federated procedure will be executed to 
update the model weights of the source layer. 
It is worthy to note that 
our framework differs from the model architecture of 
split learning. 
Our source layer requires all parties to 
collaboratively execute the learning process 
and outputs \textit{only} the aggregated results, i.e., $Z$. 
Whilst in split learning, each party 
can process the features individually and 
obtain the unaggregated values from its bottom model.

In practice, since the top model can be 
an arbitrary sub-module that 
minimizes the loss between predictions and labels, 
we concentrate on the source layers. 
Specifically, we consider two kinds of 
source layers in this work 
for different types of input features.

The first kind of source layers, 
namely \texttt{MatMul}, 
aims at the numerical feature values 
and computes a matrix multiplication 
in the forward propagation, i.e., 
$Z = \xa \twa + \xb \twb$, 
where $\twa, \twb$ are the model weights 
for \host and \guest, respectively. 
During the backward propagation, 
a federated procedure is executed to 
update $\twdot$ by the model gradients 
$\nabla\twdot = \xdot^T\nabla Z$ 
for each party.

For categorical features, 
we devise a more complex source layer 
called \texttt{Embed-MatMul}
that fuses the embedding lookup operation 
and matrix multiplication, i.e., 
$Z = \ea\twa + \eb\twb$, where 
$\edot = \lk(\qdot, \xdot)$ 
is the lookup operation given 
the embedding table $\qdot$. 
During the federated backward propagation, 
the backward derivatives and model gradients 
are computed as 
\begin{equation*}
	\nabla\twdot = \edot^T \nabla Z, \;
	\nabla\edot = \nabla Z \twdot^T, \;
	\nabla\qdot = \dlk(\nabla\edot, \xdot).
\end{equation*}

With these two kinds of source layers, 
we can derive various VFL models, 
including generalized linear models 
and neural networks. 
For instance, for logistic regression (LR), 
there is a \texttt{MatMul} source layer 
with $OUT=1$, 
whilst the top model adds the bias term 
and computes the sigmoid function, i.e., 
$\hat{y} = \textup{sigmoid}((\xa\twa+\xb\twb) + bias)$. 
For another example, 
as shown in Figure~\ref{fig:wdl_overview}, 
the wide and deep (WDL) model~\cite{wdl} 
consists of two source layers, 
one for the sparse, numerical features 
and the other for the categorical fields.

\subsection{Privacy to Matter}
\label{sec:privacy_matter}
As discussed in Section~\ref{sec:anatomy}, 
although datasets are kept local inside each party, 
the values generated in the learning process 
are also informative and would cause data leakage. 
Thus, before stepping into the design and analysis of 
the proposed source layers, 
we would like to discuss the privacy 
of all kinds of values, 
including forward activations, backward derivatives, 
model weights, and model gradients, respectively. 
In particular, we wish to conclude 
several guidelines about 
what contents must not be accessible to a specific party.

\textbf{Privacy of forward activations.\xspace}
Undoubtedly, forward activations have a strong relationship to 
the ground truth labels 
since they are learned to fit the labels. 
For instance, as illustrated in Figure~\ref{fig:limits}, 
in the \texttt{MatMul} source layer, 
$\xa\twa$ can be directly used to 
make predictions on the labels, 
so \host should have zero knowledge of them. 
Consequently, we make the requirement that \mytextcircled{1} 
{\hostul}\ul{\textit{ is not allowed to 
obtain any forward activations}}. 
However, as we have analyzed in Section~\ref{sec:anatomy}, 
since several existing VFL solutions 
fail to fulfill this requirement, 
\host can easily reveal a large fraction of labels, 
dampening the significance of privacy preservation. 

In addition to labels, forward activations 
inevitably contain sensitive feature information 
since they are originated from features. 
As aforementioned, 
\host is already prohibited from 
obtaining any forward activations, 
so we only need to consider whether \guest could 
guess the features of \host via forward activations. 
We divide forward activations into three kinds: 
(i) those solely dependent on $\xa$ 
(e.g., $\ea, \xa\twa$), 
(ii) those dependent on both $\xa,\xb$ 
(e.g., $Z, \hat{y}$), 
and (iii) those solely dependent on $\xb$ 
(e.g., $\eb, \xb\twb$). 
For the first kind, as discussed in Section~\ref{sec:anatomy}, 
we notice that they would 
disclose a certain level of feature similarity 
between different instances. 
For instance, in the \texttt{Embed-MatMul} layer, 
once \guest obtains $\ea$, it realizes whether two instances 
are equal on some categorical fields 
by comparing the embedding entries. 
Therefore, in order to protect the features, 
\mytextcircled{2}
\ul{\textit{we forbid any forward activations 
that are solely dependent on 
the features of }}{\hostul}\ul{\textit{ to be disclosed 
to }}{\guestul}. 
The second kind aggregates $\xa,\xb$ for the top model. 
Since the goal of VFL is to output the inference results 
to \guest, they should be accessible\footnote{
As described in Section~\ref{sec:method_overview}, 
our work can adapt to federated top models, 
where the second kind of forward aggregations 
are inaccessible. 
Thus, we focus on the federated source layer 
in this work and do not restrict \guest 
from them.}. 
Furthermore, when we analyze the security guarantees 
of our work in Section~\ref{sec:matmul_security} 
and Section~\ref{sec:embed_matmul_security}, 
we will formally prove that these values 
will not reveal the private data of \host. 
For the third kind, since they are independent on $\xa$, 
we do not make strict requirements. 
Instead, we will analyze whether they violate 
Req \mytextcircled{2} under specific scenarios. 
For instance, as we will discuss 
in Section~\ref{sec:matmul} and Section~\ref{sec:embed_matmul}, 
because \guest can derive $\xa\twa$ (or $\ea\twa$) 
if it gets access to $\xb\twb$ (or $\eb\twb$), 
we will restrict \guest from 
obtaining these forward activations 
to ensure the privacy of $\xa$ 
when designing our source layers.

\textbf{Privacy of backward derivatives.\xspace}
Similarly, in order to avoid the leakage to labels, 
\mytextcircled{3}
{\hostul}\ul{\textit{ is prohibited from 
accessing any backward derivatives}}, 
e.g., $\nabla Z, \nabla\ea$, 
since they are originated from 
the ground truth labels and prediction outputs. 
In Section~\ref{sec:expr}, we will empirically show that 
backward derivatives can disclose almost all labels to \host, 
which makes the privacy preservation in vain. 
Therefore, Req \mytextcircled{3}
is vital for designing the VFL algorithms. 

Although it is non-trivial to precisely extract 
the relevance between backward derivatives and features, 
we can define the security of backward derivatives 
according to forward activations. 
In fact, backward derivatives depicts the differences between 
forward activations and the ideal optimum. 
Thus, their informativeness regarding features bind together. 
Motivated as such, we make a symmetric requirement that 
\mytextcircled{4} 
\ul{\textit{if any forward activations are solely dependent  
on the features of }}{\hostul}\ul{\textit{, then 
the corresponding backward derivatives should also 
be kept secret from }}{\guestul}\ul{\textit{ as well}}.

\textbf{Privacy of model weights and model gradients.\xspace}
Although seemingly irrelevant to features or labels, 
the privacy of models is also meaningful. 
For one thing, given the fact that the values of model weights 
depict the feature importance upon the learning tasks, 
it will cause a leakage once a party knows 
the model information of the other party. 
For another, model gradients could cause leakage 
as discussed in recent studies
~\cite{deep_leak_grad,invert_grad}. 
Although these leakages sound to be task-specific, 
we still make a tough requirement that 
\mytextcircled{5} 
\ul{\textit{the model weights and gradients 
must be be hidden from the other party, 
including the signs and magnitudes of all coordinates}}, 
in order to avoid the potential risk of 
privacy leakage from models. 
Furthermore, \mytextcircled{6} 
\ul{\textit{we do not allow }}{\hostul}\ul{\textit{ to 
obtain the model weights or gradients of its own, 
even if the sign or magnitude of each coordinate}}. 
Otherwise, \host would infer the labels by analyzing 
its own feature contributions 
via the signs or magnitudes. 

Obviously, our privacy requirements 
address the possible leakage 
in split learning as discussed in Section~\ref{sec:anatomy}. 
They provide us with a template to 
derive the specific restrictions of each party 
when designing the federated source layers 
in Section~\ref{sec:matmul} and Section~\ref{sec:embed_matmul}.

\section{\texttt{MatMul} Federated Source Layer}
\label{sec:matmul}

Since matrix multiplication is one of the most 
essential arithmetics in ML, 
deriving a safe and accurate 
federated \texttt{MatMul} layer 
is vital to the VFL paradigm.

\subsection{Anatomy of Privacy Requirements}
To achieve promising privacy guarantees, 
we follow the analysis 
in Section~\ref{sec:privacy_matter} 
to derive what kinds of values would cause data leakage 
and make restrictions on them, 
i.e., these values should not be accessible to 
specific parties. 
We summarize the restrictions 
in Table~\ref{tb:matmul_restricts} 
and discuss the reasons below:
\begin{itemize}[leftmargin=*,label=$\triangleright$]
\item 
First, to avoid label leakage, 
\host must not get access to forward activations 
$Z$, $\xa\twa$, $\xb\twb$, 
backward derivatives $\nabla Z$, 
model weights, and model gradients of its own 
$\twa, \nabla\twa$, 
as analyzed in 
Req \mytextcircled{1}\mytextcircled{3}\mytextcircled{6}. 

\item
Second, as analyzed in Req \mytextcircled{2}, 
to avoid \host's feature leakage, 
\guest must not get access to $\xa\twa$, 
since they are merely the linear transformation of $\xa$, 
which would reveal a certain level of feature similarities. 
In fact, as $Z = \xa\twa + \xb\twb$, this also implies 
\guest must not get access to $\xb\twb$ and $\twb$. 

\item 
Finally, as analyzed in Req \mytextcircled{5}, 
we should restrict each party from obtaining 
the model weights or gradients of the other party 
$\tw_\otherdot, \nabla\tw_\otherdot$. 
\end{itemize}
Revisiting these restrictions, 
it is worthy to note that the existing split learning based 
approaches are insecure since the bottom models 
are typically $\twa, \twb$. 
It gives us a lesson that although features 
can be maintained locally, 
we shall not let each party directly 
learn a bottom model to process the feature individually. 
In the following subsections, 
we describe our algorithm protocol for 
the \texttt{MatMul} source layer 
and analyze the security guarantees.

\begin{table}[!t]
\small
\centering
\caption{\small{Summary of the restrictions for \texttt{MatMul}, i.e., the contents that each party must not get access to.}}
\begin{tabular}{|c|c|c|}
\hline
& \textbf{\host} & \textbf{\guest} \\
\hline
\hline
Model & 
\multicolumn{2}{c|}{$\twa, \twb$} \\
\hline
Forward & $Z, \xa\twa, \xb\twb$ & $\xa\twa, \xb\twb$ \\
\hline
Backward & $\nabla Z, \nabla\twa, \nabla\twb$ 
& $\nabla\twa$ \\
\hline
\end{tabular}
\label{tb:matmul_restricts} 
\end{table}

\begin{figure}[!t]
\centering
\includegraphics[width=3.4in]{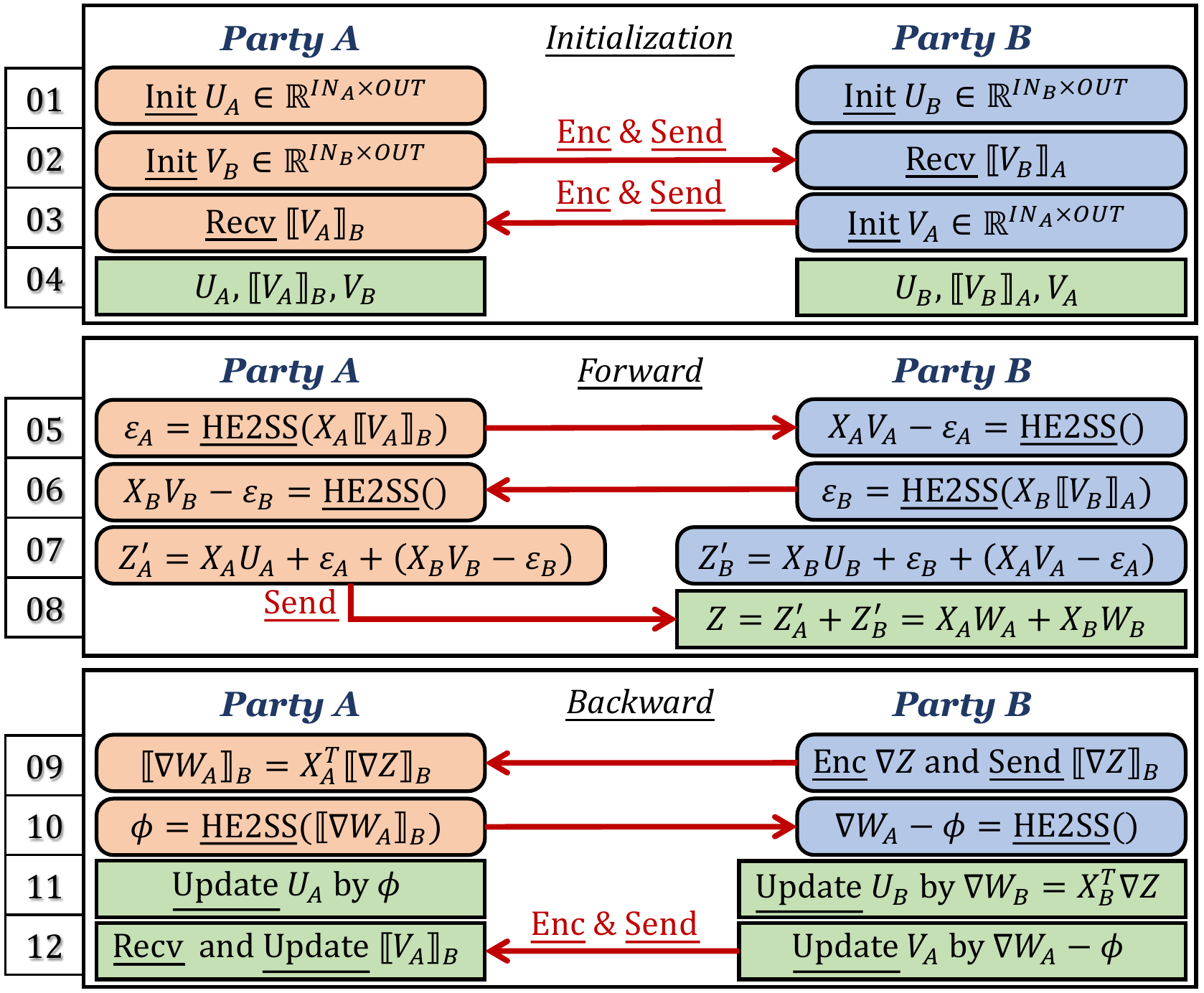}
\caption{{
Our \texttt{MatMul} source layer. 
All cross-party transmission (red arrows) 
are protected by HE or SS.}}
\label{fig:matmul_routine}
\end{figure}

\subsection{Algorithm Protocol}
The protocol of our \texttt{MatMul} 
source layer is presented 
in Figure~\ref{fig:matmul_routine} 
and the details are walked through below. 

\textbf{Initialization.\xspace}
To begin with, since both parties must not 
get access to the model weights, 
we follow the MPC-based methods to 
leverage the SS technique to 
break model weights onto both parties. 
To be specific, we secretly share the model weights by 
$\twdot = \udot + \vdot$, 
where $\udot$ and $\vdot$ are held by 
different parties so that 
none of them knows what 
$\twdot$ exactly is. 
To achieve this goal, on initialization, 
\host initializes $\ua$ for itself 
and $\vb$ for \guest (left hand side of Line 1-2). 
Furthermore, the encrypted version $\enca{\vb}$ 
is sent to \guest for future use (Line 3). 
\guest executes a symmetric routine 
to initialize $\ub, \va$.

\textbf{Forward propagation.\xspace}
Since the model weights are secretly shared, 
the results are also broken into four parts, i.e.,  
$Z=\xa\ua+\xa\va+\xb\ub+\xb\vb$. 
Among them, $\xdot\udot$ can be 
computed by one party alone, 
whilst $\xdot\vdot$ requires decryption. 
To be specific, there are three steps 
in the forward propagation. 
Since the first two steps are symmetric in both parties, 
we only describe those in \host. 
\begin{enumerate}[leftmargin=*]
\item[(1)] (Line 5-6)
\host computes  
$\encb{\xa\va}$ and 
transforms it into an SS variable 
$\langle \eps_A, \xa\va - \eps_A \rangle$; 
receives and decrypts 
the piece of sharing $\xb\vb - \eps_B$ from \guest. 
\item[(2)] (Line 7)
\host computes 
$Z_A^\prime = \xa\ua + \eps_A + (\xb\vb - \eps_B)$. 
\item[(3)] (Line 8)
Finally, 
\guest sums into $Z = Z_A^\prime + Z_B^\prime$. 
\end{enumerate}
It is worthy to note that 
all the random obfuscation values 
($\eps_A, \eps_B$) 
are eliminated to achieve the lossless property, i.e., 
\begin{equation*}
\begin{aligned}
	Z &= 
	(\xa\ua + \eps_A + (\xb\vb - \eps_B)) + 
	(\xb\ub + \eps_B + (\xa\va - \eps_A)) 
	\\
	&= \xa\ua + \xa\va + \xb\ub + \xb\vb
	= \xa\twa + \xb\twb.
\end{aligned}
\end{equation*}

\textbf{Backward propagation.\xspace}
For \guest, the model gradients can be 
computed via 
$\nabla\twb=\xb^T\nabla Z$ 
(right hand side of Line 11). 
For \host, we have to leverage the power of 
HE and SS since $\xa$ and $\nabla Z$ 
are kept secret by different parties. 
To be specific, we first send \host 
the encrypted derivatives $\encb{\nabla Z}$ 
to compute the encrypted model gradients $\encb{\nabla\twa}$, 
which will then be transformed into the SS variable 
$\langle \phi, \nabla\twa - \phi \rangle$ (Line 9-10). 
Furthermore, to prohibit any party 
from obtaining $\nabla\twa$ in plaintext, 
we do not restore the SS variable. 
Instead, we update the secretly shared model weights 
$\ua, \va$ in a complementary way (Line 11-12), i.e., 
\begin{equation*}
	(\ua - \eta \phi) + (\va - \eta (\nabla\twa - \phi))
	\Leftrightarrow \twa - \eta \nabla\twa.
\end{equation*}
Consequently, the algorithm 
accurately updates $\twa$ 
whilst guarantees that 
none of the parties gets access to $\nabla\twa$.

\subsection{Security Analysis}
\label{sec:matmul_security}
Obviously, our protocol 
in Figure~\ref{fig:matmul_routine} 
satisfies all the requirements 
in Table~\ref{tb:matmul_restricts}. 
All the informative values 
such as $\xdot\udot, \xdot\vdot, \nabla Z$ 
are protected by either HE or SS. 
Moreover, the protocol is lossless ---
both the forward outputs and backward updates 
are accurate. 
To be formal, we identify two ideal functionalities 
$\mathcal{F}_{\texttt{MatMulFw}}$ 
and $\mathcal{F}_{\texttt{MatMulBw}}$ 
for the forward and backward propagation, respectively. 
In $\mathcal{F}_{\texttt{MatMulFw}}$, 
each party inputs its own mini-batch data ($\xdot$) and 
the secretly shared and/or encrypted models 
($\udot, \enc{\vdot}_\otherdot$). 
\host outputs nothing whilst \guest outputs $Z$. 
In $\mathcal{F}_{\texttt{MatMulBw}}$, 
each party inputs its own mini-batch data 
and the secretly shared and/or encrypted models, 
and \guest further inputs $\nabla Z$. 
Then, each party outputs the updated models. 
Given these ideal functionalities, 
we provide the security guarantees of 
our \texttt{MatMul} source layer below.
\begin{theorem}
\label{thm:matmul_source}
The protocol of \texttt{MatMul} source layer 
securely realizes the ideal functionalities 
$\mathcal{F}_{\texttt{MatMulFw}}$ 
and $\mathcal{F}_{\texttt{MatMulBw}}$ 
in the presence of a semi-honest adversary 
that can corrupt one party.
\end{theorem}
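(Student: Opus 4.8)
The plan is to follow the ideal-real simulation paradigm directly: for each of the two functionalities and for each choice of corrupted party, I construct a probabilistic polynomial-time simulator $\mathcal{S}$ that, given only the corrupted party's protocol input and prescribed output, produces a transcript computationally indistinguishable from that party's real view. Since the adversary is semi-honest and may corrupt at most one of \host, \guest, it suffices to exhibit two simulators per functionality (one for a corrupted \host, one for a corrupted \guest). Throughout, indistinguishability rests on two pillars: (i) every value revealed in plaintext to a party is additively masked by a fresh uniform share ($\eps_A,\eps_B$ in the forward phase, $\phi$ in the backward phase), so the unmasked portion is uniformly distributed and independent of the hidden operand; and (ii) every ciphertext a party receives under the \emph{other} party's key is protected by the semantic (CPA) security of the additive HE scheme. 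The HE-based steps are assembled through a standard hybrid argument replacing received ciphertexts one at a time with encryptions of fixed plaintexts.

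For $\mathcal{F}_{\texttt{MatMulFw}}$, consider first a corrupted \host. Its only inbound message is the HE2SS ciphertext (under \host's own key) decrypting to the share $\xb\vb-\eps_B$; because $\eps_B$ is a fresh uniform mask held by \guest, this plaintext is uniform and independent of $\xb,\vb$, so $\mathcal{S}$ samples a uniform $r$ and hands \host an encryption of $r$. The subsequent local computation and the fact that \host outputs nothing are then reproduced verbatim. For a corrupted \guest, the view contains the decrypted share $\xa\va-\eps_A$ and the message $Z_A'$, while the output is $Z$. Here $\mathcal{S}$ samples the share uniformly, uses it together with \guest's own $\xb,\ub,\eps_B$ to recompute $Z_B'$, and then \emph{defines} the incoming $Z_A'$ as $Z-Z_B'$. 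This is exactly the relation enforced in the real protocol ($Z=Z_A'+Z_B'$), so the simulated pair has the same joint distribution as the real one; in particular $Z_A'$ leaks nothing beyond the already-revealed output $Z$.

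For $\mathcal{F}_{\texttt{MatMulBw}}$, a corrupted \host receives only $\encb{\nabla Z}$, a ciphertext under \guest's key that \host cannot decrypt. The simulator replaces it with $\encb{0}$; indistinguishability is immediate from semantic security, and the remaining local values---the derived $\encb{\nabla\twa}$, the mask $\phi$ pinned down by the prescribed updated share, and the update $\ua-\eta\phi$ itself---are reconstructed consistently without ever needing the plaintext of $\nabla Z$. For a corrupted \guest, the single inbound message decrypts to $\nabla\twa-\phi$, which is uniform because \host's mask $\phi$ is uniform; moreover this value also determines \guest's output share $\va-\eta(\nabla\twa-\phi)$. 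Thus $\mathcal{S}$ reads the updated $\va$ from the functionality output, back-solves for the unique consistent plaintext, and supplies an encryption of it under \guest's key. The locally computed $\nabla\twb=\xb^{T}\nabla Z$ and the update of $\ub$ involve only \guest's own inputs and need no simulation.

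The main obstacle is the gap between the ciphertexts that actually travel on the wire and the fresh encryptions the simulators produce. Every HE2SS ciphertext is obtained by homomorphic scalar multiplication and subtraction rather than by a fresh call to the encryption routine, so a priori its distribution could depend on the computation path even after the plaintext is fixed. Closing this gap requires either a re-randomization step (adding an encryption of zero) or an appeal to the circuit privacy of the scheme, so that for the decrypting party the ciphertext is distributed as a fresh encryption of its (uniform) plaintext, and for the non-decrypting party it is hidden by CPA security. Once this is granted, the forward and backward simulators compose with the initialization messages (whose cross-party ciphertexts $\enca{\vb},\encb{\va}$ are each undecryptable to the receiver and handled by semantic security), and the modular composition theorem for semi-honest protocols yields that the full \texttt{MatMul} source layer securely realizes both $\mathcal{F}_{\texttt{MatMulFw}}$ and $\mathcal{F}_{\texttt{MatMulBw}}$, completing the proof.
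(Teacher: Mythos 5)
Your proposal is correct and follows essentially the same route as the paper's own proof: the paper likewise splits the theorem into a forward lemma and a backward lemma, simulates each corrupted party's view by replacing masked plaintext shares with uniform values (defining $Z_A^{\prime*} = Z - Z_B^{\prime*}$ for a corrupted \guest exactly as you do) and replacing undecryptable ciphertexts with encryptions of random plaintexts justified by semantic security, with the HE2SS step factored out as a reusable sub-lemma rather than inlined. Two minor remarks: the protocol figure (which you could not see) also has \guest return $\encb{\nabla\twa - \phi}$ to \host in the backward pass so that \host can refresh its stored $\encb{\va}$, a message your CPA argument covers verbatim; and your insistence on re-randomization or circuit privacy for homomorphically derived ciphertexts is a point of rigor the paper's proof elides when it declares such ciphertexts ``obviously'' distributed as fresh encryptions.
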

Since the models are secretly shared and/or encrypted, 
they are secure in essence. 
However, recall that when the top model is non-federated, 
\guest obtains the input $Z$ 
and the output $\nabla Z$ of the top model, 
whose security is not analyzed 
by Theorem~\ref{thm:matmul_source}. 
To this end, we present Theorem~\ref{thm:matmul_top}.
\begin{theorem}
\label{thm:matmul_top}
Given $Z, \nabla Z$ in the \texttt{MatMul} source layer, 
there are infinite possible values for $\xa, \xa\twa$. 
\end{theorem}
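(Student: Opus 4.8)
The plan is to show that the pair $(Z,\nabla Z)$ that \guest legitimately observes does not pin down either $\xa$ or the product $\xa\twa$, by exhibiting a continuum of distinct configurations consistent with the same observation. Throughout, write $\xa\in\mathbb{R}^{n\times IN_A}$, $\twa\in\mathbb{R}^{IN_A\times OUT}$, and $\twb\in\mathbb{R}^{IN_B\times OUT}$, with $n$ the mini-batch size, and recall the single relation $Z=\xa\twa+\xb\twb$ in which \guest knows $\xb$ but, by the secret sharing of the protocol, holds only the share $\va$ of $\twa$ and the share $\ub$ of $\twb$, so it knows neither weight matrix.

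The first step is to discard $\nabla Z$ as uninformative. Since $Z$ is the input to the non-federated top model held by \guest, which uses its own labels to produce $\hat y$ and the loss, and $\nabla Z=\partial f/\partial Z$ is back-propagated through that very top model, $\nabla Z$ is a deterministic function of $Z$ together with data (the top model and the labels) that \guest already possesses. Hence any two candidate triples $(\xa,\twa,\twb)$ yielding the same $Z$ also yield the same $\nabla Z$, so $\nabla Z$ adds no constraint beyond $Z$. It therefore suffices to prove that $Z$ alone admits infinitely many consistent $\xa$ and infinitely many consistent $\xa\twa$.

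For the $\xa$ claim I would fix one consistent triple $(\xa,\twa,\twb)$ and reparametrize: for every nonzero scalar $t$ the substitution $\xa\mapsto t\,\xa$, $\twa\mapsto t^{-1}\twa$ leaves $\xa\twa$, and hence $Z$, unchanged, while the altered host share $\ua=\twa-\va$ is hidden from \guest and so creates no conflict. In any nondegenerate instance $\xa\neq0$, so the matrices $t\,\xa$ are pairwise distinct as $t$ ranges over $\mathbb{R}\setminus\{0\}$, giving infinitely many feasible $\xa$ (more generally $\xa\mapsto\xa M$, $\twa\mapsto M^{-1}\twa$ for invertible $M$). For the $\xa\twa$ claim I would exploit that $\twb$ is unknown to \guest: writing $Z_A:=\xa\twa=Z-\xb\twb$, and noting that \guest holds only $\ub$ while the complementary share $\vb$ is hidden, $\twb$ ranges over all of $\mathbb{R}^{IN_B\times OUT}$. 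Provided $\xb\neq0$ there is a direction $\Delta$ with $\xb\Delta\neq0$, so $\{\,Z-t\,\xb\Delta:t\in\mathbb{R}\,\}$ is an infinite family of pairwise distinct candidates for $\xa\twa$.

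The main obstacle is realizability: each candidate $Z_A$ must actually be attainable as a product $\xa\twa$ of matrices of the fixed shapes, i.e.\ $\mathrm{rank}(Z_A)\le IN_A$. Under the mild dimension condition $IN_A\ge\min(n,OUT)$ — which holds in the regimes of interest, e.g.\ $OUT=1$ for logistic regression — every $Z_A$ is realizable and the factorization exists, closing the argument. When $IN_A$ is smaller I would instead pick the perturbation direction $\Delta$ so that $\xb\Delta$ lies in the column space of the reference $\xa$, keeping $\mathrm{rank}(Z_A)\le IN_A$ along the whole family; verifying that such a $\Delta$ exists (equivalently, that this secant stays inside the variety of realizable products) is the one place that requires care.
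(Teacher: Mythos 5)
Your proposal follows the same outline as the paper's own proof: dismiss $\nabla Z$ because \guest computes it locally from $Z$, its own top model, and its own labels; show the split $Z = \xa\twa + \xb\twb$ is underdetermined from \guest's view to get infinitely many candidates for $\xa\twa$; then use the reparametrization $\xa \mapsto \xa M^{-1}$, $\twa \mapsto M\twa$ (you use its scalar special case plus the general form) to turn one candidate into infinitely many for $\xa$. The difference is in the middle step, and there you are strictly more careful than the paper: the paper merely counts $BS \times OUT$ knowns against $2 \times BS \times OUT$ unknowns in $Z = Z_A + Z_B$ and concludes, silently ignoring exactly the constraints you isolate --- that a candidate $Z_A$ must factor through inner dimension $IN_A$, and that $Z - Z_A$ must lie in the range of $\twb' \mapsto \xb\twb'$ for the $\xb$ that \guest knows. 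Your explicit family $\{Z - t\,\xb\Delta\}$ together with the rank discussion addresses a realizability issue that the paper's argument never confronts.

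The one place your proof is still open --- choosing $\Delta$ when $IN_A < \min(n, OUT)$, where your sufficient condition that $\xb\Delta$ lie in $\mathrm{col}(\xa)$ can genuinely fail (e.g.\ when $\mathrm{col}(\xa) \cap \mathrm{col}(\xb) = \{0\}$ in $\mathbb{R}^n$) --- can be closed by perturbing on the row side rather than the column side. Assume $\xb \neq 0$ and write $A_0 = \xa\twa$. If $A_0 \neq 0$, pick $\beta$ with $b = \xb\beta \neq 0$ and $z$ with $c^{T} = z^{T} A_0 \neq 0$, and set
\begin{equation*}
Z_A(t) \;=\; A_0 + t\, b c^{T} \;=\; \bigl((I + t\, b z^{T})\,\xa\bigr)\,\twa .
\end{equation*}
Every member factors through inner dimension $IN_A$ (same $\twa$, perturbed features $(I + t\,b z^{T})\xa$), the complement $Z - Z_A(t) = \xb(\twb - t\,\beta c^{T})$ remains realizable on \guest's side, the hidden shares $\ua$, $\vb$ absorb the changes, and the $Z_A(t)$ are pairwise distinct since $bc^{T} \neq 0$. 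If $A_0 = 0$, any family $t\,bc^{T}$ with $b \in \mathrm{col}(\xb)\setminus\{0\}$ and $c \neq 0$ works, since $\mathrm{rank}(t\,bc^{T}) \le 1 \le IN_A$. Hence the conclusion holds whenever $\xb \neq 0$, with no dimension condition at all; this same construction also repairs the corresponding unacknowledged hole in the paper's counting argument.
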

Due to the space constraint, we defer the proofs to 
our appendix. 
Furthermore, we also analyze the security guarantees 
when the source layer 
is followed by a federated top model in our appendix.

\section{\texttt{Embed-MatMul} Federated Source Layer}
\label{sec:embed_matmul}

To handle categorical features, 
a federated source layer that supports 
embedding lookup is desired. 
In this section, we introduce 
the \texttt{Embed-MatMul} source layer.

\subsection{Anatomy of Privacy Requirements} 

\begin{table}[!t]
\small
\centering
\caption{\small{Summary of the restrictions for \texttt{Embed-MatMul}, i.e., the contents that each party must not get access to.}}
\begin{tabular}{|c|c|c|}
\hline
& \textbf{\host} & \textbf{\guest} \\
\hline
\hline
Model & 
\multicolumn{2}{c|}{$\twa, \twb, \qa, \qb$} \\
\hline
Forward 
& $Z, \ea, \eb, \ea\twa, \eb\twb$ 
& $\ea, \eb, \ea\twa, \eb\twb$ \\
\hline
Backward 
& \specialcell{$\nabla Z, \nabla\ea, \nabla\eb$\\$\nabla\twa, \nabla\twb, \nabla\qa, \nabla\qb$} 
& \specialcell{$\nabla\ea, \nabla\eb$\\$\nabla\twa, \nabla\qa, \nabla\qb$}  \\
\hline
\end{tabular}
\label{tb:embed_restricts} 
\end{table}

\begin{figure}[!t]
\centering
\includegraphics[width=3.4in]{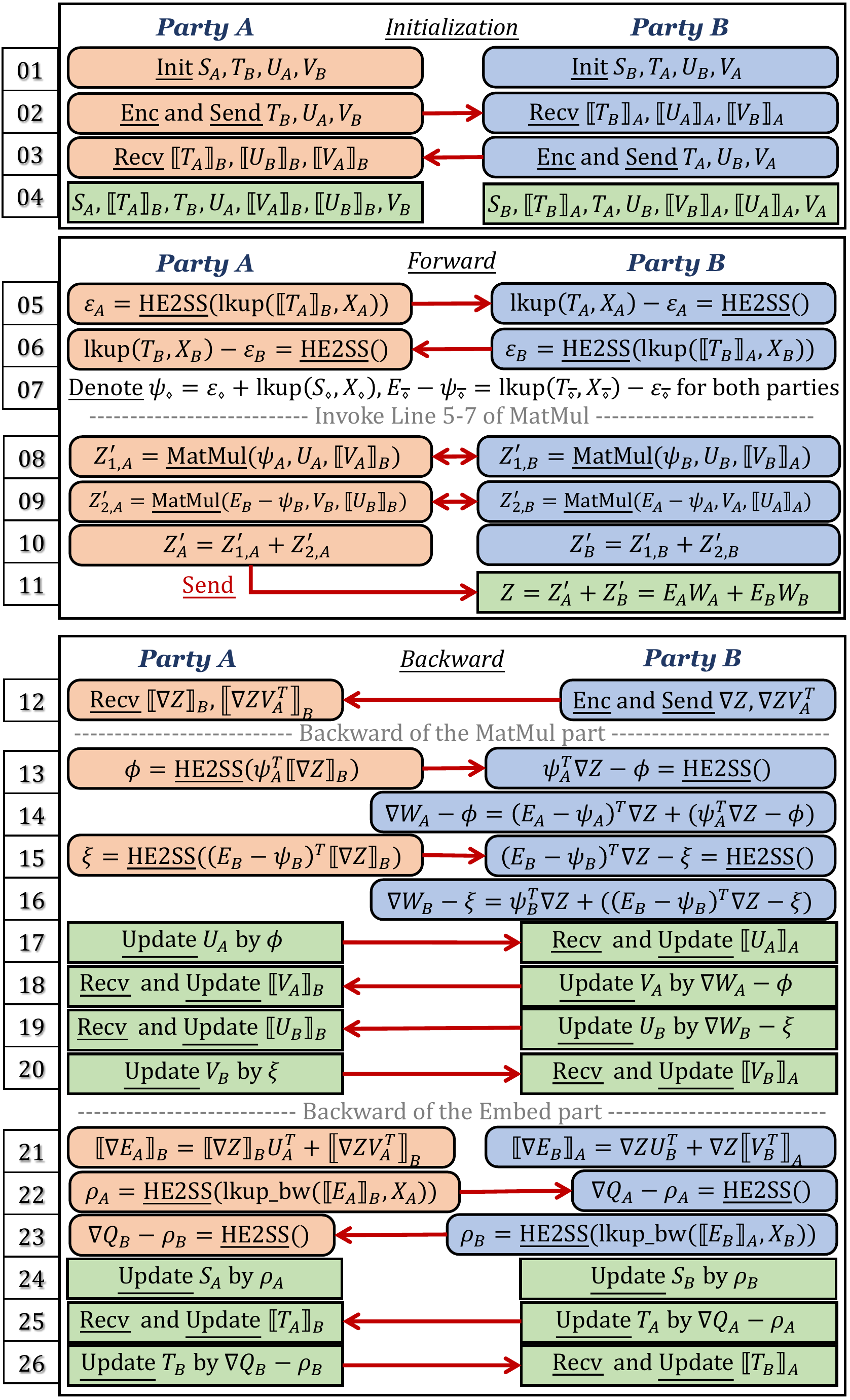}
\caption{{Our \texttt{Embed-MatMul} source layer. 
All cross-party transmission (red arrows) 
are protected by HE or SS.}}
\label{fig:embed_matmul_routine}
\end{figure}

Similarly, we first derive the restrictions 
according to our analysis in Section~\ref{sec:privacy_matter}. 
Table~\ref{tb:embed_restricts} summarizes 
these restrictions and 
the corresponding reasons are discussed below.
\begin{itemize}[leftmargin=*,label=$\triangleright$]
\item
First, the restrictions listed 
in Table~\ref{tb:matmul_restricts} 
should also be applied here. 

\item
Second, in order to avoid label leakage, 
\host must not get access to all forward activations 
($Z, \ea, \eb, \ea\twa, \eb\twb$), 
all backward derivatives 
($\nabla Z, \nabla\ea, \nabla\eb$), 
and the model weights and model gradients of its own 
($\twa,\qa,\nabla\twa,\nabla\qa$). 

\item	
Third, to avoid feature leakage, 
\guest must not get access to $\ea, \ea\twa$, 
since they are originated from $\xa$. 
As $Z = \ea\twa + \eb\twb$, we should also 
prohibit \guest from obtaining $\eb\twb$. 

\item 
Finally, it is worthy to note that we further 
prohibit \guest from obtaining the values 
related to its own embedding table, i.e., 
$\qb, \eb, \nabla\eb$. 
The reason is that since $\nabla\eb = \nabla Z \twb^T$, 
it is possible for \guest to infer $\twb$ 
once it gets access to those values. 
Hence, we make these strong restrictions 
to ensure the security. 
\end{itemize}
According to the above discussion, 
both parties are not allowed to 
obtain their own model weights, i.e., $\qdot, \twdot$. 
Again, this is contradictory to the design of bottom models 
in split learning. 
Consequently, we desiderate a new protocol 
for our \texttt{Embed-MatMul} source layer 
with promising security guarantees.

\subsection{Algorithm Protocol} 
Considering that both parties must not 
get access to the embedding tables, 
we apply SS to the embedding tables similarly, i.e.,  
$\qdot = \sdot + \tdot$, 
where $\sdot$ and $\tdot$ are managed by 
different parties. 
As a result, 
the embedding entires are also broken into two parts, i.e.,  
$\edot=\lk(\qdot,\xdot)=\lk(\sdot,\xdot)+\lk(\tdot,\xdot)$.
Therefore, each party cannot obtain $\edot$ alone 
since it has zero knowledge on $\tdot$. 
We present 
our \texttt{Embed-Matmul} federated source layer 
in Figure~\ref{fig:embed_matmul_routine} 
and describe the sketch of routines below.

\textbf{Initialization.\xspace}
Similar to the \texttt{MatMul} source layer, 
\host prepares $\sa, \ua$ for itself 
and $\tb, \vb$ for \guest, 
Then the encrypted SS pieces 
$\enca{\tb}$, $\enca{\ua}$, $\enca{\vb}$ 
are sent to \guest for future use. 
\guest executes a symmetric routine. 

\textbf{Forward propagation.\xspace}
We divide the forward process into two stages, 
for \texttt{Embed} and \texttt{MatMul}, respectively. 

The first stage retrieves 
the secretly shared embedding entries. 
Since embedding lookup requires 
the exact values of $\xdot$, 
we perform the lookup operation 
over the encrypted table $\enc{\tdot}_\otherdot$ 
in each party and convert it into an SS variable (Line 5-6). 
Finally, by combining with the lookup results on 
the rest piece $\sdot$, we successfully 
break the embedding entries in an SS manner, 
i.e., $\langle \psi_\mydot, \edot - \psi_\mydot \rangle$ 
(Line 7). 

The second stage 
performs two matrix multiplication, i.e., 
\begin{equation*}
\begin{aligned}
	\textup{(Line 8)}\;
	Z_{1,A}^\prime + Z_{1,B}^\prime &= \psi_A(\ua + \va) + \psi_B(\ub + \vb), \\
	\textup{(Line 9)}\;
	Z_{2,A}^\prime + Z_{2,B}^\prime &= (\eb - \psi_B)(\vb + \ub) + (\ea - \psi_A)(\va + \ua), 
\end{aligned}
\end{equation*}
using the same routine in the forward propagation of 
our \texttt{MatMul} layer (Figure~\ref{fig:matmul_routine}). 
Finally, the forward outputs 
can be computed via 
$Z = 
Z_{1,A}^\prime + Z_{1,B}^\prime + 
Z_{2,A}^\prime + Z_{2,B}^\prime = 
\ea\twa + \eb\twb$ 
(Line 10-11).

\textbf{Backward propagation.\xspace}
Next, we describe the backward process, 
which is also made up of two stages. 

The first stage takes in charge of 
the backward process of \texttt{MatMul}, 
which updates model weights $\twdot$ by 
$\nabla\twdot = \edot^T \nabla Z$. 
Similar to the backward process of 
the \texttt{MatMul} source layer, 
\guest encrypts $\nabla Z$ 
to protect the labels (Line 12). 
Upon receiving $\encb{\nabla Z}$, 
\host computes $\encb{\psi_A^T \nabla Z}$ 
via homomorphic arithmetics, 
which is then transformed into an SS variable, i.e., 
$\langle \phi, \psi_A^T \nabla Z - \phi \rangle$ 
(Line 13). 
By computing $(\ea - \psi_A)^T \nabla Z + 
(\psi_A^T \nabla Z - \phi) = 
\nabla\twa - \phi$ in \guest (Line 14), 
we secretly share $\nabla\twa$ 
onto both parties, 
i.e., $\langle \phi, \nabla\twa - \phi \rangle$. 
A similar routine works for $\nabla\twb$ as well 
(Line 15-16). 
Consequently, none of the parties gets access to 
$\nabla\twa$ or $\nabla\twb$, 
which obeys Table~\ref{tb:embed_restricts} 
so that the model weights can be updated accurately 
without any information leakage.

The second stage is for the embedding tables. 
Since the backward operation $\dlk(\cdot, \cdot)$ 
requires to know the exact values of $\xdot$, 
we determine to perform it over the encrypted 
backward derivatives $\enc{\edot}_\otherdot$. 
Therefore, 
both parties first compute 
$\encb{\nabla\ea}, \enca{\nabla\eb}$ 
via homomorphic arithmetics, respectively (Line 21), 
and then performs the backward operation 
$\enc{\nabla\qdot}_\otherdot = 
\dlk(\enc{\edot}_\otherdot, \xdot)$, 
which is eventually transformed into SS variables 
i.e., $\langle \rho_\mydot, \nabla\qdot - \rho_\mydot \rangle$ 
(Line 22-23). 
Finally, the secretly shared embedding tables 
are updated in the SS manner (Line 24-26).

\subsection{Security Analysis}
\label{sec:embed_matmul_security}
Indisputably, our protocol 
in Figure~\ref{fig:embed_matmul_routine} 
accomplishes all the privacy requirements 
in Table~\ref{tb:embed_restricts} 
and the results are lossless as well. 
To be formal, we identify two ideal functionalities 
$\mathcal{F}_{\texttt{EmbedMatMulFw}}$ 
and $\mathcal{F}_{\texttt{EmbedMatMulBw}}$ 
for the forward and backward propagation, respectively. 
In $\mathcal{F}_{\texttt{EmbedMatMulFw}}$, 
each party inputs the private mini-batch data ($\xdot$) 
and the secretly shared and/or encrypted models 
(e.g., $\sdot, \enc{\vdot}_\otherdot$). 
\host outputs nothing whilst \guest outputs $Z$. 
In $\mathcal{F}_{\texttt{EmbedMatMulFw}}$, 
each party inputs the private mini-batch data 
and the secretly shared and/or encrypted models, 
and \guest further inputs $\nabla Z$. 
Then, each party outputs the updated models. 
Then, We present the security guarantees of 
our \texttt{Embed-MatMul} source layer 
in Theorem~\ref{thm:embed_matmul_source}.
\begin{theorem}
\label{thm:embed_matmul_source}
The protocol of \texttt{Embed-MatMul} source layer 
securely realizes the ideal functionalities 
$\mathcal{F}_{\texttt{EmbedMatMulFw}}$ 
and $\mathcal{F}_{\texttt{EmbedMatMulBw}}$ 
in the presence of a semi-honest adversary 
that can corrupt one party.
\end{theorem}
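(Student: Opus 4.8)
The plan is to prove security in the ideal-real simulation paradigm of the Definition above: since the adversary is semi-honest and corrupts at most one party, it suffices to exhibit, for each of the two corruption cases (\host corrupted and \guest corrupted) and for each functionality ($\mathcal{F}_{\texttt{EmbedMatMulFw}}$ and $\mathcal{F}_{\texttt{EmbedMatMulBw}}$), a probabilistic polynomial-time simulator $\mathcal{S}$ that, given only the corrupted party's input and its prescribed output, generates a transcript computationally indistinguishable from that party's real view. I would exploit the modular structure of the protocol: its second forward stage and first backward stage invoke exactly the \texttt{MatMul} routines of Figure~\ref{fig:matmul_routine}, so by Theorem~\ref{thm:matmul_source} and sequential composition of semi-honest-secure protocols those sub-steps already admit simulators. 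Consequently the real work is to simulate the two embedding-specific sub-protocols --- the encrypted forward lookup producing the shares $\ss{\psi_\mydot, \edot - \psi_\mydot}$ (Lines 5--7) and the encrypted backward lookup producing $\ss{\rho_\mydot, \nabla\qdot - \rho_\mydot}$ (Lines 21--23) --- together with the initialization messages.

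First I would enumerate every cross-party message and sort it into one of two types. Type (a) are Paillier ciphertexts encrypted under the other party's key, e.g., the initialization pieces $\enca{\tb}, \enca{\ua}, \enca{\vb}$ and the lookup inputs $\enc{\tdot}_\otherdot$, $\enc{\edot}_\otherdot$; the corrupted party never holds the matching secret key, so $\mathcal{S}$ simulates each by encrypting a dummy value (say $0$) under the appropriate public key. Type (b) are plaintexts received as the output of a \textup{HE2SS} call (Algorithm~\ref{alg:he2ss}), i.e., values of the form $v-\phi$ where the additive mask $\phi$ is freshly sampled uniformly over the Paillier plaintext domain by the sender and is unknown to the receiver; each such value is therefore uniform, and $\mathcal{S}$ simulates it by a fresh uniform sample. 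The one subtlety is output-consistency: in $\mathcal{F}_{\texttt{EmbedMatMulFw}}$ \guest must output $Z$, and in $\mathcal{F}_{\texttt{EmbedMatMulBw}}$ each party outputs its local shares of the updated models $\twdot, \qdot$. Because every such output is a deterministic function of the received masked values and the party's own shares, and the masks enter linearly, $\mathcal{S}$ samples all but one masked value uniformly and solves the single remaining linear constraint so that the reconstructed output matches the value handed down by the ideal functionality; this reproduces exactly the real conditional distribution.

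The indistinguishability argument is then a hybrid over the Type (a) ciphertexts: replacing one real ciphertext at a time by a dummy encryption changes the view only negligibly by the IND-CPA security of Paillier, and any distinguisher yields a semantic-security adversary of comparable advantage. The Type (b) substitutions are perfect (statistically identical), since uniform additive masking over the plaintext domain is a perfectly hiding one-time pad, so they incur no loss. Composing across the at most polynomially many messages gives overall indistinguishability up to a negligible term in the security parameter $\kappa$, as required.

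The main obstacle I anticipate is arguing that the homomorphic embedding operations leak nothing about the private indices $\xdot$. Unlike a plaintext lookup, the forward $\lk(\enc{\tdot}_\otherdot,\xdot)$ and backward $\dlk(\enc{\edot}_\otherdot,\xdot)$ are carried out locally by the \emph{owner} of $\xdot$ on ciphertexts it cannot decrypt, so I must verify two things: that the memory access pattern of the (de)lookup is never observed by the other party --- making it oblivious from the cross-party viewpoint --- and that the resulting ciphertext, once converted through \textup{HE2SS} and blinded by the uniform $\psi_\mydot$ (resp. $\rho_\mydot$), carries no residual correlation with $\xdot$. Establishing that these masks fully decouple the share held by the opposite party from $\xdot$, and that they remain independent across the forward and backward passes, is the delicate step; once it is in place, the message classification above closes the proof for both corruption cases.
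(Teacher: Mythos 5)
Your proposal is correct and takes essentially the same route as the paper: the paper also proves the theorem via per-party simulators in the ideal-real paradigm (split into forward and backward lemmas), simulating every \textup{HE2SS} output as a uniformly masked value (perfect, one-time-pad style), simulating every cross-party Paillier ciphertext as an encryption of a random/dummy plaintext (computationally indistinguishable to the party lacking the secret key), and enforcing output consistency by solving the one remaining linear constraint, exactly as you describe --- including the observation that the embedding lookups are purely local so only their masked results cross the party boundary. The sole organizational difference is that you invoke Theorem~\ref{thm:matmul_source} plus sequential composition to dispose of the \texttt{MatMul} sub-stages, whereas the paper re-enumerates those messages inside its \texttt{Embed-MatMul} lemmas (referencing the \texttt{MatMul} lemma's proof); this is a presentational, not substantive, difference.
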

Again, we further analyze the security of 
$Z, \nabla Z$ since they are released to \guest 
when the top model is non-federated, 
which is given in Theorem~\ref{thm:embed_matmul_top}.
\begin{theorem}
\label{thm:embed_matmul_top}
Given $Z, \nabla Z$ in the \texttt{Embed-MatMul} source layer, 
there are infinite possible values 
for $\xa, \qa, \twa, \ea, \ea\twa$. 
\end{theorem}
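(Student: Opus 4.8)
The plan is to reduce Theorem~\ref{thm:embed_matmul_top} to the under-determination of the single forward identity $Z=\ea\twa+\eb\twb$, and then to reuse the factorization argument behind Theorem~\ref{thm:matmul_top}. First I would dispose of $\nabla Z$: because the top model and the labels $y$ belong entirely to \guest, the derivative $\nabla Z=\partial f/\partial Z$ is a deterministic function of $Z$ (through $\hat y$ and $f$), so any assignment of \host's quantities consistent with a given $Z$ is automatically consistent with the observed $\nabla Z$. Hence it suffices to exhibit an infinite family of tuples $(\xa,\qa,\twa,\ea,\ea\twa)$ that all reproduce the same $Z$ while matching everything \guest legitimately holds.

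Next I would fix \guest's view. Under the secret sharing $\qdot=\sdot+\tdot$ and $\twdot=\udot+\vdot$, \guest knows only its own features $\xb$, the shares $\ta,\sb,\va,\ub$, and the ciphertexts $\enca{\tb},\enca{\ua},\enca{\vb}$ that it cannot decrypt; thus \host's complementary shares $\sa,\tb,\ua,\vb$ are free from \guest's viewpoint. The core step is to show that the summand $\ea\twa$ is not determined by $Z$. Writing $\ea\twa=Z-\eb\twb=Z-\eb(\ub+\vb)$ and perturbing the hidden share $\vb\mapsto\vb+\delta$, the partner term moves by $\eb\delta$; whenever $\eb\neq 0$ there is a direction with $\eb\delta\neq 0$, so scaling $\delta$ forces $\ea\twa=Z-\eb\twb$ through infinitely many values while $Z$ stays fixed. (The same conclusion also follows from the freedom in $\tb$, which already prevents \guest from knowing $\eb=\lk(\sb,\xb)+\lk(\tb,\xb)$ exactly.)

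I would then propagate this ambiguity to the remaining quantities by elementary factorization and by the slack inherent in the embedding lookup. For any admissible target $M:=\ea\twa$, the factor pair $(\ea,\twa)$ is recovered only up to reparametrization $(\ea A,\,A^{-1}\twa)$ for invertible $A$ (in particular $(c\,\ea,\,c^{-1}\twa)$ with $c\neq 0$), yielding infinitely many $\ea$ and $\twa$; consistency with the sharing is automatic, since \host's hidden shares absorb any choice via $\ua=\twa-\va$ and $\sa=\qa-\ta$. For each resulting $\ea$, infinitely many tables $\qa$ realize it, because every row of $\qa$ not selected by $\xa$ is entirely unconstrained; moreover, for any alternative index tensor $\xa'$ one can build a $\qa'$ with $\lk(\qa',\xa')=\ea$, so $\xa$ itself is not pinned down either. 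Chaining these choices produces an infinite family of tuples $(\xa,\qa,\twa,\ea,\ea\twa)$ all consistent with the observed $(Z,\nabla Z)$, which is exactly the claim. This complements Theorem~\ref{thm:embed_matmul_source}: the simulation argument there secures the protocol transcript, whereas here I establish that the single output $Z$ legitimately revealed to \guest still leaves \host's inputs information-theoretically undetermined.

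I expect the main obstacle to be the discrete, selection nature of the lookup when arguing about $\xa$ and $\qa$ jointly: one must ensure the constructed table $\qa'$ and indices $\xa'$ reproduce the chosen $\ea$ without disturbing \guest's known share $\ta$ of $\qa$ or the fixed value of $Z$, and one must record the mild nondegeneracy conditions (for instance $\eb\neq 0$ and enough embedding rows and columns for the factorizations to admit solutions) under which each step genuinely yields infinitely many alternatives. The purely algebraic portions mirror Theorem~\ref{thm:matmul_top} and become routine once the free parameters $\sa,\tb,\ua,\vb$ and $\xa$ have been identified.
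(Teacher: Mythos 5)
Your proposal is correct, and its overall skeleton coincides with the paper's proof: establish that the summand $\ea\twa$ is not pinned down by $Z$, propagate the ambiguity to $(\ea,\twa)$ via an invertible reparametrization $(\ea M^{-1}, M\twa)$, then to $(\xa,\qa)$ via the slack in the lookup operation, and finally observe that $\nabla Z$ is computed by \guest itself from $Z$, its top model, and its labels, hence adds nothing. The one step you argue by a genuinely different mechanism is the first. The paper uses a pure counting argument: the equation $Z = Z_A + Z_B$ with $Z_A = \ea\twa$, $Z_B = \eb\twb$ has $BS \times OUT$ known values and $2 \times BS \times OUT$ unknowns, hence infinitely many solutions. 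You instead exhibit explicit witnesses by perturbing the shares $\vb$ (or $\tb$) that \host holds in plaintext and \guest never sees, which drives $\eb\twb$, and therefore the compensating value $\ea\twa = Z - \eb\twb$, through infinitely many values. Your variant buys genuine rigor: it shows the alternative tuples are consistent with \guest's \emph{entire} plaintext view ($\xb, \sb, \ub, \ta, \va$), not merely with the single equation for $Z$, which the paper's counting silently ignores --- for instance, \guest's knowledge of $\xb$ ties together the rows of $\eb\twb$ for instances sharing identical categorical values, a constraint your perturbation respects automatically but free counting of $Z_B$ does not. The price is the bookkeeping you flag yourself: nondegeneracy conditions (handling $\eb = 0$ by also perturbing $\tb$, and dimension conditions so the compensating $\ea\twa$ is realizable as $\lk(\sa+\ta,\xa)(\ua+\va)$), none of which the paper's looser argument needs to mention. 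Both routes deliver the theorem as stated; yours is closer to a bona fide information-theoretic indistinguishability claim, the paper's is shorter but correspondingly coarser.
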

The proofs are deferred to the appendix 
due to the space constraint, 
where we further analyze the security 
when a federated top model follows 
our \texttt{Embed-MatMul} source layer.

\section{Implementation and Evaluation}
\label{sec:expr}

\begin{figure}[!t]
\centering
\includegraphics[width=3in]{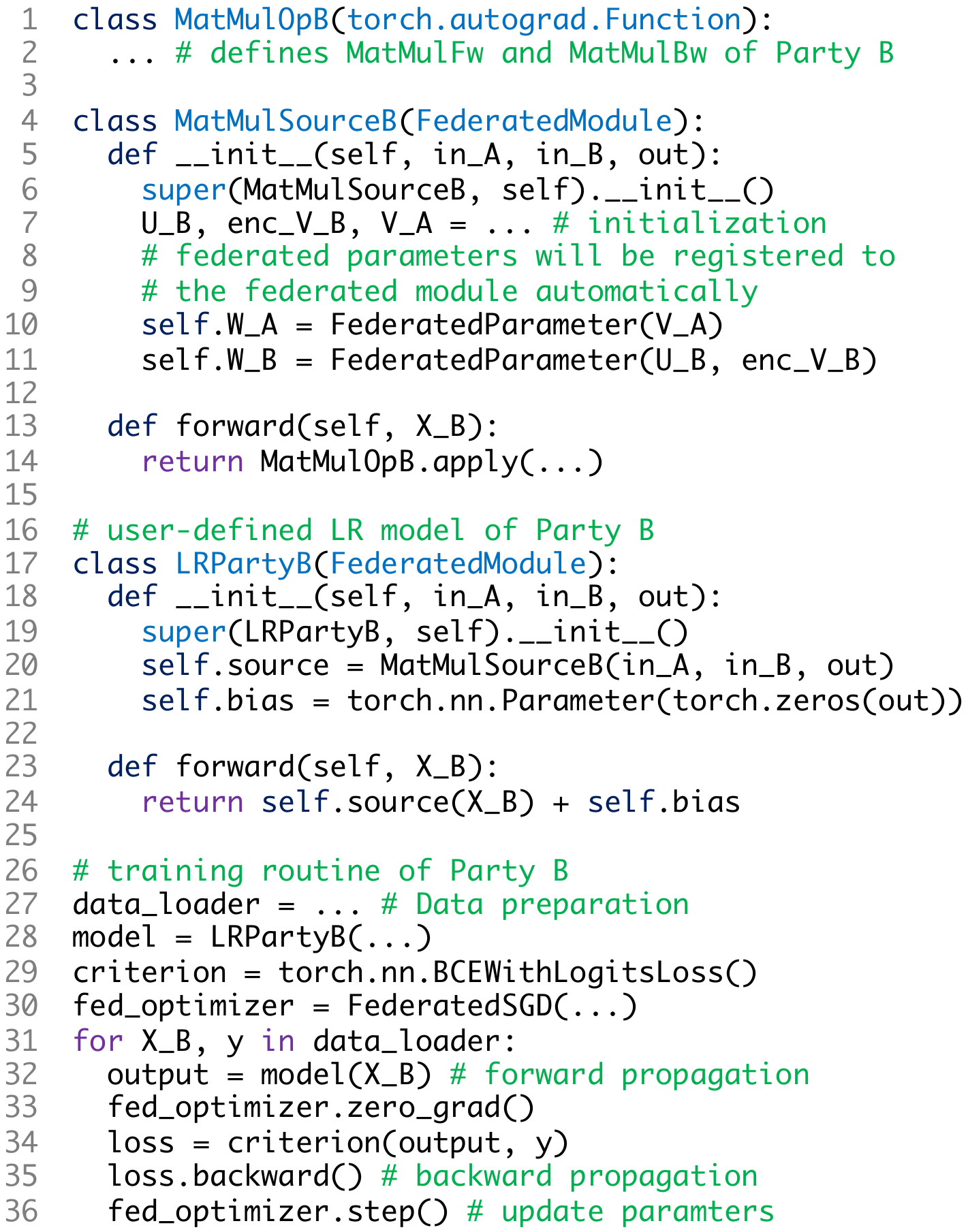}
\caption{{
Code snippets of the LR algorithm in \alg. 
}}
\label{fig:api}
\end{figure}

\subsection{Implementation and Experimental Setup}
\mysubsubsection{{Implementation}}
We implement \alg on top of \texttt{PyTorch} 
and the \texttt{GMP} library 
to be efficient and user-friendly. 

\textbf{Cryptography Acceleration.\xspace}
We employ the \texttt{GMP} library to 
develop an efficient library for 
the Paillier cryptosystem 
and support high-performance parallel processing 
with \texttt{OpenMP}. 
To vectorize the homomorphic operations, 
we introduce an abstraction called \texttt{CryptoTensor}, 
which supports fruitful primitives for 
both dense and sparse computation of encrypted tensors 
such as matrix multiplication and scatter addition. 

\textbf{\texttt{PyTorch} Integration.\xspace} 
As shown in Figure~\ref{fig:api}, 
\alg follows the API-style of \texttt{PyTorch} 
to be user-friendly. 
The forward and backward procedures 
of federated source layers 
are implemented as \texttt{autograd} operations 
to achieve automatic differentiation. 
All models are derived from \texttt{FederatedModule}, 
a wrapper for \texttt{PyTorch Module} that 
automatically registers the \texttt{FederatedParameter} 
(e.g., $\udot, \vdot$). 
The training routines are the same as 
non-federated learning, except that 
we define a \texttt{FederatedOptimizer} to 
update the secretly shared model weights. 
\alg has been integrated into the productive pipelines 
of our industrial partner 
and deployed to many real-world collaborative applications.

\mysubsubsection{{Experimental Setup}}
All experiments are conducted on two servers 
equipped with 96 cores, 
375GB of RAM, and 10Gbps of network bandwidth. 
For each experiment, we make five runs and 
report the mean and standard deviation. 

\textbf{Models.\xspace}
Since features should be partitioned under the VFL setting, 
whilst an image or a sentence would not be split 
and owned by different parties, 
the inputs of VFL models are usually tabular datasets 
rather than image or text datasets. 
Thus, we do not conduct experiments on CV or NLP models 
in this work. 
To be specific, We conduct experiments on 
five widely-used models, 
which are LR, multinomial LR (MLR), 
MLP, WDL~\cite{wdl}, and DLRM~\cite{dlrm}.

\textbf{Datasets.\xspace}
As listed in Table~\ref{tb:datasets}, 
we use six public datasets{\footnote{\url{
https://www.csie.ntu.edu.tw/~cjlin/libsvmtools/datasets}}} 
and one industrial advertising dataset. 
We evenly divide the features for the two parties. 
Since we focus on the algorithm protocols, 
we assume datasets are already aligned 
by the private set intersection (PSI) technique
~\cite{psi1,psi2,psi3}, 
which is a general data preprocessing 
in VFL~\cite{fl_concepts}. 
In Section~\ref{sec:related}, we will discuss how to 
extend our work when datasets cannot be aligned by PSI.

\textbf{Protocols.\xspace}
As we will show in Section~\ref{sec:expr_ability}, 
\alg achieves comparable model performance as 
non-federated learning on collocated datasets. 
Thus, we use the same set of hyper-parameters 
with learning rate as 0.05, batch size as 128, 
and embedding dimension as 8. 
For each model, we train for 10 epochs with momentum SGD, 
where the momentum value is set as 0.9.

\begin{table}[!t]
\small
\centering
\caption{{Description of datasets used in our experiments.}}
\begin{tabular}{|c|c|c|c|}
\hline
\textbf{Dataset} 
& \specialcell{\textbf{\#Instances}\\\textbf{(train/test)}}
& \specialcell{\textbf{\#Features \&}\\\textbf{Avg \#nnz}}
& \textbf{\#Classes} \\
\hline
\hline
\textit{a9a} & 32K/16K & 123, 14 & 2 \\
\hline
\textit{w8a} & 50K/15K & 300, 12 & 2 \\
\hline
\textit{connect-4} & 50K/17K & 126, 42 & 3 \\
\hline
\textit{news20} & 16K/4K & 62K, 80 & 20 \\
\hline 
\textit{higgs} & 8M/3M & 28, 28 & 2 \\
\hline
\textit{avazu-app} & 13M/2M & 1M, 14 & 2 \\
\hline
\textit{industry} & 100M/8M & 10M, 12 & 2 \\
\hline
\end{tabular}
\label{tb:datasets} 
\end{table}

\subsection{Privacy Preservation}
We first conduct experiments to empirically evaluate 
the robustness of privacy preservation of our work. 
Since the MPC-based methods 
can achieve provable security guarantees, 
we only compare with the split learning based methods 
in this section.

\begin{figure}[!t]
\centering
\includegraphics[width=3.3in]{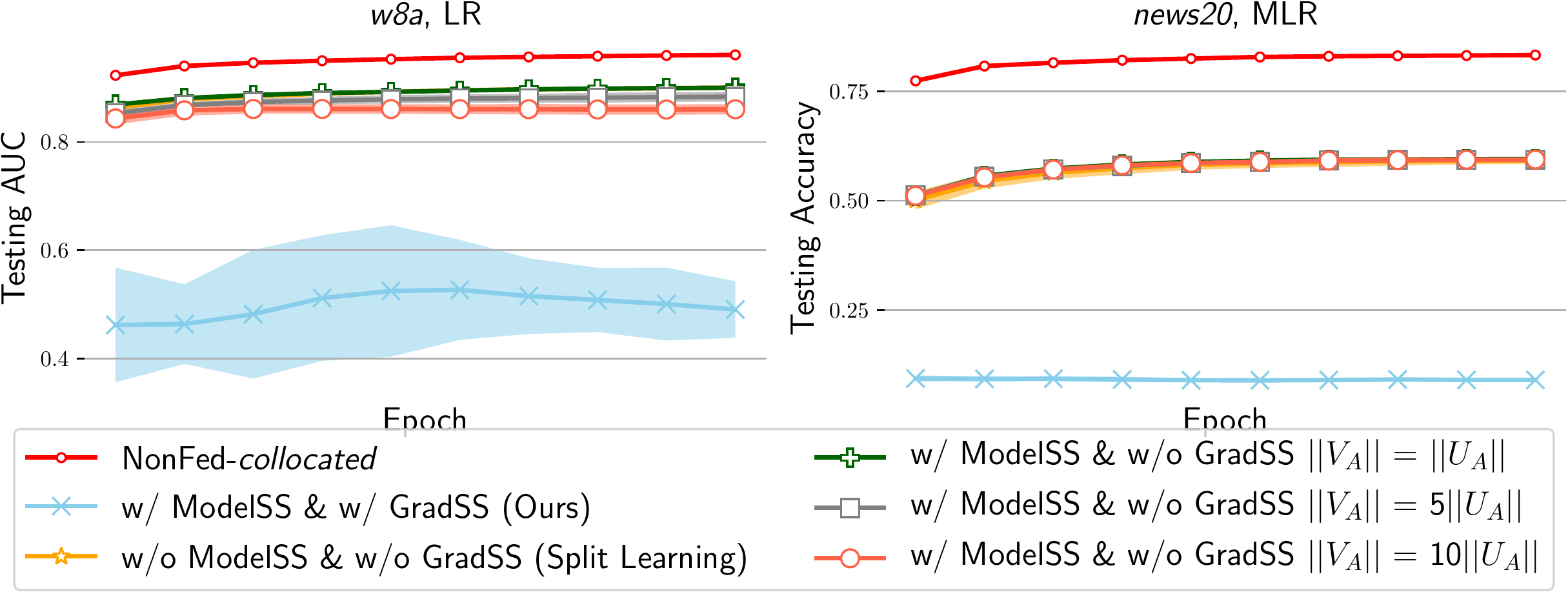}
\caption{{
Predicting labels with $\xa\twa$ or $\xa\ua$. 
We run experiments with or without the SS technique 
on model weights and gradients 
(denoted as ModelSS and GradSS). 
}}
\label{fig:adv1}
\end{figure}

\begin{figure}[!t]
\centering
\includegraphics[width=3.3in]{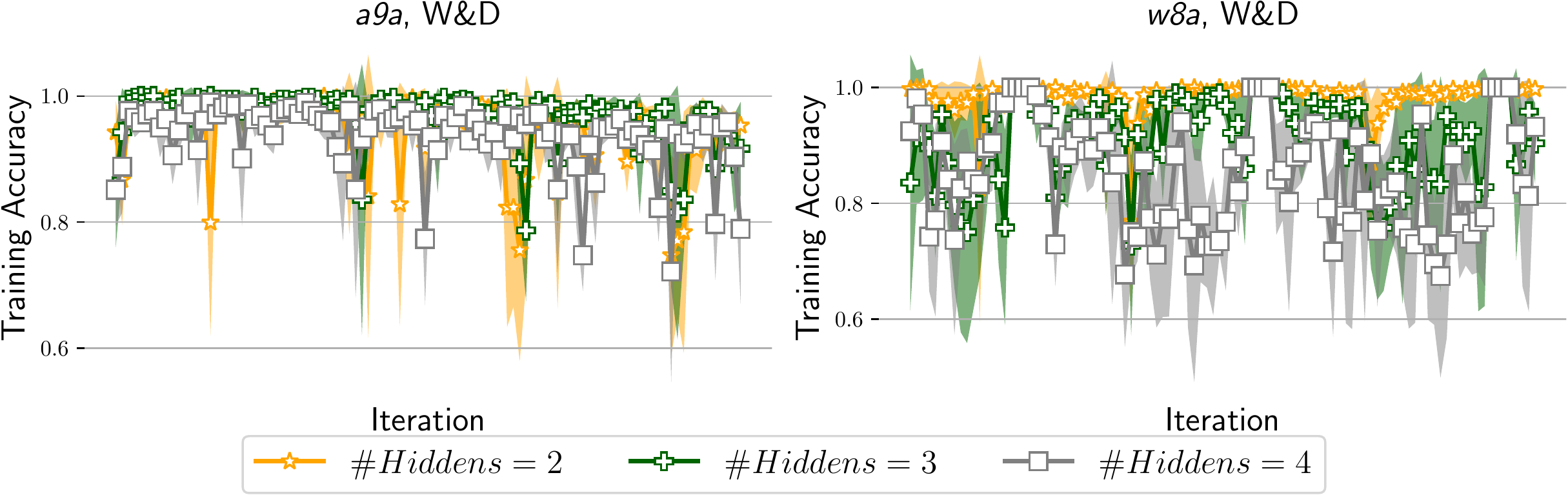}
\caption{{
Predicting labels with $\nabla\ea$. 
We vary the number of hidden layers 
after the embedding table.
}}
\label{fig:adv2}
\end{figure}

\begin{figure}[!t]
\centering
\includegraphics[width=3.3in]{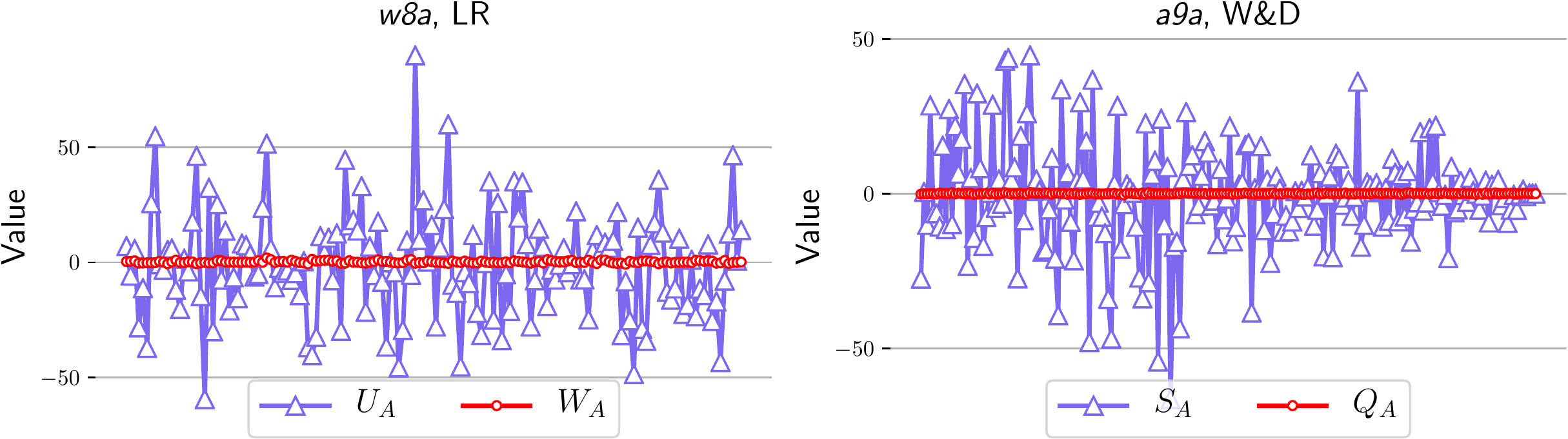}
\caption{{
The comparison of model weights ($\twa, \qa$) 
and the SS pieces ($\ua, \sa$) 
on the same coordinate. 
}}
\label{fig:model_ss}
\end{figure}

\mysubsubsection{{Protection for Forward Activations}}
As discussed in Section~\ref{sec:privacy_matter}, 
once \host gets access to the activations, 
there is a potential label leakage. 
However, many existing works allow \host to manage $\twa$ 
when implementing the LR or MLR models
~\cite{vfl_lr,async_vfl,private_vlr}. 
To empirically evaluate the leakage problem, 
we implement LR and MLR 
following the split learning paradigm 
and try to see how accurate it will be 
if \host predicts the labels via $\xa\twa$. 
For our work, we let \host predict with $\xa\ua$. 
The results are shown in Figure~\ref{fig:adv1}. 

When \host owns $\twa$, 
although the AUC or accuracy metrics 
of predicting with $\xa\twa$ 
are lower than those of predicting with $\xa\twa + \xb\twb$,
it is still unsatisfactory --- 
\host is able to make accurate predictions 
for a large portion of data. 
For instance, \host predicting with $\xa\twa$ 
on the \textit{w8a} dataset 
achieves around 0.9 of AUC on the testing set, 
which is only 0.06 lower than 
that of predicting with $\xa\twa + \xb\twb$.
In contrast, by using our \texttt{MatMul} source layer 
(Figure~\ref{fig:matmul_routine}), 
the predictions with $\xa\ua$ are purely random guesses 
(the AUC metric is around 0.5). 
It proves that the label information 
can be protected from the forward activations well 
with our \texttt{MatMul} source layer 
since we do not allow \host to manage a local bottom model. 

Readers might wonder the necessity of 
applying SS to model gradients. 
In other words, 
if we have already sheltered model weights via 
$\twa = \ua + \va$ on initialization, 
can we let \host get access to $\nabla\twa$ 
and update $\ua$ directly, 
rather than updating both $\ua,\va$ in the SS manner? 
To answer this question, we conduct experiments where 
$\twa$ is secretly shared on initialization 
but \host updates $\ua$ via $\nabla\twa$. 
Furthermore, we amplify the scale of $\va$ 
to better obscure $\twa$ from \host. 
Nevertheless, as shown in Figure~\ref{fig:adv1}, 
even though the model weights are secretly shared 
on initialization, \host can still obtain accurate predictions. 
Increasing the scale of $\va$ is of no use 
given the slight drop in the AUC/accuracy metrics. 
This is reasonable because if $\va$ keeps unchanged 
throughout the training phase, 
we can assume $\mathbb{E}[\xa\va] = C$, 
where $C$ is some constant, 
hence we have 
$\mathbb{E}[\xa\ua]=\mathbb{E}[\xa\twa] - C$. 
As a result, \host is still capable of 
making biased predictions, 
which inevitably leads to the label leakage.

To conclude, compared with the existing works 
that follow the local bottom model design, 
\alg is robust to any kind of adversaries that 
try to learn the private data via forward activations, 
thanks to our federated source layers.

\mysubsubsection{{Protection for Backward Derivatives}}
As discussed in Section~\ref{sec:privacy_matter}, 
the backward derivatives are very informative 
and would leak the training labels. 
To verify this, we implement 
the WDL model following the split learning paradigm, 
i.e., \host owns the embedding table $\qa$ 
in the bottom model 
and obtains $\nabla\ea$ in the backward propagation, 
which is done in several existing works
~\cite{interactive,fdml,vfl_label_protect}. 
Then, we let \host predicts the labels via $\nabla\ea$. 

The results in Figure~\ref{fig:adv2} reveals a horrible fact --- 
\host accurately predicts the labels of 
almost the entire training datasets 
via the derivatives $\nabla\ea$. 
Undoubtedly, this disobeys the goal of privacy preservation. 
Although it focuses on the training data, 
\host could fit a new model with $\xa$ and the leaked labels, 
and utilize the new model to make predictions for more data, 
leading to a more severe level of label leakage. 
Moreover, we note that 
this method attacks successfully regardless of 
how far $\nabla\ea$ are away from the labels 
(i.e., the number of hidden layers between the embedding table 
and the loss function). 

Theoretically speaking, for binary-classification tasks, 
the backward derivatives for positive and negative instances 
ought to have opposite directions 
since they contribute oppositely to the model. 
Thus, \host can utilize this property 
and compute the cosine similarity of two derivatives 
to see whether they have an opposite direction. 
By doing so, \host achieves an incredible success rate. 
It gives us a lesson that any algorithms 
allowing \host to obtain any backward derivatives 
would be vulnerable from label leakage. 

On the contrary, 
we thoroughly analyze the security of backward derivatives 
and propose corresponding privacy requirements 
to forbid data leakage from backward derivatives. 
For instance, in our \texttt{Embed-MatMul} source layer, 
\host does not own the embedding table in plaintext 
and only gets access to the encrypted derivatives 
$\encb{\nabla\ea}$, 
so \alg is robust to any adversaries that try to 
peek the private data through backward derivatives.

\mysubsubsection{{Protection for Models}} 
As we have discussed in Section~\ref{sec:privacy_matter}, 
the magnitudes or signs of model weights or gradients express 
the feature importance upon the learning tasks, 
so we forbid the models to be obtained. 
To achieve this goal, 
we expect the SS pieces of model weights and gradients 
do not reveal the magnitudes or signs of the ground truth values.

To illustrate the effectiveness of 
our protection for models, 
we plot the values of SS pieces and the ground truth values 
in Figure~\ref{fig:model_ss}. 
Due to the space constraints, 
we only plot the model weights $\twa,\sa$ 
and the sharing pieces $\ua,\qa$, 
whilst similar results are observed 
for other model weights and gradients. 
Figure~\ref{fig:model_ss} shows that 
the difference 
on each coordinate is random and sufficiently large 
so that both the magnitudes or signs of 
the ground truth values are inaccessible. 
As a result, each party cannot infer any information 
by analyzing the feature importance.

\begin{table}[!t]
\small
\centering
\caption{\small{
Averaged training time cost of one mini-batch (in seconds). 
We only record the time cost of matrix multiplication 
for a fair comparison. 
The standard deviation for all numbers 
are smaller than 10\% of the mean. 
SecureML without client aided fails to 
support the high-dimensional datasets 
\textit{news20}, \textit{avazu-app}, and \textit{industry}. 
}}
\begin{tabular}{|c|c|c|c|c|}
\hline
\multirow{3}*{\specialcell{\textbf{Dataset \&}\\\textbf{Sparsity}}}
& \multirow{3}*{\textbf{Model}} 
& \multicolumn{3}{c|}{\textbf{Time Cost/Batch (in seconds)}} 
\\
\cline{3-5}
& & \textbf{\alg} & \textbf{SecureML} 
& \specialcell{\textbf{SecureML}\\\textbf{\scriptsize{(Client-aided)}}}
\\
\hline
\hline
\textit{a9a} (88.72\%) 
& LR 
& 0.018 
& 0.567
& 0.003
\\
\hline
\textit{w8a} (96.12\%) 
& LR 
& 0.021
& 1.214
& 0.016
\\
\hline
\textit{connect-4} (66.67\%) 
& MLP 
& 1.114
& 5.703
& 0.008
\\
\hline
\textit{higgs} (Dense) 
& LR 
& 0.066 
& 0.178 
& 0.002
\\
\hline
\textit{news20} (99.87\%) 
& MLR 
& 1.817 
& $>$ 1800
& 0.364
\\
\hline
\textit{avazu-app} (99.99\%) 
& LR 
& 0.038 
& OOM
& 4.727
\\
\hline
\textit{industry} (99.99\%) 
& LR 
& 0.034 
& OOM
& 47.083
\\
\hline
\end{tabular}
\label{tb:efficiency} 
\end{table}

\begin{figure*}[!t]
\centering
\includegraphics[width=6.8in]{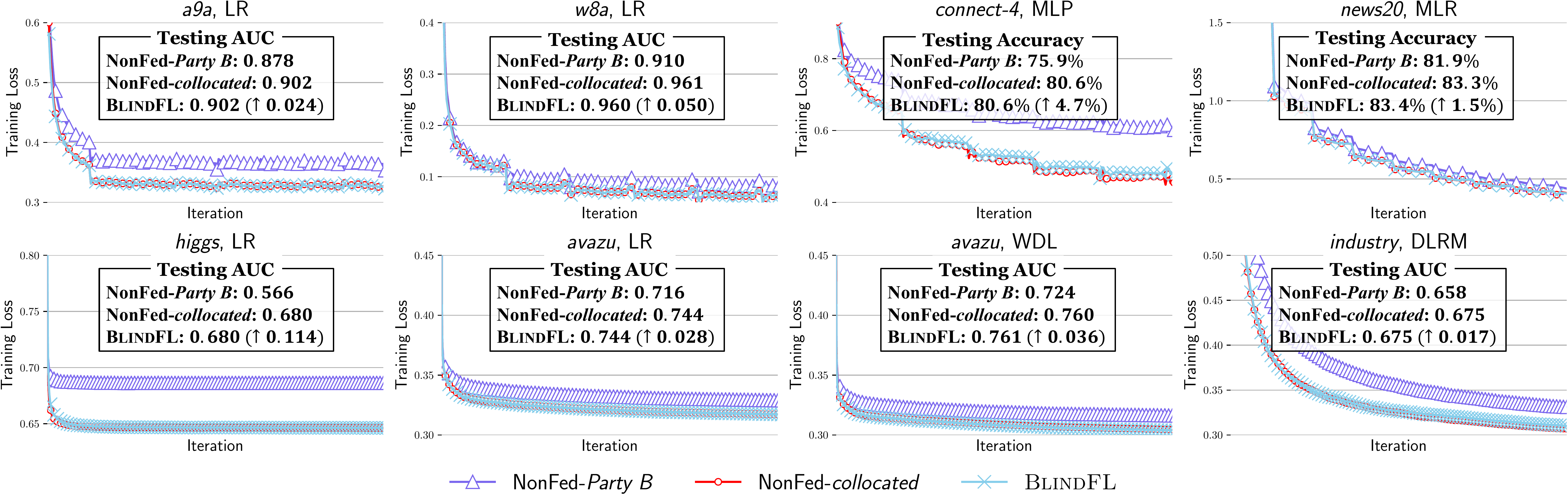}
\caption{{Training loss in terms of iterations and testing AUC or accuracy metrics. The standard deviation for all evaluation metrics are smaller than 5\% of the mean. The model performance of \alg is comparable to that of non-federated learning on collocated datasets, and better than that of non-federated learning on \guest's features only.}}
\label{fig:converge}
\end{figure*}

\subsection{Efficiency}
Apart from \alg, the MPC-based methods 
can also achieve robust security guarantees 
via data outsourcing. 
However, unlike these methods, 
our work is able to utilize the sparsity of features 
to speed up the computation. 
To verify this, 
we conduct experiments to 
compare the efficiency of \alg and 
the existing MPC-based counterparts. 
In particular, we consider SecureML~\cite{secureml}, 
which secretly shares the model weights and feature values 
via ABY~\cite{aby}, 
and performs matrix multiplication 
with the help of Beaver triplets. 
Moreover, as introduced by ~\citet{secureml}, 
SecureML has a client-aided variant 
that offloads the generation of Beaver triplets 
to a non-colluding third party 
so that no cryptographic operation will be involved 
in the ML tasks. 
We compare \alg with both variants of SecureML 
in terms of the training time cost. 
Furthermore, we only record the time cost of 
matrix multiplication and exclude 
that of non-linear activation functions (e.g., Sigmoid, ReLU) 
for a fair comparison. 
The results are given in Table~\ref{tb:efficiency}.

\mysubsubsection{\alg vs. SecureML}
We first discuss the performance without client aided 
on different kinds of datasets, respectively.

First, although SecureML can support 
the low-dimensional and sparse datasets, i.e.,  
the \textit{a9a}, \textit{w8a}, and \textit{connect-4} datasets, 
it cannot take the sparsity into account, 
leading to umpteenth redundant computations 
on the zero values. 
On the contrary, by keeping the data inside each party, 
\alg supports sparse matrix multiplication 
and therefore outperforms SecureML by a large extent. 
Furthermore, \alg gains more improvements 
when the datasets are sparser, 
and can be over 50$\times$ faster than SecureML.

Second, for the \textit{higgs} dataset, 
which is low-dimensional and dense, 
the gap between \alg and SecureML is smaller 
since there are no zero values. 
Nevertheless, our work still performs better and 
gives about 5$\times$ of acceleration. 
The reason is that we keep an encrypted version 
of $\enc{\vdot}_\otherdot$ on \textit{Party} $\mydot$ 
to reduce an extra communication round, 
whilst SecureML needs to 
generate new Beaver triplets 
for each iteration. 
Thus, our \texttt{MatMul} source layer 
also supports dense features well.

Third, SecureML fails to support 
the \textit{news20}, \textit{avazu-app}, 
and \textit{industry} datasets 
due to their high dimensionalities 
(either fails to accomplish the computation within 
a reasonable time or runs out of memory). 
In contrast, \alg handles 
these high-dimensional and sparse datasets well 
since we do not need to store or process the zero values. 
To the best of our knowledge, 
none of the existing works are able to support 
such a scale of datasets 
whilst guarantee promising privacy.

\mysubsubsection{\alg vs. Client-aided SecureML}
The client-aided SecureML runs much faster 
when the dimensionality is not high 
as no cryptography operations are needed, 
whilst \alg cannot be accelerated in this way. 
Nevertheless, for the \textit{avazu-app} and \textit{industry} 
datasets, which are extremely high-dimensional and sparse, 
\alg outperforms the client-aided SecureML. 
This is reasonable since SecureML has to 
process all the dimensions 
whilst \alg only needs to consider the non-zero ones. 
Furthermore, since there is no such 
a non-colluding third party in practice, 
the MPC-based methods would be even slower. 
As a result, \alg is more suitable for the sparse datasets 
in real-world VFL applications.

To summarize, \alg can outperform 
the MPC-based counterparts 
by an order of magnitude on sparse datasets 
and supports datasets with higher dimensionalities. 
Moreover, as we will illustrate 
in Section~\ref{sec:expr_ability}, 
\alg can also support the embedding lookup operation 
for categorical inputs, 
whilst there are no MPC-based methods that are designed 
for the embedding lookup operations to the best of knowledge. 
As a result, \alg is more powerful as 
it gains much better efficiency and functionality.

\mysubsubsection{Scalability}
In general, the time cost of the federated source layers 
dominates the total running time 
since the cryptography operations are time-expensive 
whilst the top model is a non-federated module. 
Thus, the running time of our work 
is proportional to the output dimensionality 
of the source layer, 
and does not change significantly 
w.r.t. the number of layers. 
Due to the space constraint, 
we defer the experimental results in our appendix 
and do not discuss the details here.

\subsection{Model Ability}
\label{sec:expr_ability}
As introduced in Section~\ref{sec:intro}, 
the VFL models are expected to be lossless, 
i.e.,  
the performance of VFL models should be 
(i) better than non-federated learning on 
the features of \guest only 
(denoted as NonFed-\guest); 
and (ii) comparable to non-federated learning on 
collocated features of both parties 
(denoted as NonFed-\textit{collocated}). 
To assess the lossless property of \alg, 
we conduct experiments on extensive datasets and models 
and show the results in Figure~\ref{fig:converge}. 

On all experiments, 
the convergence of \alg is consistent to 
that of NonFed-\textit{collocated} 
and is better than that of NonFed-\guest. 
The evaluation metrics on testing sets 
have similar observations as well --- 
the testing AUC or accuracy metrics 
of \alg and NonFed-\textit{collocated} 
are comparable, which are better than 
those of NonFed-\guest 
with the help of the extra features from \host. 
The experimental results verify 
the lossless property of \alg. 
In practice, it is forbidden to collect 
the data from different parties 
together for ML training or inference. 
As a result, \alg is more superior 
as it unites the features in different parties 
to enhance the model ability 
whilst guarantees the data privacy 
of all parties.

\mysubsubsection{Summary}
To conclude, \alg outperforms the existing works 
by providing more robust privacy guarantees 
and faster running speed. 
Furthermore, \alg achieves comparable 
model ability as collocated learning 
for a variety of models and datasets. 
Consequently, \alg is a good fit for 
a wide range of VFL applications.

\section{Related Works}
\label{sec:related}

\textbf{Vertical FL (VFL).\xspace}
With the ever-evolving concerns on data privacy, 
how to build ML models over different data sources 
with privacy preservation s gaining more popularity. 
VFL considers the case where different parties 
jointly train ML models 
over the partitioned features, which fits 
numerous real-world cross-enterprise 
collaborations~\cite{fl_concepts,secureboost,vfboost,privacy_vfl_tree,vfl_lr}. 
We categorize the existing works into two lines 
based on how they process the private features.

\textbf{The MPC-based methods.\xspace}
Since cryptographic methods 
can deliver promising security guarantees, 
it is a straightforward idea to apply them 
to the private data directly. 
A number of works encrypt features via HE 
and feed the encrypted features to ML models
~\cite{paillier_outsource_lr,cryptonets,cryptodl,sealion,private_bp_clound}.
To perform both addition and multiplication, 
these works adopt somewhat HE or fully HE, 
which are extremely time-consuming. 
Other works adopt SS to protect the data. 
~\citet{sharemind} proposed a data analysis framework 
based on additive SS. 
\citet{epic} trained the support vector machine models 
for image classification. 
SecureML~\cite{secureml} supports various ML models 
via the ABY scheme~\cite{aby}. 
ABY3~\cite{aby3} extends ABY to the three-party scenarios. 
SecureNN~\cite{securenn1,securenn2} optimizes for three-party neural networks. 
Nevertheless, although these works convey considerable 
privacy guarantees, the generic MPC framework involves 
sophisticated protocols and 
time-consuming computation primitives 
to achieve zero knowledge disclosure. 
Worse still, outsourcing the datasets is not suitable 
for sparse or categorical features. 

\textbf{Split learning.\xspace}
Rather than data outsourcing, 
the split learning paradigm~\cite{split_fed,split_learning} 
keeps the private data inside each party 
and leverages a local bottom model to 
process the features. 
Many VFL algorithms are designed in this way
~\cite{vfl_lr,interactive,vfl_label_protect,fdml,async_vfl}. 
Since features are visible to the owner party, 
it is very flexible to support various kinds of features. 
However, as discussed in Section~\ref{sec:anatomy}, 
the design of local bottom models faces several 
data leakage problems and fails to convey 
provable security guarantees. 
Although some existing works try to apply some 
cryptography operations to enhance privacy, 
they still suffer from data leakage. 
For instance, ~\citet{vfl_lr} tried to 
avoid label leakage from derivatives via HE and SS. 
However, \guest can still infer the labels  
as depicted in Figure~\ref{fig:limits}. 
In addition, 
noisy perturbation (or differential privacy) 
is also utilized to blur 
the intermediate results~\cite{vfl_label_protect}. 
Nevertheless, adding noises 
inevitably affects the model accuracy. 
In practice, it is non-trivial to figure out 
a suitable amount of noises 
that conveys considerable security whilst 
produces desirable model accuracy. 
Furthermore, all these works cannot provide 
security guarantees under the ideal-real paradigm 
as the MPC-based methods.

\textbf{Data alignment assumption.\xspace}
Although most of the VFL algorithms assume the instances 
of different parties have been aligned via 
the PSI technique~\cite{psi1,psi2,psi3}, 
there are also works that focus on 
the cases where data cannot be aligned due to 
stronger privacy requirements~\cite{apsi,psu}. 
However, our work can be extended to these cases easily. 
For instance, \citet{apsi} proposed that 
only \guest can obtain the intersection whilst \host cannot. 
Then, for instances outside the intersection, 
\guest sets their derivatives as zeros 
so that model gradients will not be affected. 
This method can be applied to our work 
by tweaking Line 9 of Figure~\ref{fig:matmul_routine} 
and Line 12 of Figure~\ref{fig:embed_matmul_routine}. 
For another example, in~\cite{psu}, both parties can only 
obtain the union instead of intersection. 
Then, they propose to generate synthetic 
features and labels for instances outside the intersection. 
Obviously, this technique can also be integrated with our work 
during the data preprocessing phase.

\textbf{Reconstruction-based inference attacks.\xspace}
There are also many works that try to 
break the privacy of ML models 
via more advanced attacks. 
For instance, membership inference attacks 
(MIAs)~\cite{mia,mia_survey} 
try to infer whether an instance was used to 
train a given model. 
However, in our work, since the federated source layers 
are not released in plaintext, 
none of the parties (or any other attackers) 
can apply MIAs to attack the models. 
Deep leakage from gradients (DLG) is also 
a popular kind of 
attack methods~\cite{deep_leak_grad,invert_grad} 
which recover the private data from model gradients. 
However, model gradients are not disclosed to 
others in our algorithm protocol.

\textbf{Horizontal FL (HFL).\xspace}
Besides VFL, HFL~\cite{
konevcny2016federated_opt,konevcny2016federated_learn,McMahan2017_fl} 
is also one of the most popular fields of FL, 
where all parties 
have disjoint instance sets for the same features, 
i.e., the datasets are horizontally partitioned. 
There has been a wide range of works 
studying HFL~\cite{fedprox,dp_sage,google_fl_system}. 
The most essential privacy-preserving technique 
in this field is differential privacy (DP)
~\cite{dp_dl,dp_perf}, 
which is adopted to obscure the model weights 
or gradients so that the individual data 
become indistinguishable. 
Our work differs from these works in objection 
since we consider the VFL scenario. 

\textbf{Other related studies.\xspace}
Beyond FL, there is also an arousing interest in
various kinds of federated computing. 
For instance, 
the private set intersection technique~\cite{psi1,psi2,psi3} 
secretly extracts and aligns the joint set of private tables 
in different parties. 
Apart from the ``join'' operation, 
many other operations over private databases
are studied~\cite{privacy_db_op,opaque,privacy_db_xor}. 
Achieving privacy preservation from the hardware perspective 
is also a popular topic. 
With the help of the trusted execution environments (TEEs) 
such as Intel SGX and AMD memory encryption~\cite{sgx,amd}, 
we can get rid of umpteenth expensive cryptographic arithmetics 
when implementing privacy-preserving ML algorithms~\cite{oblivious_ml}. 
Nevertheless, the available memory for TEE is 
too small to support a large data volume 
and it requires extra hardware supports.

\section{Conclusion and Future Directions}
\label{sec:conc}

This work proposed \alg, a brand new VFL framework. 
To address the functionality of VFL models, 
we designed the federated source layers to 
unite the data from different data sources. 
To protect data privacy, we analyzed 
the privacy requirements in-depth 
and carefully devised safe and accurate 
algorithm protocols. 
Experimental results show that 
\alg is able to 
support various kinds of features efficiently 
and achieve promising privacy guarantees. 

Beyond this work, we wish to strengthen \alg 
in two directions. 
First, it is worthy to study 
how to extract feature interactions 
between parties to support more ML models 
(e.g., factorization machines). 
Second, how to apply adaptive optimizers (e.g., Adam) 
to secretly shared model gradients 
is also an interesting topic. 
We will leave the exploration of 
these topics as our future works.

\begin{acks}
This work is supported by National Natural Science Foundation of China (NSFC) (No. 61832001), PKU-Tencent joint research Lab, and Beijing Academy of Artificial Intelligence (BAAI). Bin Cui is the corresponding author.
\end{acks}

\balance

\nocite{het,partial_reduce,openbox,pasca,snapshot_boosting,real_time_rec,cl4rec,tinyscript,sketchml,vero,dimboost,dts_moe}
\bibliographystyle{ACM-Reference-Format}
\bibliography{reference}

\newif\ifappendix
\appendixtrue 
\ifappendix
\appendix
\newpage

\section{Security Analysis}
\label{sec:proof}

In this section, we provide the definition of 
our ideal functionalities 
and proofs for the theorems 
in Section~\ref{sec:matmul_security} 
and Section~\ref{sec:embed_matmul_security}.

\subsection{Proofs for Section~\ref{sec:matmul_security}}

\textsc{Theorem~\ref{thm:matmul_source}.}
{\em
The protocol of \texttt{MatMul} source layer 
securely realizes the ideal functionalities 
$\mathcal{F}_{\texttt{MatMulFw}}$ 
and $\mathcal{F}_{\texttt{MatMulBw}}$ 
in the presence of a semi-honest adversary 
that can corrupt one party.
}
\begin{proof}
We formally provide the definition 
and prove the security of the two idea functionalities 
in Lemma~\ref{thm:matmul_fw} and~\ref{thm:matmul_bw}, 
respectively. 
Putting them together, we complete the proof for 
Theorem~\ref{thm:matmul_source}.
\end{proof}

\begin{shadedbox}
\centerline{Forward propagation of 
the \texttt{MatMul} source layer 
$\mathcal{F}_\texttt{MatMulFw}$}

\noindent\textbf{Inputs:}
\begin{itemize}[leftmargin=*,label=$\triangleright$]
\item \host inputs features $\xa$, secretly shared and/or encrypted models $\ua, \encb{\va}$, and keys $\ska, \pkb$; 
\item \guest inputs features $\xb$, secretly shared and/or encrypted models $\ub, \enca{\vb}$, and keys $\pka, \skb$.
\end{itemize}

\noindent\textbf{Outputs:}
\begin{itemize}[leftmargin=*,label=$\triangleright$]
\item \host outputs nothing;
\item \guest outputs $Z = \xa\twa + \xb\twb$, where $\twdot = \udot + \vdot$.
\end{itemize}
\end{shadedbox}

\begin{lemma}
\label{thm:matmul_fw}
The protocol $\Pi_{\texttt{MatMulFw}}$ 
in Line 5-8 of Figure~\ref{fig:matmul_routine} 
securely realizes $\mathcal{F}_{\texttt{MatMulFw}}$ 
in the presence of a semi-honest adversary 
that can corrupt one party.
\end{lemma}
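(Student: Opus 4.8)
The plan is to prove Lemma~\ref{thm:matmul_fw} in the ideal-real simulation paradigm. Since the adversary is semi-honest and corrupts at most one party, I would construct, for each of the two corruption cases, a probabilistic polynomial-time simulator $\mathcal{S}$ that is given only the corrupted party's input and the prescribed output of $\mathcal{F}_{\texttt{MatMulFw}}$ and that reproduces the corrupted party's view up to indistinguishability. Correctness of the output (i.e.\ $Z = \xa\twa + \xb\twb$) is already verified in the body via the telescoping cancellation of the masks, so the entire burden is to show that no message received during Line 5--8 carries information beyond the party's input and output. First I would enumerate the incoming messages in each party's view: the only cross-party receipts are (i) the masked HE-to-SS share produced by Algorithm~\ref{alg:he2ss} and (ii), for \guest, the partial sum $Z_A^\prime$.

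For the case where \host is corrupted, the simulator is given $\xa,\ua,\encb{\va},\ska,\pkb$ and no output. The single message \host receives is $\enca{\xb\vb - \eps_B}$, which it decrypts with $\ska$. Because \guest draws $\eps_B$ uniformly at random over the full domain inside Algorithm~\ref{alg:he2ss}, the plaintext $\xb\vb - \eps_B$ is uniformly distributed and independent of $\xb,\vb$. Hence $\mathcal{S}$ samples a random $r$ of the matching shape, encrypts it under $\pka$, and hands $\enca{r}$ to \host; every other element of \host's view ($\eps_A$, the local ciphertext $\encb{\xa\va}$, and the outgoing $Z_A^\prime$) is computed exactly as in the protocol from the given input. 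The simulated and real views are then identically distributed.

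For the case where \guest is corrupted, the simulator is given $\xb,\ub,\enca{\vb},\pka,\skb$ together with the output $Z$. Now \guest receives two messages, $\encb{\xa\va - \eps_A}$ and the partial sum $Z_A^\prime$. The first is handled exactly as above, since $\eps_A$ is a fresh uniform mask, so $\mathcal{S}$ substitutes $\encb{r_1}$ for a uniform $r_1$. The delicate message is $Z_A^\prime$: here I would exploit that \guest locally forms $Z_B^\prime = \xb\ub + \eps_B + (\xa\va - \eps_A)$ from quantities it already holds (or that the simulator has already produced), and that the functionality's output satisfies $Z = Z_A^\prime + Z_B^\prime$. Thus $\mathcal{S}$ sets $Z_A^\prime := Z - Z_B^\prime$, which by correctness matches the real value on its support and, conditioned on the first message, is a deterministic function of $Z$ and \guest's own state in both worlds. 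This makes the joint distribution of the two received messages identical in the real and simulated executions.

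The main obstacle I anticipate is precisely this ``last-message'' consistency together with the fidelity of the simulated ciphertexts. The former is resolved by the observation above --- the output $Z$ pins down $Z_A^\prime$ given \guest's locally computable $Z_B^\prime$, so no residual degrees of freedom remain to leak. The latter requires that a fresh encryption of a uniform plaintext be indistinguishable from the ciphertext actually transmitted, which in the protocol is computed homomorphically from $\encb{\va}$ rather than sampled afresh; I would discharge this by invoking the semantic security (and re-randomizability) of the Paillier scheme, so that the ciphertext form reveals nothing beyond its uniform plaintext. Combining the two simulators yields security against a semi-honest adversary corrupting either party, completing the proof of Lemma~\ref{thm:matmul_fw}.
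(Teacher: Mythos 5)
Your proof is correct and follows essentially the same route as the paper's: a two-case simulation in which the masked cross-party messages are replaced by encryptions of uniform plaintexts, and \guest's final received message $Z_A^\prime$ is reconstructed as $Z - Z_B^\prime$ from the functionality's output and \guest's locally computable share. The only differences are presentational --- the paper factors the masked-message simulation into a separate lemma for $\mathcal{F}_{\texttt{HE2SS}}$ and composes it, and it glosses over the fresh-versus-homomorphically-derived ciphertext distinction that you explicitly (and rightly) discharge via Paillier semantic security and re-randomizability.
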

\begin{proof}
First, \host only receives one message 
by invoking the $\Pi_{\texttt{HE2SS}}$ protocol 
(Line 6), 
whilst the other values are computed locally. 
Therefore, the view of \host can be perfectly simulated 
by simulating the $\mathcal{F}_{\texttt{HE2SS}}$ 
functionality as discussed in Lemma~\ref{thm:he2ss}. 

Second, \guest receives two messages in the protocol 
(Line 5 and Line 8). 
Similarly, the first message can be simulated 
by simulating the $\mathcal{F}_{\texttt{HE2SS}}$ functionality. 
Denote the simulated versions of $\xa\va - \eps_A$ 
and $\eps_B$ 
are $(\xa\va - \eps_A)^*$ and $\eps_B^*$, respectively. 
Then, we can simulate the second message, 
i.e. $Z_A^\prime$, by computing 
$Z_A^{\prime*} = Z - Z_B^{\prime*}$, 
where $Z_B^{\prime*} = \xb\ub + \eps_B^* + 
	(\xa\va - \eps_A)^*$. 
Since both $Z_A^\prime$ (or $Z_B^\prime$) 
and $Z_A^{\prime*}$ (or $Z_B^{\prime*}$) 
represent one piece of sharing of the output $Z$, 
they have the same probability distribution. 
As a result, we perfectly simulate the view of \guest. 
\end{proof}

\begin{shadedbox}
\centerline{Backward propagation of 
the \texttt{MatMul} source layer 
$\mathcal{F}_\texttt{MatMulBw}$}

\noindent\textbf{Inputs:}
\begin{itemize}[leftmargin=*,label=$\triangleright$]
\item \host inputs features $\xa$, secretly shared and/or encrypted models $\ua, \encb{\va}$, and keys $\ska, \pkb$; 
\item \guest inputs features $\xb$, derivatives $\nabla Z$, secretly shared and/or encrypted models $\ub, \enca{\vb}$, and keys $\pka, \skb$.
\end{itemize}

\noindent\textbf{Outputs:}
\begin{itemize}[leftmargin=*,label=$\triangleright$]
\item \host outputs $\phi$;
\item \guest outputs $\nabla\twa - \phi, \nabla\twb$, where $\nabla\twdot = \xdot^T \nabla Z$.
\end{itemize}
\end{shadedbox}

\begin{lemma}
\label{thm:matmul_bw}
The protocol $\Pi_{\texttt{MatMulBw}}$ 
in Line 9-12 of Figure~\ref{fig:matmul_routine} 
securely realizes $\mathcal{F}_{\texttt{MatMulBw}}$ 
in the presence of a semi-honest adversary 
that can corrupt one party.
\end{lemma}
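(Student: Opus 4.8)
The plan is to exhibit, for each choice of corrupted party, a probabilistic polynomial-time simulator that reproduces the adversary's view from only its prescribed input and output, mirroring the forward argument of Lemma~\ref{thm:matmul_fw}. First I would isolate which quantities actually cross the wire in Lines~9--12. Apart from the invocation of $\Pi_{\texttt{HE2SS}}$ (Line~10), the only transmitted value is the ciphertext $\encb{\nabla Z}$ that \guest hands to \host (Line~9); everything else --- the homomorphic formation $\encb{\nabla\twa}=\xa^T\encb{\nabla Z}$, the local gradient $\nabla\twb=\xb^T\nabla Z$, and the complementary updates of $\ua,\va$ and $\ub,\vb$ --- is computed with no communication. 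This reduces the whole argument to (i) the security of $\mathcal{F}_{\texttt{HE2SS}}$ already established in Lemma~\ref{thm:he2ss}, and (ii) the simulatability of that single extra ciphertext.

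For the case where \host is corrupted, I would observe that \host receives $\encb{\nabla Z}$ and then plays the \emph{non-key-holding} role inside $\Pi_{\texttt{HE2SS}}$, where it draws $\phi$ from its own random tape and sends $\encb{\nabla\twa-\phi}=\encb{\nabla\twa}-\phi$ but receives nothing further. Hence its sole incoming message is $\encb{\nabla Z}$. Since this is an encryption under $\pkb$ while \host holds only $\ska$, I would invoke the semantic security of the Paillier scheme: the simulator substitutes $\encb{0}$ (an encryption of any fixed plaintext of the correct shape) for $\encb{\nabla Z}$, which is computationally indistinguishable from the real transcript. The HE2SS portion of the view, together with the output $\phi$, is reproduced by the simulator from Lemma~\ref{thm:he2ss}.

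For the case where \guest is corrupted, the roles invert: \guest itself forms and sends $\encb{\nabla Z}$ (an outgoing message needing no simulation) and computes $\nabla\twb=\xb^T\nabla Z$ locally, while inside $\Pi_{\texttt{HE2SS}}$ it plays the \emph{key-holding} role, receiving $\encb{\nabla\twa-\phi}$ and decrypting it to $\nabla\twa-\phi$. The key point is that $\nabla\twa-\phi$ is exactly part of \guest's ideal output, so the simulator can take that output, encrypt it under $\pkb$, and present the ciphertext as the received message; its decryption matches by construction. Together with the trivially simulated local value $\nabla\twb$, this yields a view indistinguishable from the real one.

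The hard part will be the single asymmetry with the forward proof: there the simulation was \emph{perfect} because every transmitted quantity was a uniform secret-sharing piece, whereas here \host's incoming $\encb{\nabla Z}$ forces a reduction to the \emph{computational} security of the HE scheme, so the overall indistinguishability is computational rather than statistical. A secondary subtlety I would nail down is the ciphertext-distribution match on \guest's side: the real ciphertext is the homomorphically derived $\encb{\nabla\twa}-\phi$, while the simulator supplies a fresh encryption of $\nabla\twa-\phi$, so I would rely on (or explicitly insert) a re-randomization step so the two ciphertext distributions coincide, exactly as the $\mathcal{F}_{\texttt{HE2SS}}$ analysis presupposes.
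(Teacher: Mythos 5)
Your overall strategy matches the paper's: a case split on the corrupted party, reduction of \guest's view to the simulation of $\mathcal{F}_{\texttt{HE2SS}}$ (encrypt the prescribed output $\nabla\twa-\phi$ under $\pkb$ and present it as the received ciphertext), and simulation of the ciphertexts seen by \host by appeal to the semantic security of Paillier, with locally derived ciphertexts (e.g., $\xa^T\encb{\nabla Z^*}$) simulated consistently from the fabricated $\encb{\nabla Z^*}$. Your two refinements --- that the indistinguishability on \host's side is only computational rather than perfect, and that homomorphically derived ciphertexts must be reconciled with fresh encryptions by re-randomization --- are points the paper glosses over (it still concludes ``the view of \host can be simulated perfectly''), and you are right to insist on them.

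There is, however, one concrete discrepancy: your enumeration of \host's view is incomplete. You assert that apart from Line 10 the only transmitted value is $\encb{\nabla Z}$ and that the complementary updates in Lines 11--12 involve no communication. In the paper's protocol, \host receives a second message at Line 12, namely a fresh encryption $\encb{\nabla\twa-\phi}$ sent by \guest. The reason is bookkeeping of the encrypted model state: \host holds $\encb{\va}$ as part of its input (it needs it to compute $\xa\encb{\va}$ in the next forward pass), so when \guest updates its plaintext share via $\va \leftarrow \va - \eta(\nabla\twa-\phi)$, \host must mirror that update on $\encb{\va}$, which requires receiving $\encb{\nabla\twa-\phi}$. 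Accordingly, the paper's simulator for a corrupted \host fabricates two ciphertexts --- one standing in for $\encb{\nabla Z}$ and one for $\encb{\nabla\twa-\phi}$ --- both justified by exactly the semantic-security argument you already give (note that $\nabla\twa-\phi$ is uniformly distributed from \host's perspective, so a random-plaintext encryption suffices). Your simulator as written therefore produces a strictly smaller view than the real one; the repair is immediate with your own technique, but the missing message must be accounted for, and the claim that Lines 11--12 are communication-free is the step that fails.
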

\begin{proof}
First, \host receives two messages, which are 
$\encb{\nabla Z}$ (Line 9) and 
$\encb{\nabla\twa - \phi}$ (Line 12). 
We construct a simulator that 
randomly picks plaintexts 
$\nabla Z^*, (\nabla\twa - \phi)^*$ 
and encrypts them using $\pkb$ 
to obtain $\encb{\nabla Z^*}, \enc{(\nabla\twa - \phi)^*}$. 
Since $\nabla Z$ is the input and 
$\nabla\twa - \phi$ is one piece of random secret, 
$\nabla Z^*$ (or $(\nabla\twa - \phi)^*$) 
and $\nabla Z$ (or $\nabla\twa - \phi$) 
share the same probability distribution. 
Furthermore, without \guest's secret key $\skb$, 
the ciphertexts 
$\encb{\nabla Z}$ (or $\enc{\nabla\twa - \phi}$) and 
$\encb{\nabla Z^*}$ (or $\enc{(\nabla\twa - \phi)^*}$) 
are computationally indistinguishable 
from the perspective of \host. 
The other values in the view of \host 
are computed locally. 
For instance, the simulator can simulate 
$\encb{\twa}$ by computing 
$\encb{\twa^*} = \xa^T\encb{\nabla Z^*}$. 
Again, they are computationally indistinguishable 
from the perspective of \host. 
Consequently, the view of \host can be simulated perfectly. 

Second, \guest only receives one messages 
by invoking the $\Pi_{\texttt{HE2SS}}$ protocol 
(Line 10), 
whilst the other values are computed locally. 
Therefore, the view of \guest can be perfectly simulated 
by simulating the $\mathcal{F}_{\texttt{HE2SS}}$ 
functionality as discussed in Lemma~\ref{thm:he2ss}. 
\end{proof}

\textsc{Theorem~\ref{thm:matmul_top}.}
{\em
Given $Z, \nabla Z$ in the \texttt{MatMul} source layer, 
there are infinite possible values for $\xa, \twa, \xa\twa$. 
}
\begin{proof}
First, we consider the linear equation $Z = Z_A + Z_B$ 
where $Z_A = \xa\twa, Z_B = \xb\twb$. 
With only $Z \in \mathbb{R}^{BS \times OUT}$ 
being known to \guest, 
there are $BS \times OUT$ known values in total. 
However, since both $Z_A, Z_B$ are unknown, 
which make up to be $2 \times BS \times OUT$ variables 
in the equation. 
As a result, there must be infinite possible solution 
to the equation, which means 
there are infinite possible values for $Z_A = \xa\twa$. 
Then, for any possible $\xa\twa$, 
given an arbitrary invertible matrix 
$M \in \mathbb{R}^{IN_A \times IN_A}$, 
we have $(\xa M^{-1}) (M \twa) = \xa\twa$. 
Consequently, there are infinite possible values 
for $\xa, \twa$, since both $\xa\twa$ and $M$ are arbitrary. 

For $\nabla Z$, since it is computed locally on \guest, 
no extra information related to $\xa, \twa, \xa\twa$ 
are provided. Thus, there are still infinite possible values 
for $\xa, \twa, \xa\twa$. 
\end{proof}

\subsection{Proofs for Section~\ref{sec:embed_matmul_security}}

\textsc{Theorem~\ref{thm:embed_matmul_source}.}
{\em
The protocol of \texttt{Embed-MatMul} source layer 
securely realizes the ideal functionalities 
$\mathcal{F}_{\texttt{EmbedMatMulFw}}$ 
and $\mathcal{F}_{\texttt{EmbedMatMulBw}}$ 
in the presence of a semi-honest adversary 
that can corrupt one party.
}
\begin{proof}
We formally provide the definition 
and prove the security of the two idea functionalities 
in Lemma~\ref{thm:embed_matmul_fw} 
and~\ref{thm:embed_matmul_bw}, 
respectively. 
Putting them together, we complete the proof for 
Theorem~\ref{thm:embed_matmul_source}.
\end{proof}

\begin{shadedbox}
\centerline{Forward propagation of 
the \texttt{Embed-MatMul} source layer 
$\mathcal{F}_\texttt{EmbedMatMulFw}$}

\noindent\textbf{Inputs:}
\begin{itemize}[leftmargin=*,label=$\triangleright$]
\item \host inputs features $\xa$, secretly shared and/or encrypted models $\sa, \encb{\ta}, \tb, \ua, \encb{\va}, \encb{\ub}, \vb$, and keys $\ska, \pkb$; 
\item \guest inputs features $\xb$, secretly shared and/or encrypted models $\sb, \enca{\tb}, \ta, \ub, \enca{\vb}, \enca{\ua}, \va$, and keys $\pka, \skb$. 
\end{itemize}

\noindent\textbf{Outputs:}
\begin{itemize}[leftmargin=*,label=$\triangleright$]
\item \host outputs nothing;
\item \guest outputs $Z = \ea\twa + \eb\twb$, where $\edot = \lk(\qdot, \xdot), \qdot = \sdot + \tdot, \twdot = \udot + \vdot$.
\end{itemize}
\end{shadedbox}

\begin{lemma}
\label{thm:embed_matmul_fw}
The protocol $\Pi_{\texttt{EmbedMatMulFw}}$ 
in Line 5-11 of Figure~\ref{fig:embed_matmul_routine} 
securely realizes $\mathcal{F}_{\texttt{EmbedMatMulFw}}$ 
in the presence of a semi-honest adversary 
that can corrupt one party.
\end{lemma}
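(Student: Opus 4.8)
The plan is to mirror the per-party argument of Lemma~\ref{thm:matmul_fw}: for each choice of the single corrupted party I build a probabilistic polynomial-time simulator that, given only that party's inputs (its features $\xdot$, its secretly shared and/or encrypted models, and its keys) together with its prescribed output, produces a view computationally indistinguishable from the real execution. The cleanest route is to split $\Pi_{\texttt{EmbedMatMulFw}}$ into the \emph{embedding} stage (Lines 5--7), which hands each party an additive share $\langle \psi_\mydot, \edot - \psi_\mydot \rangle$ of the looked-up entries, and the \emph{matrix-multiplication} stage (Lines 8--11), which reuses the forward routine of the \texttt{MatMul} layer on those shares; a sequential-composition (hybrid) argument then stitches the two stage simulators together.

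For the embedding stage, I would observe that the only cross-party message each party receives is the single reply of the $\Pi_{\texttt{HE2SS}}$ invocation applied to the encrypted lookup $\lk(\enc{\tdot}_\otherdot, \xdot) = \enc{\lk(\tdot, \xdot)}_\otherdot$; the addition of the locally computed $\lk(\sdot, \xdot)$ in Line 7 requires no communication. Hence this stage's view is simulated exactly as in Lemma~\ref{thm:matmul_fw}, by invoking the simulator for $\mathcal{F}_{\texttt{HE2SS}}$ from Lemma~\ref{thm:he2ss}. Semantic security of the Paillier scheme ensures that the encrypted table $\enc{\tdot}_\otherdot$ carried into this stage is indistinguishable from an encryption of any fixed plaintext, so the simulator never needs the honest party's share $\tdot$.

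For the matrix-multiplication stage, since Lines 8--9 instantiate the three-step \texttt{MatMul} forward routine twice --- once on the shares $\psi_\mydot$ and once on the complementary shares $\edot - \psi_\mydot$ --- I would appeal directly to the already-established security of $\mathcal{F}_{\texttt{MatMulFw}}$ (Lemma~\ref{thm:matmul_fw}) to simulate these messages. For corrupted \host, which outputs nothing, every received message is either an HE2SS reply or a Paillier ciphertext, each simulatable without knowing the underlying plaintext. For corrupted \guest, which outputs $Z = \ea\twa + \eb\twb$, the simulator samples all but one of the output pieces uniformly at random and then back-solves the remaining piece from the prescribed $Z$, exactly as the piece $Z_A^\prime$ is reconstructed in Lemma~\ref{thm:matmul_fw}, so that the simulated shares carry the correct joint distribution.

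The main obstacle is the composition across the two stages: I must argue that feeding the embedding-stage shares $\psi_\mydot$ into the matrix-multiplication stage leaks nothing beyond what the two ideal functionalities already hide. This is settled by the hybrid argument, using the fact that $\psi_\mydot$ is a uniformly random additive share that is never opened and that the only global constraint tying the simulated messages together is the single linear relation that the output pieces reconstruct to $Z$. The combined simulator can therefore sample every intermediate share uniformly subject to this one constraint while staying consistent with the corrupted party's input and output, and a final hybrid step bounds the distinguishing advantage by the negligible semantic-security gap of the HE scheme.
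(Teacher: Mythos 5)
Your proposal is correct and takes essentially the same route as the paper's own proof: both simulate every HE2SS reply via the simulator of Lemma~\ref{thm:he2ss}, handle the two \texttt{MatMul} sub-invocations by the argument of Lemma~\ref{thm:matmul_fw}, and simulate \guest's last message by back-solving $Z_A^{\prime*} = Z - Z_B^{\prime*}$ from the prescribed output $Z$. The only difference is organizational: you package this as a two-stage decomposition with an explicit hybrid composition step, whereas the paper directly enumerates the three messages in \host's view and the four messages in \guest's view and simulates them one by one.
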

\begin{proof}
First, \host receives three messages in total. 
In Line 6, \host receives one message 
by the $\Pi_{\texttt{He2SS}}$ protocol. 
In Line 7-8, \host invokes 
the \texttt{MatMulFw} routine twice, 
and each receives one message 
by the $\Pi_{\texttt{HE2SS}}$ protocol, 
as discussed in our proof of Lemma~\ref{thm:matmul_fw}. 
Whilst all other values are computed locally. 
Obviously, the view of \host can be perfectly simulated 
by simulating the $\mathcal{F}_{\texttt{HE2SS}}$ 
functionality. 

Second, \guest receives four messages in the protocol. 
Symmetrical as \host, the first three messages are received 
via the $\Pi_{\texttt{He2SS}}$ protocol. 
We can simulate these three messages 
by simulating the $\mathcal{F}_{\texttt{HE2SS}}$ 
functionality. 
The fourth message is received in Line 11. 
Denote the simulated versions of 
$Z_{1,B}^\prime$ and $Z_{2,B}^\prime$ 
are $Z_{1,B}^{\prime*}$ and $Z_{2,B}^{\prime*}$, 
respectively. 
Then, we can simulate the fourth message, 
i.e. $Z_A^\prime$, by computing 
$Z_A^{\prime*} = Z - Z_B^{\prime*}$, 
where $Z_B^{\prime*} = Z_{1,B}^{\prime*} + Z_{2,B}^{\prime*}$. 
Since both $Z_A^\prime$ (or $Z_B^\prime$) and 
$Z_A^{\prime*}$ (or $Z_B^{\prime*}$) 
represent one piece of sharing of the output $Z$, 
they have the same probability distribution. 
As a result, we perfectly simulate the view of \guest. 
\end{proof}

\begin{shadedbox}
\centerline{Backward propagation of 
the \texttt{Embed-MatMul} source layer 
$\mathcal{F}_\texttt{EmbedMatMulBw}$}

\noindent\textbf{Inputs:}
\begin{itemize}[leftmargin=*,label=$\triangleright$]
\item \host inputs features $\xa$, secretly shared activations $\psi_A, \eb - \psi_B$, secretly shared and/or encrypted models $\sa, \encb{\ta}, \tb, \ua, \encb{\va}, \\\encb{\ub}, \vb$, and keys $\ska, \pkb$; 
\item \guest inputs features $\xb$, derivatives $\nabla Z$, secretly shared activations $\ea - \psi_A, \psi_B$, secretly shared and/or encrypted models $\sb, \enca{\tb}, \ta, \ub, \enca{\vb}, \enca{\ua}, \va$, and keys $\pka, \skb$. 
\end{itemize}

\noindent\textbf{Outputs:}
\begin{itemize}[leftmargin=*,label=$\triangleright$]
\item \host outputs $\phi, \xi, \rho_A, \nabla\qa - \rho_B$;
\item \guest outputs $\nabla\twa - \phi, \nabla\twb - \xi, \nabla\qa - \rho_A, \rho_B$, where $\nabla\twdot = \edot^T \nabla Z, \nabla\edot = \nabla Z \twdot^T, \nabla\qdot = \dlk(\nabla\edot, \xdot)$.
\end{itemize}
\end{shadedbox}

\begin{lemma}
\label{thm:embed_matmul_bw}
The protocol $\Pi_{\texttt{EmbedMatMulBw}}$ 
in Line 12-26 of Figure~\ref{fig:embed_matmul_routine} 
securely realizes $\mathcal{F}_{\texttt{EmbedMatMulBw}}$ 
in the presence of a semi-honest adversary 
that can corrupt one party.
\end{lemma}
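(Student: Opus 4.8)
The plan is to follow the same ideal/real simulation strategy used for Lemma~\ref{thm:matmul_bw} and Lemma~\ref{thm:embed_matmul_fw}, treating $\Pi_{\texttt{EmbedMatMulBw}}$ as a sequential composition of three kinds of atomic operations: purely local homomorphic computations, invocations of the $\Pi_{\texttt{HE2SS}}$ sub-protocol, and local SS updates of the shared models $\sdot,\tdot,\udot,\vdot$. By the sequential composition theorem for semi-honest security it then suffices to exhibit, for each singly-corrupted party, a polynomial-time simulator whose output is indistinguishable from that party's real view, reusing the $\mathcal{F}_{\texttt{HE2SS}}$ simulator from Lemma~\ref{thm:he2ss} together with the IND-CPA security of the Paillier cryptosystem. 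Combining the resulting lemma with Lemma~\ref{thm:embed_matmul_fw} then yields Theorem~\ref{thm:embed_matmul_source}.

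I would first handle a corrupted \host. Its incoming traffic consists of the encrypted derivative $\encb{\nabla Z}$ (Line 12) together with the messages delivered by those $\Pi_{\texttt{HE2SS}}$ calls in which \host plays the key-owning (receiving) role. The simulator fixes a single random plaintext $\nabla Z^{*}$, encrypts it \emph{once} under $\pkb$ to obtain $\encb{\nabla Z^{*}}$, and thereafter derives every downstream simulated ciphertext (e.g.\ $\encb{\psi_A^{T}\nabla Z^{*}}$ and $\encb{\nabla\ea^{*}}$) by applying exactly the homomorphic operations the real protocol applies; this is what guarantees the correct \emph{joint} distribution of the ciphertexts \host sees, not merely their marginals. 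Each HE2SS message is produced by the HE2SS simulator, and the shares \host ultimately outputs ($\phi,\xi,\rho_A,\nabla\qa-\rho_B$) are copied from the ideal functionality or drawn uniformly, since each is either an independent blinding term or a single additive share of a secret. Indistinguishability then reduces to semantic security for the ciphertexts and to the perfectness of SS shares for the plaintext pieces. The corrupted \guest case is symmetric, with the roles of $\pka/\skb$ swapped and $\nabla Z$ now an honest input rather than a simulated ciphertext.

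The main obstacle I expect is the embedding-table stage (Lines 21--26) rather than the weight stage, which is essentially a replay of Lemma~\ref{thm:matmul_bw}. Here the backward operator $\dlk(\cdot,\xdot)$ is evaluated homomorphically on the encrypted derivative $\enc{\edot}_\otherdot$, and $\nabla\qa=\dlk(\nabla\ea,\xa)$ must be assembled from two contributions --- one originating from \host's plaintext share $\ua$ and one from the share $\va$ that \host holds only as $\encb{\va}$ --- which is exactly why $\nabla\qa$ appears doubly shared in the output spec (as $\rho_A,\nabla\qa-\rho_A$ and as $\rho_B,\nabla\qa-\rho_B$). I would need to argue carefully that (i) performing the scatter-add $\dlk$ over ciphertexts reveals nothing about the index multiset of $\xdot$ to the non-owning party, so that no information beyond the ideal output leaks, and (ii) the two HE2SS transformations that produce the double sharing compose without correlating the blinding terms $\rho_A,\rho_B$. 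The remaining steps --- local homomorphic arithmetic and the SS model updates --- are routine, so the proof effort concentrates on showing these encrypted-$\dlk$ transcripts are faithfully reproducible by the simulator.
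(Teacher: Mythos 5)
Your high-level strategy --- ideal-real simulation, reuse of the $\mathcal{F}_{\texttt{HE2SS}}$ simulator from Lemma~\ref{thm:he2ss}, and reduction of ciphertext indistinguishability to the semantic security of Paillier --- is the same as the paper's, but your simulation of a corrupted \host has a concrete gap that would make it fail as written. You enumerate \host's incoming traffic as ``$\encb{\nabla Z}$ plus HE2SS deliveries,'' whereas in the protocol of Figure~\ref{fig:embed_matmul_routine} \host receives six messages: $\encb{\nabla Z}$ \emph{and} $\encb{\nabla Z \va^T}$ (Line 12), $\encb{\nabla\twa - \phi}$ (Line 18), $\encb{\nabla\twb - \xi}$ (Line 19), $\nabla\qb - \rho_B$ (Line 23, the only HE2SS delivery), and $\encb{\nabla\qa - \rho_A}$ (Line 25). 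The extra ciphertexts are there because additive HE does not let \host multiply two ciphertexts --- $\va$ is \guest's plaintext secret, so \guest must compute and send $\encb{\nabla Z\va^T}$ --- and because \host must refresh its encrypted copies of the model shares held by \guest after the SS update. This is exactly where your ``single seed'' simulator breaks: fixing one plaintext $\nabla Z^*$ and deriving everything downstream homomorphically cannot produce $\encb{\nabla Z\va^T}$ or the three share-update ciphertexts, since their plaintexts depend on secrets ($\va$, $\phi$, $\xi$, $\rho_A$) that are not functions of $\nabla Z^*$ available to the simulator. Even your own example $\encb{\nabla\ea^*}$ is not derivable from $\encb{\nabla Z^*}$ alone: in the real protocol $\encb{\nabla\ea} = \encb{\nabla Z}\,\ua^T + \encb{\nabla Z\va^T}$ consumes the separately received message. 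The paper's simulator instead picks \emph{independent} random plaintexts for all five ciphertext messages, encrypts each under $\pkb$ (justified because each underlying plaintext is either the honest input $\nabla Z$ or is masked by an independent random secret), and only then derives the locally computed ciphertexts homomorphically from these simulated messages --- which also resolves the joint-distribution concern you raise.

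Separately, the two ``main obstacles'' you budget effort for are non-issues in this protocol. The scatter-add $\dlk(\enc{\nabla\edot}_\otherdot, \xdot)$ is executed locally by the party that owns the indices $\xdot$; the other party only ever sees the resulting ciphertext or a blinded share of its decryption, so no index multiset ever crosses party boundaries and nothing needs to be proven about it. Likewise, $\rho_A$ and $\rho_B$ come from two separate $\Pi_{\texttt{HE2SS}}$ invocations whose randomness is generated independently by construction, and they blind two \emph{different} objects, $\nabla\qa$ and $\nabla\qb$ --- the double appearance of $\nabla\qa$ in the functionality's output box is a typo for $\nabla\qb - \rho_B$, as the paper's own proof makes clear --- so there is no single quantity that is ``doubly shared'' and no decorrelation argument to make. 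The effort should instead go into the correct enumeration and independent simulation of the six received messages described above, plus the (straightforward but not literally symmetric) treatment of a corrupted \guest, whose view mixes three HE2SS deliveries with three ciphertexts of random secrets under $\pka$.
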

\begin{proof}
First, \host receives six messages, which are 
$\encb{\nabla Z}$, $\encb{\nabla Z \va^T}$ (Line 12), 
$\encb{\nabla\twa - \phi}$ (Line 18), 
$\encb{\nabla\twb - \xi}$ (Line 19), 
$\nabla\qb - \rho_B$ (Line 23), 
and $\encb{\nabla\qa - \rho_A}$ (Line 25), 
respectively. 
For $\nabla\qb - \rho_B$, which is received 
from the $\Pi_{\texttt{HE2SS}}$ protocol, 
we can simulate this message by simulating 
the $\mathcal{F}_\texttt{HE2SS}$ functionality 
as discussed in Lemma~\ref{thm:he2ss}. 
For the other five messages, 
we construct a simulator that 
randomly picks plaintexts 
$\nabla Z^*$, $(\nabla Z \va^T)^*$
$(\nabla\twa - \phi)^*$, 
$(\nabla\twb - \xi)^*$, $(\nabla\qa - \rho_A)^*$ 
and encrypts them using $\pkb$ to obtain 
$\encb{\nabla Z^*}$, $\encb{(\nabla Z \va^T)^*}$, 
$\encb{(\nabla\twa - \phi)^*}$, 
$\encb{(\nabla\twb - \xi)^*}$, 
$\encb{(\nabla\qa - \rho_A)^*}$. 
Since $\nabla Z$ is the input and 
$\va$, $\nabla\twa - \phi$, $\nabla\twb - \xi$, 
$\nabla\qa - \rho_A$ 
are random secrets, 
$\nabla Z^*$ and $\nabla Z$ 
(or $(\nabla Z \va^T)^*$ and $\nabla Z \va^T$, 
or $(\nabla\twa - \phi)^*$ and $\nabla\twa - \phi$, 
or $(\nabla\twb - \xi)^*$ and $\nabla\twb - \xi$, 
or $(\nabla\qa - \rho_A)^*$ and $\nabla\qa - \rho_A$)
share the same probability distribution. 
Furthermore, without \guest's secret key $\skb$, 
the ciphertexts 
$\encb{\nabla Z^*}$ and $\encb{\nabla Z}$ 
(or $\encb{(\nabla\twa - \phi)^*}$ and $\encb{\nabla\twa - \phi}$, 
or $\encb{(\nabla\twb - \xi)^*}$ and $\encb{\nabla\twb - \xi}$, 
or $\encb{(\nabla\qa - \rho_A)^*}$ and $\encb{\nabla\qa - \rho_A}$)
are computationally indistinguishable 
from the perspective of \host. 
The other values in the view of \host 
are computed locally. 
For instance, the simulator can simulate 
$\encb{\nabla\ea}$ by computing 
$\encb{\nabla\ea^*} = \encb{\nabla Z^*} \ua^T + 
\encb{(\nabla Z \va^T)^*}$. 
Again, they are computationally indistinguishable 
from the perspective of \host. 
Consequently, the view of \host can be simulated perfectly. 

For \guest, there are three messages received by invoking 
the $\Pi_{\texttt{HE2SS}}$ protocol 
(Line 13, 15, and 22), 
and three messages that are ciphertexts of secrets 
(Line 17, 20, 26). 
For the first three messages, 
we can simulate them by simulating 
the $\mathcal{F}_\texttt{HE2SS}$ functionality. 
For the last three messages, 
we simulate them in a similar way 
as simulating those of \host, 
i.e., by randomly picking ciphertexts 
and encrypting them using $\pka$. 
Again, since the messages are ciphertexts of random secrets, 
the simulation shares the same probability distribution 
as the original view 
and they are computationally indistinguishable 
from the perspective of \guest. 
As a result, the simulator perfectly simulates 
the view of \guest. 
\end{proof}

\textsc{Theorem~\ref{thm:matmul_top}.}
{\em
Given $Z, \nabla Z$ in the \texttt{Embed-MatMul} source layer, 
there are infinite possible values for 
$\xa, \qa, \twa, \ea, \ea\twa$. 
}
\begin{proof}
First, we consider the linear equation $Z = Z_A + Z_B$ 
where $Z_A = \ea\twa, Z_B = \eb\twb$. 
With only $Z \in \mathbb{R}^{BS \times OUT}$ 
being known to \guest, 
there are $BS \times OUT$ known values in total. 
However, since both $Z_A, Z_B$ are unknown, 
which make up to be $2 \times BS \times OUT$ variables 
in the equation. 
As a result, there must be infinite possible solution 
to the equation, which means 
there are infinite possible values for $Z_A = \xa\ea$. 
Then, for any possible $\xa\ea$, 
given an arbitrary invertible matrix 
$M \in \mathbb{R}^{IN_A \times IN_A}$, 
we have $(\ea M^{-1}) (M \twa) = \ea\twa$. 
Consequently, there are infinite possible values 
for $\ea, \twa$, since both $\ea\twa$ and $M$ are arbitrary. 
Similarly, for any possible $\ea$, 
there are also infinite possible values 
for $\xa, \qa$. 

For $\nabla Z$, since it is computed locally on \guest, 
no extra information related to 
$\xa, \qa, \twa, \ea, \ea\twa$ 
are provided. Thus, there are still infinite possible values 
for $\xa, \qa, \twa, \ea, \ea\twa$. 
\end{proof}

\subsection{Proofs for Common Sub-Routines}

Algorithm~\ref{alg:he2ss} and Algorthm~\ref{alg:ss2he} 
are two commonly used sub-routines in our work. 
In this subsection, we prove that the algorithm protocols 
securely realize the ideal functionalities 
as defined below. 

\medskip

\begin{shadedbox}
\centerline{Transformation from HE variables to SS variables 
$\mathcal{F}_{\texttt{HE2SS}}$}

\noindent\textbf{Inputs:}
\begin{itemize}[leftmargin=*,label=$\triangleright$]
\item \party $\mydot$ inputs a ciphertext $\enc{v}_\otherdot$ and the other party's public key $\pkotherdot$; 
\item \party $\otherdot$ inputs the secret key $\skotherdot$. 
\end{itemize}

\noindent\textbf{Outputs:}
\begin{itemize}[leftmargin=*,label=$\triangleright$]
\item \party $\mydot$ outputs a random value $\phi$;
\item \party $\otherdot$ outputs $v - \phi$.
\end{itemize}
\end{shadedbox}

\begin{lemma}
\label{thm:he2ss}
The protocol $\Pi_{\texttt{HE2SS}}$ 
in Algorithm~\ref{alg:he2ss} 
securely realizes $\mathcal{F}_{\texttt{HE2SS}}$ 
in the presence of a semi-honest adversary 
that can corrupt one party.
\end{lemma}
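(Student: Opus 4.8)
The plan is to give a standard ideal-real simulation argument, treating separately the two ways in which a semi-honest adversary may corrupt a single party. In each case I would exhibit a probabilistic polynomial-time simulator that, given only the corrupted party's input and its prescribed output under $\mathcal{F}_{\texttt{HE2SS}}$, produces a view indistinguishable from the one in a real run of $\Pi_{\texttt{HE2SS}}$ (Algorithm~\ref{alg:he2ss}). Since exactly one message is sent on the wire, each simulator reduces to reproducing (at most) that single ciphertext.

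First I would handle corruption of the sender \party $\mydot$, which holds $\enc{v}_\otherdot$ and $\pkotherdot$ and outputs the mask $\phi$. Inspecting Algorithm~\ref{alg:he2ss}, this party only samples $\phi$, forms $\enc{v-\phi}_\otherdot = \enc{v}_\otherdot - \phi$ by homomorphic scalar subtraction, and transmits it; crucially it \emph{receives} no message. Hence its entire view is a deterministic function of its input and its own randomness $\phi$. The simulator $\mathcal{S}_\mydot$, given the input together with the output $\phi$, simply fixes the internal randomness to the supplied $\phi$ and runs the honest code. Because $\phi$ is drawn from the same distribution in both worlds, the simulated and real views are identically distributed, so this case is a perfect simulation.

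The more delicate case is corruption of the receiver \party $\otherdot$, which inputs $\skotherdot$ and outputs $v-\phi$; its view is the single incoming ciphertext $\enc{v-\phi}_\otherdot$ together with its decryption. The simulator $\mathcal{S}_\otherdot$, given $\skotherdot$ and the target output $v-\phi$, recovers $\pkotherdot$ from the key pair and sets the incoming message to a fresh encryption $\textup{\ul{Enc}}(v-\phi,\pkotherdot)$. The main obstacle here is that, unlike the typical HE argument used in Lemma~\ref{thm:matmul_bw}, I \emph{cannot} invoke semantic security to hide the content, since the corrupted party owns the decryption key; the simulated and real ciphertexts must therefore agree as distributions, not merely encrypt indistinguishable plaintexts. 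I would close this gap with two observations: (i) because $\phi$ is uniform and independent of $v$, the decrypted value $v-\phi$ is distributed identically in both worlds and leaks nothing about $v$ (a one-time-pad masking argument); and (ii) the operation $\enc{v}_\otherdot - \phi$ multiplies the original ciphertext by a freshly randomized encryption of $-\phi$, which re-randomizes the result so that the transmitted ciphertext is uniform among all ciphertexts decrypting to $v-\phi$ --- exactly the distribution of $\mathcal{S}_\otherdot$'s fresh encryption. Combining (i) and (ii) yields an identically distributed view, and together with the sender case this establishes that $\Pi_{\texttt{HE2SS}}$ securely realizes $\mathcal{F}_{\texttt{HE2SS}}$ against a semi-honest adversary corrupting one party.
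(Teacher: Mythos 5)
Your proposal is correct and takes essentially the same route as the paper's proof: corruption of the sender \party $\mydot$ is dismissed because it receives no messages, and corruption of the receiver \party $\otherdot$ is handled by a simulator that outputs a fresh encryption of $v-\phi$ under $\pkotherdot$. You are in fact somewhat more careful than the paper, which merely asserts that the fresh ciphertext and the real (homomorphically computed) ciphertext ``obviously share the same probability distribution,'' whereas you explicitly justify this via the re-randomization inherent in the scalar-subtraction step together with the one-time-pad masking of $v$ by the uniform $\phi$.
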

\begin{proof}
Since \party $\mydot$ receives no messages 
from \party $\otherdot$, 
it is easy to know that 
our protocol is secure against the corruption 
of \party $\mydot$. 
Hence, we only need to discuss how to 
simulate the view of \party $\otherdot$. 
To do so, the simulator encrypts $v-\phi$ with $\pkb$ 
to obtain $\enc{v - \phi}^*_\otherdot$. 
Here we differentiate $\enc{v - \phi}^*\otherdot$ 
and the $\enc{v - \phi}_\otherdot$ 
in Algorithm~\ref{alg:he2ss} 
since they are from different times of encryption. 
Obviously, they share the same probability distribution 
and are computationally indistinguishable 
from the perspective of \party $\otherdot$. 
As a result, it perfectly simulates the view 
of \party $\otherdot$. 
\end{proof}

\begin{shadedbox}
\centerline{Transformation from SS variables to HE variables
$\mathcal{F}_{\texttt{SS2HE}}$}

\noindent\textbf{Inputs:}
\begin{itemize}[leftmargin=*,label=$\triangleright$]
\item \party $\mydot$ inputs a piece of SS $v_\mydot$ and keys $\pkdot, \pkotherdot$; 
\item \party $\otherdot$ inputs a piece of SS $v_\otherdot$ and keys $\pkotherdot, \pkdot$. 
\end{itemize}

\noindent\textbf{Outputs:}
\begin{itemize}[leftmargin=*,label=$\triangleright$]
\item \party $\mydot$ outputs a ciphertext $\enc{v}_\otherdot$; 
\item \party $\otherdot$ outputs a ciphertext $\enc{v}_\mydot$, where $v = v_\mydot + v_\otherdot$.
\end{itemize}
\end{shadedbox}

\begin{lemma}
\label{thm:ss2he}
The protocol $\Pi_{\texttt{SS2HE}}$ 
in Algorithm~\ref{alg:ss2he} 
securely realizes $\mathcal{F}_{\texttt{SS2HE}}$ 
in the presence of a semi-honest adversary 
that can corrupt one party.
\end{lemma}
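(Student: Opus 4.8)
The plan is to follow the same simulation-based template used in the preceding lemmas (e.g., Lemma~\ref{thm:he2ss} and Lemma~\ref{thm:matmul_bw}), while exploiting the symmetry of $\Pi_{\texttt{SS2HE}}$ between the two parties. Recall that in Algorithm~\ref{alg:ss2he} each party encrypts its own share under its own public key and ships the resulting ciphertext across: \party $\mydot$ sends $\enc{v_\mydot}_\mydot$ while \party $\otherdot$ sends $\enc{v_\otherdot}_\otherdot$, after which \party $\mydot$ forms its output $\enc{v}_\otherdot = \enc{v_\otherdot}_\otherdot + v_\mydot$ by scalar addition and \party $\otherdot$ symmetrically forms $\enc{v}_\mydot = \enc{v_\mydot}_\mydot + v_\otherdot$. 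Thus each corrupted party receives exactly one incoming message, and because the two corruption cases are mirror images it suffices to give the simulator for one of them and invoke symmetry for the other.

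First I would handle the corruption of \party $\mydot$. Its view consists of the input $v_\mydot$, the keys $\pkdot, \pkotherdot$, the single received ciphertext $\enc{v_\otherdot}_\otherdot$, and its locally derived output $\enc{v}_\otherdot$. Given only $v_\mydot$ and $\pkotherdot$, the simulator draws a random plaintext $v_\otherdot^{*}$ and encrypts it under $\pkotherdot$ to produce $\enc{v_\otherdot^{*}}_\otherdot$ as the simulated message. The crucial point is that \party $\mydot$ does not hold $\skotherdot$, so by the semantic security of the Paillier cryptosystem $\enc{v_\otherdot^{*}}_\otherdot$ and the real $\enc{v_\otherdot}_\otherdot$ are computationally indistinguishable. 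Every remaining element of the view is computed locally from this message; in particular the simulator reproduces the output as $\enc{v}_\otherdot^{*} = \enc{v_\otherdot^{*}}_\otherdot + v_\mydot$, which stays indistinguishable from the real output because it is the identical deterministic post-processing applied to indistinguishable ciphertexts. Hence the simulated and real views of \party $\mydot$ coincide up to computational indistinguishability.

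Second, the corruption of \party $\otherdot$ is argued by the very same construction with the roles of $\mydot$ and $\otherdot$ swapped: the simulator encrypts a random $v_\mydot^{*}$ under $\pkdot$ to simulate the incoming $\enc{v_\mydot}_\mydot$ and derives the output $\enc{v}_\mydot^{*} = \enc{v_\mydot^{*}}_\mydot + v_\otherdot$ locally, relying on the fact that \party $\otherdot$ lacks $\skdot$. Combining the two cases establishes that $\Pi_{\texttt{SS2HE}}$ securely realizes $\mathcal{F}_{\texttt{SS2HE}}$ against a semi-honest adversary that corrupts one party.

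I expect the only delicate point---rather than a genuine obstacle---to be arguing indistinguishability of the \emph{output} ciphertext and not merely of the incoming message. Since the corrupted party never obtains the foreign secret key, its output is itself an undecryptable cipher, so I must state explicitly that a deterministic local operation (scalar addition) applied to computationally indistinguishable inputs yields computationally indistinguishable outputs, thereby preserving the joint distribution of view and output. Everything else reduces directly to semantic security and to the symmetry already exploited in the earlier lemmas.
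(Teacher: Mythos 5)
There is a genuine gap, and it sits exactly at the point you flagged as ``delicate'' at the end. In $\mathcal{F}_{\texttt{SS2HE}}$ the corrupted party's \emph{output} is itself a ciphertext $\enc{v}_\otherdot$ delivered by the functionality, and in the real protocol that output is a deterministic function of the single received message, namely $\enc{v_\otherdot}_\otherdot + v_\mydot$. The ideal-real definition requires the \emph{joint} distribution of (simulated view, ideal outputs) to be indistinguishable from (real view, real outputs). Your simulator fabricates the incoming message as a fresh encryption of an independent random plaintext $v^*_\otherdot$, so the ciphertext obtained by applying \party $\mydot$'s local postprocessing to that simulated message encrypts $v^*_\otherdot + v_\mydot \neq v$ and, as a bit string, differs from the output ciphertext the functionality hands to \party $\mydot$. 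A distinguisher that knows $v_\mydot$ (it knows all inputs) takes the incoming message $c$ from the view, computes $c + v_\mydot$, and tests equality with \party $\mydot$'s output: this holds always in the real world and almost never in your ideal world, so the simulation fails without any need to decrypt. Your closing argument --- that a deterministic local operation applied to indistinguishable inputs yields indistinguishable outputs, ``thereby preserving the joint distribution of view and output'' --- is precisely what breaks: in the ideal world the output is \emph{not} derived from the simulated view; it is produced independently by the functionality, so the correlation present in the real world is destroyed.

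The paper's proof avoids this by building the simulator \emph{backwards from the output}: it sets $\enc{v_\otherdot}^*_\otherdot = \enc{v}_\otherdot - v_\mydot$, i.e., it homomorphically subtracts the corrupted party's own share from the output ciphertext it is given. Then the induced output is exactly $\enc{v}_\otherdot$ (the scalar subtraction and the protocol's scalar addition cancel), and the simulated message is a ciphertext of the correct plaintext $v_\otherdot$ under $\pkotherdot$, hence distributed as in the real execution; semantic security is only invoked to argue that ciphertexts produced ``at different times of encryption'' are indistinguishable to a party lacking $\skotherdot$. Your symmetry observation between the two corruption cases is fine and matches the paper; it is the simulator construction itself that must be repaired along these lines.
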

\begin{proof}
Since both parties have symmetric routines, 
we only describe how to simulate the view 
of \party $\mydot$. 
The only messages received in the protocol is 
$\enc{v_\otherdot}_\otherdot$. 
To simulate it, the simulator computes 
$\enc{v_\otherdot}_\otherdot^* = 
\enc{v}_\otherdot - v_\mydot$. 
Here we differentiate $\enc{v_\otherdot}_\otherdot^*$ 
and the $\enc{v_\otherdot}_\otherdot$ 
in Algorithm~\ref{alg:ss2he} 
since they are from different times of encryption. 
Obviously, they have the same probability distribution 
since $v = v_\mydot + v_\otherdot$, 
and they are computationally indistinguishable 
from the perspective of \party $\mydot$. 
As a result, the view of \party $\mydot$ 
can be perfectly simulated.
\end{proof}

\section{Extension to Federated Top Models}
\label{sec:ext_fed_top}

Although the top models are usually non-federated 
to address efficiency in practice, 
we can also use a federated top model 
so that \guest cannot get access to $Z$ or $\nabla Z$, 
strengthening the security guarantees. 
In this section, we discuss how to adapt 
our work to the federated top model. 
Without loss of generality, 
we assume the top model utilizes 
the SS technique (e.g., SecureML).

First, for the forward propagation, 
our source layers should 
output an SS variable $\ss{Z_A^\prime, Z_B^\prime}$ 
satisfying $Z_A^\prime + Z_B^\prime = Z$. 
In fact, this can be easily done with 
our algorithm protocols, as described below. 
\begin{itemize}[leftmargin=*,label=$\triangleright$]
\item 
For the \texttt{MatMul} source layer, we have the fact that  
$Z_\mydot^\prime = \xdot\udot + \eps_\mydot 
+ \xodot\vodot - \eps_\otherdot$, 
where $\eps_\mydot, \eps_\otherdot$ are generated by 
different parties. 
Hence, $\ss{Z_A^\prime, Z_B^\prime}$ 
forms an SS variable of $Z$ logically. 
As shown in Line 1 of Figure~\ref{fig:matmul_fed_top}, 
\host and \guest return $Z_A^\prime$ and $Z_B^\prime$, 
respectively. 
By doing so, we can easily adapt 
the \texttt{MatMul} source layer 
to the federated top model.

\item
Similarly, for the \texttt{Embed-MatMul} source layer, 
we can also obtain the SS variable 
$\ss{Z_A^\prime, Z_B^\prime}$ 
satisfying $Z = Z_A^\prime, Z_B^\prime$ 
via Line 5-10 of Figure~\ref{fig:embed_matmul_routine}. 
Thus, we directly output them to the federated top model. 
\end{itemize}

Second, for the backward propagation, 
\guest no longer gets access to $\nabla Z$. 
Instead, both parties take as input an SS variable 
$\ss{\eps, \nabla Z - \eps}$. 
For the \texttt{MatMul} source layer, 
the major difference is that \guest cannot compute 
model gradients $\nabla\twb = \xb^T \nabla Z$ alone. 
To tackle this problem, we turn the SS variable 
into an HE variable $\enca{\nabla Z}$ for \guest 
via Algorithm~\ref{alg:ss2he} 
(Line 3 of Figure~\ref{fig:matmul_fed_top}). 
Next, \guest proceeds to compute the encrypted model gradients 
$\enca{\nabla\twb} = \xb^T \enca{\nabla Z}$, 
which are then transformed into an SS variable 
$\ss{\nabla\twb - \phi_B, \phi_B}$ 
(Line 4-5 of Figure~\ref{fig:matmul_fed_top}). 
Eventually, the update will be performed 
based on the secretly shared mode gradients. 
For the \texttt{Embed-MatMul} source layer, 
the routines are more complex. 
We refer interested readers to 
Figure~\ref{fig:embed_matmul_fed_top} 
for more details. 

To summarized, the federated source layers 
proposed in this work can also be integrated with 
federated top models.

\begin{algorithm}[!t]
\caption{{The procedure to transform an SS variable 
$\langle v_\mydot, v_\otherdot \rangle$ 
into an HE variable $\enc{v}$ 
where $v = v_\mydot + v_\otherdot$.}}
\label{alg:ss2he}
\DontPrintSemicolon
\SetKwFunction{SStoHE}{SS2HE}
\SetKwProg{Fn}{Function}{:}{}

\Fn{\SStoHE{$v_\mydot$}}
{
	Enc and Send the SS piece of this party $v_\mydot$ using $\pkdot$\;
	Recv the encrypted SS piece of the other party $\enc{v_\otherdot}_\otherdot$\;
	\Return $\enc{v}_\otherdot = \enc{v_\otherdot}_\otherdot + v_\mydot$
}
\end{algorithm}

\begin{figure}[!t]
\centering
\includegraphics[width=3.4in]{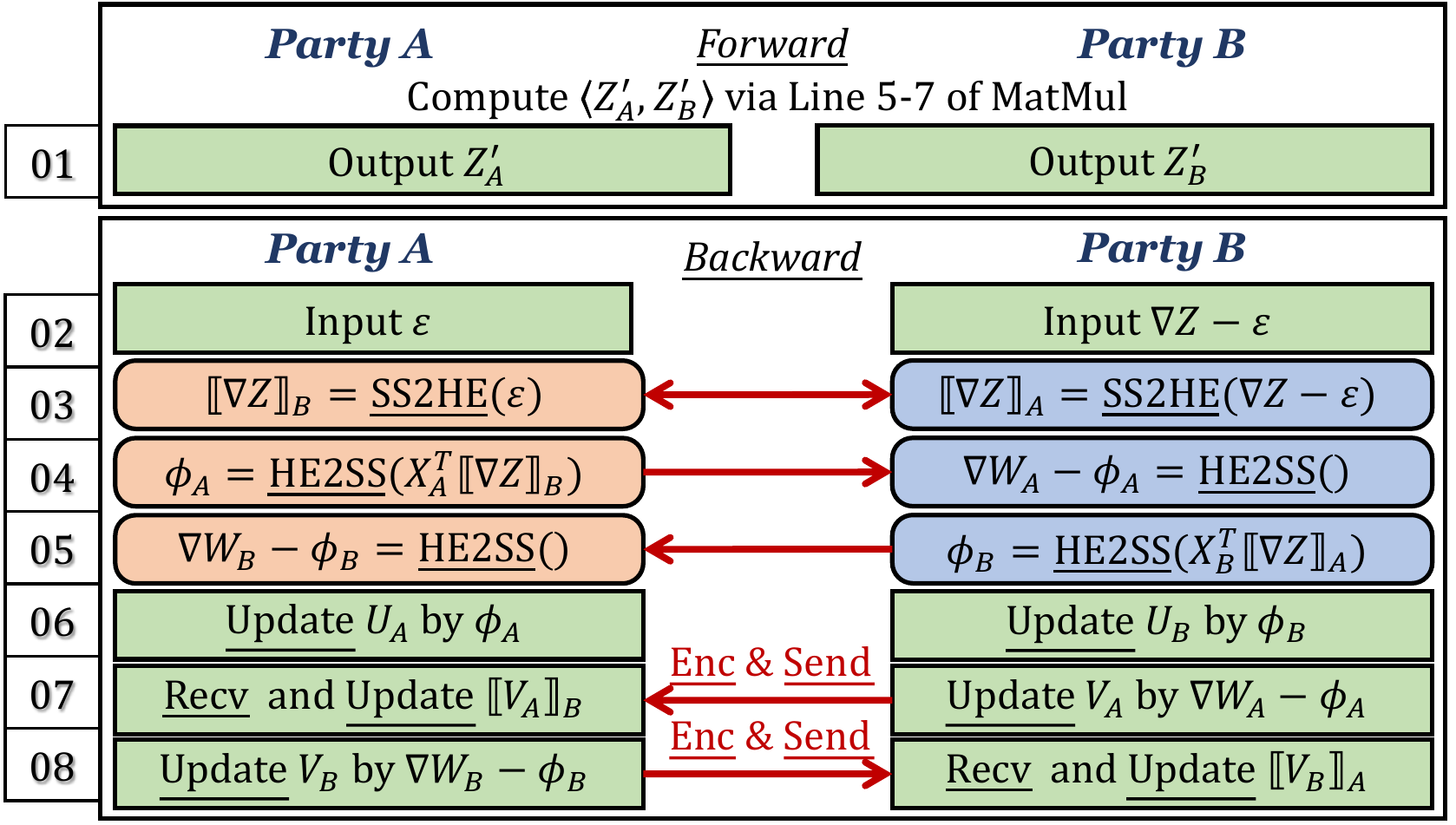}
\caption{{
The \texttt{MatMul} source layer followed by an SS-based top model.
All cross-party transmission (red arrows) 
are protected by HE or SS.
}}
\label{fig:matmul_fed_top}
\end{figure}

\begin{figure}[!t]
\centering
\includegraphics[width=3.4in]{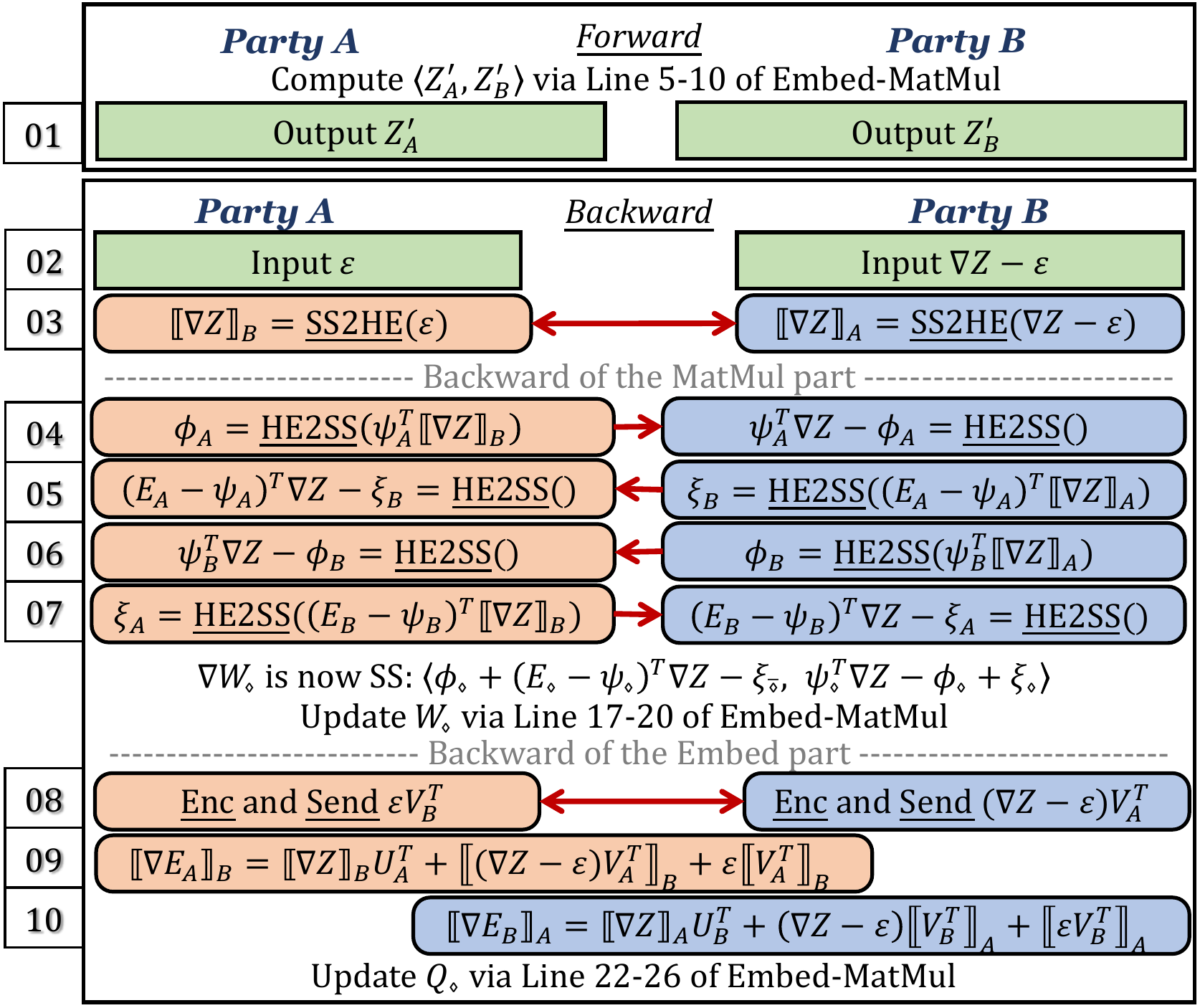}
\caption{{
The \texttt{Embed-Matmul} source layer followed by an SS-based top model.
All cross-party transmission (red arrows) 
are protected by HE or SS.
}}
\label{fig:embed_matmul_fed_top}
\end{figure}

\subsection{Security Analysis}
In this subsection, 
we provide the security analysis of \alg 
when the top model is also a federated module. 
Without loss of generality, 
we assume the model weights of the federated top model 
are also secretly shared and/or encrypted. 
Furthermore, since our work focuses on 
the security guarantees of federated source layers, 
we assume the federated top model 
securely realizes the following ideal functionality 
for simplicity.
\medskip

\begin{shadedbox}
\centerline{Forward and backward propagation of 
SS-based top model $\mathcal{F}_\texttt{TopSS}$}

\noindent\textbf{Inputs:}
\begin{itemize}[leftmargin=*,label=$\triangleright$]
\item \host inputs secretly shared activations $Z_A^\prime$, secretly shared and/or encrypted models of the top model,  and keys $\ska, \pkb$; 
\item \guest inputs inputs secretly shared activations $Z_B^\prime$, labels $y$, secretly shared and/or encrypted models of the top model, and keys $\pka, \skb$.
\end{itemize}

\noindent\textbf{Outputs:}
\begin{itemize}[leftmargin=*,label=$\triangleright$]
\item \host outputs $\eps$;
\item \guest outputs $\nabla Z - \eps$, where $\nabla Z$ denotes the backward derivatives of $Z = Z_A^\prime + Z_B^\prime$.
\end{itemize}
\end{shadedbox}

\medskip

Then, we identify the ideal functionality 
of ML training of \alg with an SS-based top model, 
and provide the security guarantees 
in Theorem~\ref{thm:fed_source_top_security}. 

\medskip

\begin{shadedbox}
\centerline{ML Training of \alg with an SS-based top model 
$\mathcal{F}_\texttt{ML}$}

\noindent\textbf{Inputs:}
\begin{itemize}[leftmargin=*,label=$\triangleright$]
\item \host inputs features $\xa$ and keys $\ska, \pkb$; 
\item \guest inputs features $\xb$, labels $y$, and keys $\pka, \skb$.
\end{itemize}

\noindent\textbf{Outputs:}
\begin{itemize}[leftmargin=*,label=$\triangleright$]
\item Both parties output the trained federated source layers and \guest further outputs the trained federated top model (model weights are secretly shared or encrypted).
\end{itemize}
\end{shadedbox}

\begin{theorem}
\label{thm:fed_source_top_security}
Assume the federated top model $\Pi_\texttt{TopSS}$ 
securely realizes $\mathcal{F}_\texttt{TopSS}$ 
in the presence of a semi-honest adversary 
that can corrupt one party. 
The training of \alg using the \texttt{MatMul} 
and \texttt{Embed-MatMul} federated source layers 
securely realizes $\mathcal{F}_\texttt{ML}$ 
in the presence of a semi-honest adversary 
that can corrupt one party. 
\end{theorem}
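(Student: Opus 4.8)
The plan is to establish this as a composition result, since every building block has already been shown (or is assumed) to securely realize its own ideal functionality: the forward and backward source-layer protocols ($\mathcal{F}_\texttt{MatMulFw}/\mathcal{F}_\texttt{EmbedMatMulFw}$ and $\mathcal{F}_\texttt{MatMulBw}/\mathcal{F}_\texttt{EmbedMatMulBw}$ via Lemmas~\ref{thm:matmul_fw}--\ref{thm:matmul_bw} and~\ref{thm:embed_matmul_fw}--\ref{thm:embed_matmul_bw}, in their federated-top variants of Figures~\ref{fig:matmul_fed_top} and~\ref{fig:embed_matmul_fed_top}), the transformation sub-routines $\mathcal{F}_\texttt{HE2SS}$ and $\mathcal{F}_\texttt{SS2HE}$ (Lemmas~\ref{thm:he2ss} and~\ref{thm:ss2he}), and—by hypothesis—the top model $\Pi_\texttt{TopSS}$ realizing $\mathcal{F}_\texttt{TopSS}$. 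First I would observe that one training iteration of \alg is exactly a sequential composition of three ideal-functionality calls: a forward source layer producing the shares $\ss{Z_A^\prime, Z_B^\prime}$ with $Z_A^\prime + Z_B^\prime = Z$, then $\mathcal{F}_\texttt{TopSS}$ consuming these shares and emitting the backward shares $\ss{\eps, \nabla Z - \eps}$, then a backward source layer consuming those and updating the secretly shared model weights. The full protocol $\Pi_\texttt{ML}$ is the iteration of this pattern over all mini-batches and epochs, with the secretly shared/encrypted model weights threaded as state from each call into the next.

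Given this structure, I would invoke the modular sequential composition theorem~\cite{foundations_of_crypto}: because each sub-protocol securely realizes its functionality, it suffices to analyze $\Pi_\texttt{ML}$ in the hybrid world where every sub-protocol is replaced by a call to its ideal functionality. Concretely, I would work in the hybrid model equipped with the source-layer, top-model, and transformation functionalities, and construct a simulator $\mathcal{S}$ for whichever single party the semi-honest adversary corrupts. The point is that in this hybrid world the corrupted party's view consists solely of its own inputs and randomness together with the values it receives as outputs from the functionality calls, and every one of these is either a fresh uniformly random secret share (its piece of $\ss{Z_A^\prime, Z_B^\prime}$, its piece of $\ss{\eps, \nabla Z - \eps}$, or its updated model shares) or a ciphertext of a random secret under the other party's key $\pkotherdot$.

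The simulator is therefore simple: for each functionality output it either samples a uniformly random value of the correct shape or encrypts a random plaintext under $\pkotherdot$, independently of the underlying secret, and it threads the model-weight shares of the \emph{final} iteration so as to agree with the output delivered by $\mathcal{F}_\texttt{ML}$. Because each mask ($\eps_A,\eps_B,\phi,\xi,\rho_\mydot,\psi_\mydot$, and the fresh masks inside each call) is drawn independently at every invocation, the intermediate shares in a real hybrid execution are mutually independent and uniform, so they match the simulated distribution exactly; the ciphertexts are computationally indistinguishable by the semantic security of the Paillier scheme, precisely as argued in the component lemmas. Chaining these per-call arguments over the polynomially many functionality calls via a standard hybrid argument shows that the hybrid-world view is indistinguishable from the simulated one, and the composition theorem lifts this conclusion to the real protocol, giving that $\Pi_\texttt{ML}$ securely realizes $\mathcal{F}_\texttt{ML}$.

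I expect the main obstacle to be the bookkeeping of this stateful, iterated composition rather than any single cryptographic step. One must maintain, across all iterations, the invariant that the secretly shared model weights carried from each backward call into the next forward call never expose more than one additive share to the corrupted party, so that the ``fresh uniform share'' argument stays valid at every round, and verify that the shares delivered by $\mathcal{F}_\texttt{ML}$ are distributed identically to the shares accumulated by the last backward source-layer call. Care is also needed to confirm that the federated-top variants of the source layers still meet their functionality definitions when $Z$ and $\nabla Z$ are never reconstructed in the clear—this is exactly where the $\mathcal{F}_\texttt{SS2HE}$ sub-routine enters the backward pass of Figure~\ref{fig:matmul_fed_top} so that \guest can compute $\enca{\nabla\twb}$ from the share $\nabla Z - \eps$ without learning $\nabla Z$.
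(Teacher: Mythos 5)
Your proposal is correct and takes essentially the same approach as the paper: the paper likewise views training as a hybrid interaction of the federated-top (SS) variants of the source-layer protocols together with $\Pi_\texttt{TopSS}$, and concludes by a hybrid/composition argument combining Lemmas~\ref{thm:matmul_ss_fw}, \ref{thm:matmul_ss_bw}, \ref{thm:embed_matmul_ss_fw}, \ref{thm:embed_matmul_ss_bw} with the assumption on the top model. The only nit is that the component lemmas to cite are those SS variants rather than Lemmas~\ref{thm:matmul_fw}--\ref{thm:embed_matmul_bw}, but you correctly identified that it is the federated-top versions of the source layers (where $Z$ and $\nabla Z$ are never reconstructed in the clear) that must be composed, which is exactly what those lemmas cover.
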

\begin{proof}
Since all values are secretly shared and/or encrypted 
during the training process, 
it is actually a hybrid interaction 
of the $\Pi_{\texttt{MatMulSSFw}}$, 
$\Pi_{\texttt{MatMulSSBw}}$, 
$\Pi_{\texttt{Embed-MatMulSSFw}}$, 
$\Pi_{\texttt{Embed-MatMulSSBw}}$, 
and $\Pi_\texttt{TopSS}$ protocols. 
Therefore, simulation can be done 
using the hybrid argument. 
Combing Lemma~\ref{thm:matmul_ss_fw}, ~\ref{thm:matmul_ss_bw}, ~\ref{thm:embed_matmul_ss_fw}, ~\ref{thm:embed_matmul_ss_bw}, 
and the assumption on the federated top model, 
we prove Theorem~\ref{thm:fed_source_top_security}.
\end{proof}

\begin{shadedbox}
\centerline{Forward propagation of 
the \texttt{MatMul} source layer}

\centerline{followed by an SS-based top model  
$\mathcal{F}_\texttt{MatMulSSFw}$}

\noindent\textbf{Inputs:}
\begin{itemize}[leftmargin=*,label=$\triangleright$]
\item \host inputs features $\xa$, secretly shared and/or encrypted models $\ua, \encb{\va}$, and keys $\ska, \pkb$; 
\item \guest inputs features $\xb$, secretly shared and/or encrypted models $\ub, \enca{\vb}$, and keys $\pka, \skb$.
\end{itemize}

\noindent\textbf{Outputs:}
\begin{itemize}[leftmargin=*,label=$\triangleright$]
\item \host outputs $Z_A^\prime$;
\item \guest outputs $Z_B^\prime$, where $Z_A^\prime + Z_B^\prime = \xa\twa + \xb\twb, \twdot = \udot + \vdot$.
\end{itemize}
\end{shadedbox}

\begin{lemma}
\label{thm:matmul_ss_fw}
The protocol $\Pi_{\texttt{MatMulSSFw}}$ 
in Line 1 of Figure~\ref{fig:matmul_fed_top} 
securely realizes $\mathcal{F}_{\texttt{MatMulSSFw}}$ 
in the presence of a semi-honest adversary 
that can corrupt one party.
\end{lemma}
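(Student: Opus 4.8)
The plan is to give a simulation-based argument in the ideal-real paradigm, handling the two corruption cases and reusing the machinery already established for the non-federated forward pass. The crucial observation is that $\Pi_{\texttt{MatMulSSFw}}$ is precisely the forward routine of Lines 5--7 of Figure~\ref{fig:matmul_routine}, \emph{minus} the final aggregation in Line 8: rather than \guest summing $Z = Z_A^\prime + Z_B^\prime$, each party retains and outputs its own share $Z_\mydot^\prime = \xdot\udot + \eps_\mydot + \xodot\vodot - \eps_\otherdot$. Dropping the aggregation means neither party ever receives the other's share, so in each corruption case the adversary sees \emph{exactly one} incoming message, namely the ciphertext delivered inside the single $\Pi_{\texttt{HE2SS}}$ invocation that converts $\enc{\xodot\vodot}_\mydot$ into the additive share $\xodot\vodot - \eps_\otherdot$. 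This is strictly fewer messages than in Lemma~\ref{thm:matmul_fw}, where \guest additionally received $Z_A^\prime$ in Line 8.

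First I would treat the corruption of \host. Given \host's inputs $(\xa, \ua, \encb{\va}, \ska, \pkb)$ and its prescribed ideal output $Z_A^\prime$, the simulator samples the mask $\eps_A$ exactly as \host would and then simulates the sole incoming message by invoking the $\mathcal{F}_{\texttt{HE2SS}}$ simulator of Lemma~\ref{thm:he2ss}. The one point requiring care is to \emph{program} the HE2SS output so that \host's local computation reproduces the ideal share: the simulator fixes the cross term to $(\xb\vb - \eps_B)^* = Z_A^\prime - \xa\ua - \eps_A$ and passes it to the HE2SS simulator, which re-encrypts it under $\pka$. Because the genuine cross term $\xb\vb - \eps_B$ is masked by the fresh random $\eps_B$ generated by \guest and never disclosed to \host, it is distributed identically to $(\xb\vb - \eps_B)^*$; together with the re-encryption indistinguishability from Lemma~\ref{thm:he2ss}, the simulated view matches the real one. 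The remaining quantities $\eps_A$, the local product $\xa\ua$, and $Z_A^\prime$ itself are all computed locally and need no further simulation.

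The corruption of \guest is entirely symmetric: \guest receives one message inside its own $\Pi_{\texttt{HE2SS}}$ call, the one that transforms $\encb{\xa\va}$, and the simulator programs it against \guest's ideal output $Z_B^\prime$ in the same way, this time with the mask $\eps_A$ playing the hiding role. Correctness and consistency of the outputs are immediate from the algebraic identity $Z_A^\prime + Z_B^\prime = \xa\ua + \xa\va + \xb\ub + \xb\vb = \xa\twa + \xb\twb$ already verified in Section~\ref{sec:matmul}, so the simulated shares always sum to the value the ideal functionality promises.

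I expect the only genuinely delicate point to be the \emph{programmability} of the HE2SS output, i.e., justifying that one may retro-fit the masked cross term to whatever value makes the party's local computation equal its ideal share. This hinges on the mask $\eps_\otherdot$ being uniformly random and hidden from the corrupted party, which makes the received share perfectly uniform from that party's view; once this is granted, the result follows by composing with $\mathcal{F}_{\texttt{HE2SS}}$ through the standard hybrid argument, with everything else mirroring the proof of Lemma~\ref{thm:matmul_fw}.
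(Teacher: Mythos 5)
Your proof is correct and follows essentially the same route as the paper: the paper proves this lemma simply by referring to the proof of Lemma~\ref{thm:matmul_fw}, i.e., simulating each party's single incoming $\Pi_{\texttt{HE2SS}}$ message via the $\mathcal{F}_{\texttt{HE2SS}}$ functionality and constructing the simulated share consistently with the ideal output. Your explicit programming of the masked cross term against the corrupted party's ideal share (needed here because both parties now produce outputs, whereas in $\mathcal{F}_{\texttt{MatMulFw}}$ \host outputs nothing) is precisely the adaptation the paper leaves implicit, so your write-up is, if anything, more complete than the reference given in the paper.
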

\begin{proof}
This lemma can be proved similarly 
as Lemma~\ref{thm:matmul_fw}. 
Thus, we would like to refer readers to the proof 
of Lemma~\ref{thm:matmul_fw}. 
\end{proof}

\begin{shadedbox}
\centerline{Backward propagation of 
the \texttt{MatMul} source layer}

\centerline{followed by an SS-based top model  
$\mathcal{F}_\texttt{MatMulSSBw}$}

\noindent\textbf{Inputs:}
\begin{itemize}[leftmargin=*,label=$\triangleright$]
\item \host inputs features $\xa$, secretly shared derivatives $\eps$, secretly shared and/or encrypted models $\ua, \encb{\va}$, and keys $\ska, \pkb$; 
\item \guest inputs features $\xb$, secretly shared derivatives $\nabla Z - \eps$, secretly shared and/or encrypted models $\ub, \enca{\vb}$, and keys $\pka, \skb$.
\end{itemize}

\noindent\textbf{Outputs:}
\begin{itemize}[leftmargin=*,label=$\triangleright$]
\item \host outputs $\phi_A, \nabla\twb - \phi_B$;
\item \guest outputs $\nabla\twa - \phi_A, \phi_B$, where $\nabla\twdot = \xdot^T \nabla Z$.
\end{itemize}
\end{shadedbox}

\begin{lemma}
\label{thm:matmul_ss_bw}
The protocol $\Pi_{\texttt{MatMulSSBw}}$ 
in Line 2-8 of Figure~\ref{fig:matmul_fed_top} 
securely realizes $\mathcal{F}_{\texttt{MatMulSSBw}}$ 
in the presence of a semi-honest adversary 
that can corrupt one party.
\end{lemma}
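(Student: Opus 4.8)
The plan is to prove this in the simulation paradigm by exploiting the fact that $\Pi_{\texttt{MatMulSSBw}}$ is merely a composition of sub-protocols whose security we have already established. First I would observe that Lines 2--8 consist of one (symmetric) invocation of $\Pi_{\texttt{SS2HE}}$ on the input sharing $\ss{\eps, \nabla Z - \eps}$, which supplies \host with $\encb{\nabla Z}$ and \guest with $\enca{\nabla Z}$ (each party receiving a ciphertext it cannot decrypt), followed by two invocations of $\Pi_{\texttt{HE2SS}}$: one splitting the locally computed $\encb{\nabla\twa}=\xa^T\encb{\nabla Z}$ into $\ss{\phi_A, \nabla\twa - \phi_A}$, and a symmetric one splitting $\enca{\nabla\twb}=\xb^T\enca{\nabla Z}$ into $\ss{\nabla\twb - \phi_B, \phi_B}$. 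The only remaining operations, the homomorphic scalar multiplications $\xdot^T\enc{\nabla Z}$, involve no cross-party communication. Since Lemma~\ref{thm:ss2he} and Lemma~\ref{thm:he2ss} show that these sub-protocols securely realize $\mathcal{F}_{\texttt{SS2HE}}$ and $\mathcal{F}_{\texttt{HE2SS}}$, I would pass to the hybrid world in which each call is replaced by its ideal functionality and argue security there by sequential composition.

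In the hybrid model the task reduces to building a simulator for each corrupted party. For a corrupted \host, the view comprises the $\Pi_{\texttt{SS2HE}}$ output $\encb{\nabla Z}$, the two $\Pi_{\texttt{HE2SS}}$ outputs $\phi_A$ and $\nabla\twb - \phi_B$, and the locally derived ciphertext $\encb{\nabla\twa}$. Given \host's inputs and its prescribed output $(\phi_A, \nabla\twb - \phi_B)$ from $\mathcal{F}_{\texttt{MatMulSSBw}}$, the simulator draws a random plaintext $\nabla Z^\ast$, encrypts it under $\pkb$ to obtain $\encb{\nabla Z^\ast}$, forwards the ideal share pieces unchanged, and recomputes $\encb{\nabla\twa^\ast}=\xa^T\encb{\nabla Z^\ast}$ honestly. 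Because \host lacks $\skb$, semantic security of the Paillier scheme renders $\encb{\nabla Z^\ast}$ (and the derived $\encb{\nabla\twa^\ast}$) computationally indistinguishable from the real ciphertexts, while $\phi_A$ and $\nabla\twb - \phi_B$ match the real shares by construction. The corrupted-\guest case is symmetric, encrypting a random plaintext under $\pka$ and using the ideal output $(\nabla\twa - \phi_A, \phi_B)$.

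The technical crux I would handle most carefully is the hybrid step involving the locally computed ciphertexts rather than the additive shares. The masks $\phi_A, \phi_B$ are fresh and uniform, so the shares $\phi_A, \nabla\twa - \phi_A, \phi_B, \nabla\twb - \phi_B$ reproduce the ideal distribution exactly, and this part is perfectly hiding. The delicate point is that $\encb{\nabla\twa}$ is a \emph{deterministic} image of the simulated input ciphertext, so indistinguishability of the full view must be reduced to semantic security in one shot: a distinguisher separating the real and simulated views would, after applying the public homomorphic map $\xa^T(\cdot)$, yield a distinguisher against an encryption of $\nabla Z$ versus an encryption of a random plaintext. Once this reduction is combined with the composition guarantees from Lemma~\ref{thm:ss2he} and Lemma~\ref{thm:he2ss}, both views are shown indistinguishable and the proof concludes. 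The argument mirrors that of Lemma~\ref{thm:matmul_bw}, the only new ingredient being the opening $\Pi_{\texttt{SS2HE}}$ conversion that hands each party an undecryptable encryption of $\nabla Z$.
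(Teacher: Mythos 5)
Your strategy coincides with the paper's: decompose Lines 2--8 into the $\Pi_{\texttt{SS2HE}}$ and $\Pi_{\texttt{HE2SS}}$ sub-protocols, simulate those via Lemma~\ref{thm:ss2he} and Lemma~\ref{thm:he2ss}, and handle the locally derived ciphertext $\encb{\nabla\twa} = \xa^T\encb{\nabla Z}$ by recomputing it from the simulated $\encb{\nabla Z^*}$ and reducing to semantic security of the cryptosystem; your one-shot reduction for this deterministic homomorphic image is, if anything, stated more carefully than in the paper (which, like Lemma~\ref{thm:matmul_bw}, calls the resulting simulation ``perfect'' where it is really computational). However, there is a concrete gap: your enumeration of the corrupted party's view is incomplete, because your reconstruction of the protocol omits its final message.

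In this protocol each party also holds an encrypted copy of the other party's model share (\host keeps $\encb{\va}$, \guest keeps $\enca{\vb}$), and these copies must be refreshed after the gradient update so that the next forward pass remains consistent. \guest updates $\va$ using its plaintext share $\nabla\twa - \phi_A$, so for \host to refresh $\encb{\va}$ homomorphically, \guest must re-encrypt that share under $\pkb$ and send $\encb{\nabla\twa - \phi_A}$ back to \host (Line 7 of Figure~\ref{fig:matmul_fed_top}); symmetrically, \guest receives $\enca{\nabla\twb - \phi_B}$. Hence the simulator for a corrupted \host must produce \emph{three} incoming messages --- $\encb{\nabla Z}$, $\nabla\twb - \phi_B$, and $\encb{\nabla\twa - \phi_A}$ --- and the paper simulates the third by encrypting a randomly chosen plaintext $(\nabla\twa - \phi_A)^*$ under $\pkb$: since $\nabla\twa - \phi_A$ is a fresh uniform share and \host lacks $\skb$, real and simulated ciphertexts are computationally indistinguishable. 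Your claim that the only operations beyond the two conversions ``involve no cross-party communication'' is therefore false for this protocol, and the simulator you construct outputs a view missing one component (the same omission occurs in your corrupted-\guest case). The repair is routine --- the missing ciphertext is handled by exactly the random-plaintext-plus-semantic-security argument you already apply to $\encb{\nabla Z}$ --- but it must be added for the indistinguishability claim to cover the full view.
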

\begin{proof}
Since both parties have symmetric routines, 
we only describe how to simulate the view 
of \host. 
There are three messages received in the protocol. 
For the first two messages, i.e., 
$\encb{\nabla Z}, \nabla\twb - \phi_B$ (Line 3 and 5), 
they can be simulated by simulating 
the $\Pi_{\texttt{SS2HE}}$ and $\Pi_{\texttt{HE2SS}}$ 
protocols, respectively. 
For the third message, i.e., 
$\encb{\nabla\twa - \phi_A}$ (Line 7), 
we construct a simulator that 
randomly picks plaintexts 
$(\nabla\twa - \phi_A)^*$ 
and encrypts them using $\pkb$ 
to obtain $\enc{(\nabla\twa - \phi_A)^*}$. 
Since $\nabla\twa - \phi_A$ is one piece of random secret, 
$(\nabla\twa - \phi_A)^*$ and $\nabla\twa - \phi_A$ 
share the same probability distribution. 
Furthermore, without \guest's secret key $\skb$, 
$\enc{\nabla\twa - \phi}$ and $\enc{(\nabla\twa - \phi)^*}$ 
are computationally indistinguishable 
from the perspective of \host. 
The other values in the view of \host 
are computed locally. 
For instance, denoting $\encb{\nabla Z^*}$ as 
the simulated version of $\encb{\nabla Z}$, 
the simulator can simulate 
$\encb{\twa}$ by computing 
$\encb{\twa^*} = \xa^T\encb{\nabla Z^*}$. 
Again, they are computationally indistinguishable 
from the perspective of \host. 
Consequently, the view of \host can be simulated perfectly. 
\end{proof}

\begin{shadedbox}
\centerline{Forward propagation of 
the \texttt{Embed-MatMul} source layer}

\centerline{followed by an SS-based top model  
$\mathcal{F}_\texttt{EmbedMatMulSSFw}$}

\noindent\textbf{Inputs:}
\begin{itemize}[leftmargin=*,label=$\triangleright$]
\item \host inputs features $\xa$, secretly shared and/or encrypted models $\sa, \encb{\ta}, \tb, \ua, \encb{\va}, \encb{\ub}, \vb$, and keys $\ska, \pkb$; 
\item \guest inputs features $\xb$, secretly shared and/or encrypted models $\sb, \enca{\tb}, \ta, \ub, \enca{\vb}, \enca{\ua}, \va$, and keys $\pka, \skb$. 
\end{itemize}

\noindent\textbf{Outputs:}
\begin{itemize}[leftmargin=*,label=$\triangleright$]
\item \host outputs $Z_A^\prime$;
\item \guest outputs $Z_B^\prime$, where $Z_A^\prime + Z_B^\prime = \ea\twa + \eb\twb, \edot = \lk(\qdot, \xdot), \qdot = \sdot + \tdot, \twdot = \udot + \vdot$.
\end{itemize}
\end{shadedbox}

\begin{lemma}
\label{thm:embed_matmul_ss_fw}
The protocol $\Pi_{\texttt{EmbedMatMulSSFw}}$ 
in Line 1 of Figure~\ref{fig:embed_matmul_fed_top} 
securely realizes $\mathcal{F}_{\texttt{EmbedMatMulSSFw}}$ 
in the presence of a semi-honest adversary 
that can corrupt one party.
\end{lemma}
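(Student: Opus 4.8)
The plan is to reduce this lemma to results already in hand, namely the security of the $\Pi_\texttt{HE2SS}$ sub-routine (Lemma~\ref{thm:he2ss}) and the simulation argument used for the non-federated forward routine (Lemma~\ref{thm:embed_matmul_fw}). First I would pin down the protocol: $\Pi_{\texttt{EmbedMatMulSSFw}}$ runs exactly the \texttt{Embed} and \texttt{MatMul} stages of Figure~\ref{fig:embed_matmul_routine}, producing for each party the shares $\psi_\mydot$ of the embedding entries and then the shares $Z_{1,\mydot}^\prime, Z_{2,\mydot}^\prime$ of the two matrix products. The sole departure from $\Pi_\texttt{EmbedMatMulFw}$ is that the final aggregation is \emph{omitted}: instead of transmitting $Z_A^\prime$ to \guest and summing, each party retains its local share $Z_\mydot^\prime = Z_{1,\mydot}^\prime + Z_{2,\mydot}^\prime$ as its output.

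For correctness I would reuse the identity already verified for the full forward pass, so that $Z_A^\prime + Z_B^\prime = Z_{1,A}^\prime + Z_{1,B}^\prime + Z_{2,A}^\prime + Z_{2,B}^\prime = \ea\twa + \eb\twb$, matching the output requirement of $\mathcal{F}_\texttt{EmbedMatMulSSFw}$.

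The security argument then proceeds by constructing a simulator for a corrupted \host and, symmetrically, for a corrupted \guest. The key observation is that every message a party receives in this protocol arrives through an invocation of $\Pi_\texttt{HE2SS}$ --- one in the \texttt{Embed} stage and one in each of the two \texttt{MatMul} stages --- while all remaining quantities, in particular the party's own output share $Z_\mydot^\prime$, are computed locally. Hence I would replace each of these three incoming messages by the output of the $\mathcal{F}_\texttt{HE2SS}$ simulator guaranteed by Lemma~\ref{thm:he2ss}; the resulting transcript is indistinguishable from the real one, and the party's output share is then reconstructed from the simulated messages exactly as in the real execution.

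I do not anticipate a genuine obstacle; in fact this proof should be strictly easier than that of Lemma~\ref{thm:embed_matmul_fw}. There, \guest additionally received the piece $Z_A^\prime$ during aggregation, and its simulation required the global constraint $Z_A^{\prime*} = Z - Z_B^{\prime*}$, relying on \guest's knowledge of the aggregated $Z$. Since $\mathcal{F}_\texttt{EmbedMatMulSSFw}$ releases to each party only its own share and never the combined $Z$, that fourth message is gone and no such reconstruction is needed. The one point I would take care to confirm is that keeping $Z_\mydot^\prime$ in the clear at each party leaks nothing beyond the ideal output, which holds because $Z_\mydot^\prime$ is a deterministic function of that party's private inputs together with messages already shown to be simulatable.
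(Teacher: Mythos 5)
Your proposal is correct and takes essentially the same route as the paper: the paper's own proof is just a deferral to Lemma~\ref{thm:embed_matmul_fw}, and what you write out is exactly that argument specialized to the share-output setting --- all three messages each party receives come from $\Pi_{\texttt{HE2SS}}$ and are simulated via Lemma~\ref{thm:he2ss}, while everything else is local. Your added observation that the fourth message to \guest (simulated in Lemma~\ref{thm:embed_matmul_fw} as $Z_A^{\prime*} = Z - Z_B^{\prime*}$) disappears here, making the simulation strictly simpler, is accurate and consistent with the paper's treatment.
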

\begin{proof}
This lemma can be proved similarly 
as Lemma~\ref{thm:embed_matmul_fw}. 
Thus, we would like to refer readers to the proof 
of Lemma~\ref{thm:embed_matmul_fw}. 
\end{proof}

\begin{shadedbox}
\centerline{Backward propagation of 
the \texttt{Embed-MatMul} source layer}

\centerline{followed by an SS-based top model  
$\mathcal{F}_\texttt{EmbedMatMulSSBw}$}

\noindent\textbf{Inputs:}
\begin{itemize}[leftmargin=*,label=$\triangleright$]
\item \host inputs features $\xa$, secretly shared derivatives $\eps$, secretly shared activations $\psi_A, \eb - \psi_B$, secretly shared and/or encrypted models $\sa, \encb{\ta}, \tb, \ua, \encb{\va}, \encb{\ub}, \vb$, and keys $\ska, \pkb$; 
\item \guest inputs features $\xb$, secretly shared derivatives $\nabla Z - \eps$, secretly shared activations $\ea - \psi_A, \psi_B$, secretly shared and/or encrypted models $\sb, \enca{\tb}, \ta, \ub, \enca{\vb}, \enca{\ua}, \va$, and keys $\pka, \skb$. 
\end{itemize}

\noindent\textbf{Outputs:}
\begin{itemize}[leftmargin=*,label=$\triangleright$]
\item \host outputs $\phi, \xi, \rho_A, \nabla\qa - \rho_B$;
\item \guest outputs $\nabla\twa - \phi, \nabla\twb - \xi, \nabla\qa - \rho_A, \rho_B$, where $\nabla\twdot = \edot^T \nabla Z, \nabla\edot = \nabla Z \twdot^T, \nabla\qdot = \dlk(\nabla\edot, \xdot)$.
\end{itemize}
\end{shadedbox}

\begin{lemma}
\label{thm:embed_matmul_ss_bw}
The protocol $\Pi_{\texttt{EmbedMatMulSSBw}}$ 
in Line 2-10 of Figure~\ref{fig:embed_matmul_routine} 
securely realizes $\mathcal{F}_{\texttt{EmbedMatMulSSBw}}$ 
in the presence of a semi-honest adversary 
that can corrupt one party.
\end{lemma}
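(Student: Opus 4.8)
The plan is to reuse the simulation-based template already established in Lemma~\ref{thm:embed_matmul_bw} (the non-federated \texttt{Embed-MatMul} backward) and Lemma~\ref{thm:matmul_ss_bw} (the secretly-shared \texttt{MatMul} backward), since $\Pi_{\texttt{EmbedMatMulSSBw}}$ is structurally the former protocol with one extra layer of secret-sharing on the derivatives. The only essential difference from $\mathcal{F}_{\texttt{EmbedMatMulBw}}$ is that $\nabla Z$ is no longer held in plaintext by \guest; instead both parties hold one piece of $\langle \eps, \nabla Z - \eps \rangle$. The protocol therefore prepends a $\Pi_{\texttt{SS2HE}}$ invocation to lift this shared derivative into encrypted form, after which it runs essentially the same two-stage routine (\texttt{MatMul} gradients, then embedding-table gradients). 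I would construct one simulator per corrupted party and argue that each produces a view computationally indistinguishable from the real one.

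First I would fix the corrupted party and enumerate every message it receives, classifying each into one of two buckets. Messages produced by an invocation of $\Pi_{\texttt{SS2HE}}$ or $\Pi_{\texttt{HE2SS}}$ --- e.g. the encrypted derivative obtained at the start and each secretly-shared gradient piece such as $\nabla\twa - \phi$, $\nabla\twb - \xi$, and $\nabla\qb - \rho_B$ --- are simulated by directly invoking the simulators guaranteed by Lemma~\ref{thm:ss2he} and Lemma~\ref{thm:he2ss}. Messages that are fresh Paillier ciphertexts of random secrets --- such as $\encb{\nabla Z}$, $\encb{\nabla Z \va^T}$, $\encb{\nabla\qa - \rho_A}$, and their \guest-side counterparts --- are simulated by sampling plaintexts of matching shape and encrypting them under the appropriate public key ($\pkb$ or $\pka$). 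Because $\nabla Z$ is an input and each masked gradient is a single uniformly-random share, the sampled plaintexts match the true marginal distributions, and semantic security of Paillier makes the ciphertexts computationally indistinguishable from the real ones without the opposite party's secret key.

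Next I would handle the locally-derived values. Every quantity the corrupted party computes on its own --- homomorphic combinations like $\encb{\nabla\ea} = \encb{\nabla Z}\ua^T + \encb{\nabla Z \va^T}$, the backward lookups $\dlk(\cdot,\cdot)$ over encrypted tables, and the additive updates to the shares of $\ua, \va, \sa, \ta$ --- is reproduced in the ideal world by applying the identical operations to the simulated messages. Since computational indistinguishability is preserved under any fixed polynomial-time post-processing, these derived values stay indistinguishable, so stitching them together yields a (computationally) perfect simulation of the entire view; combining the two corruption cases closes the lemma.

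I expect the principal obstacle to be bookkeeping rather than cryptography. The SS-based \texttt{Embed-MatMul} backward carries the largest message count of any routine in the paper, and the delicate part is correctly partitioning \emph{every} transmitted value into the ``$\Pi_{\texttt{SS2HE}}$/$\Pi_{\texttt{HE2SS}}$ output'' bucket versus the ``ciphertext of a random secret'' bucket, and verifying that each masked gradient is genuinely one-time-padded by an independent share so that its marginal is uniform. The underlying cryptographic arguments --- uniformity of additive shares and semantic security of Paillier --- are identical to those already used in Lemma~\ref{thm:embed_matmul_bw} and Lemma~\ref{thm:matmul_ss_bw}, so once the message inventory is pinned down the indistinguishability steps are routine.
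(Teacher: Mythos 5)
Your proposal follows essentially the same route as the paper's proof: classify every message the corrupted party receives as either an output of a $\Pi_{\texttt{HE2SS}}$/$\Pi_{\texttt{SS2HE}}$ invocation (simulated by invoking the simulators from Lemmas~\ref{thm:he2ss} and~\ref{thm:ss2he}) or a fresh ciphertext of a uniformly random share (simulated by encrypting a randomly sampled plaintext and appealing to semantic security without the opposite party's secret key), and then reproduce all locally computed values by applying the identical homomorphic operations to the simulated messages. The only slip is bookkeeping, as you yourself anticipated: you list $\encb{\nabla Z}$ both as a sub-protocol output and as a fresh ciphertext of a random secret, whereas in the SS-based protocol it arises from the $\Pi_{\texttt{SS2HE}}$ lift and belongs only to the first bucket --- but either treatment yields a valid simulation, so this does not affect correctness.
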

\begin{proof}
Since both parties have symmetric routines, 
we only describe how to simulate the view of \host. 
There are eight messages received in total. 

First, for the four messages received 
in Line 3, 5, 6, and 8 
of Figure~\ref{fig:embed_matmul_fed_top}, 
they are received by 
invoking the $\Pi_{\texttt{HE2SS}}$ 
and $\Pi_{\texttt{SS2HE}}$ protocols, 
so we can simulate them by simulating the 
$\mathcal{F}_{\texttt{HE2SS}}$ 
and $\mathcal{F}_{\texttt{SS2HE}}$ functionalities. 

The $\Pi_{\texttt{EmbedMatMulSSBw}}$ protocol 
in Figure~\ref{fig:embed_matmul_fed_top} 
also involves parts of 
the $\Pi_{\texttt{EmbedMatMulBw}}$ protocol 
in Figure~\ref{fig:embed_matmul_routine}, 
where four messages are also received 
(Line 18, 19, 23, and 25 
of Figure~\ref{fig:embed_matmul_routine}). 
For the message received in Line 23 
of Figure~\ref{fig:embed_matmul_routine}, 
we can simulate it by simulating the 
$\mathcal{F}_{\texttt{HE2SS}}$ functionalities. 
For the other three messages received in 
Line 18, 19, and 25 of Figure~\ref{fig:embed_matmul_routine}, 
we construct a simulator that 
randomly picks plaintexts 
$(\nabla\twa - \phi)^*, (\nabla\twb - \xi)^*, 
(\nabla\qa - \rho_A)^*$ 
and encrypts them using $\pkb$ to obtain 
$\encb{(\nabla\twa - \phi)^*}$, $\encb{(\nabla\twb - \xi)^*}$, 
$\encb{(\nabla\qa - \rho_A)^*}$. 
Since 
$\nabla\twa - \phi, \nabla\twb - \xi, \nabla\qa - \rho_A$  
are random secrets, 
$(\nabla\twa - \phi)^*$ and $\nabla\twa - \phi$ 
(or $(\nabla\twb - \xi)^*$ and $\nabla\twb - \xi$, 
or $(\nabla\qa - \rho_A)^*$ and $\nabla\qa - \rho_A$) 
share the same probability distribution. 
Furthermore, without \guest's secret key $\skb$, 
the ciphertexts 
$\encb{(\nabla\twa - \phi)^*}$ and $\encb{\nabla\twa - \phi}$ 
(or $\encb{(\nabla\twb - \xi)^*}$ and $\encb{\nabla\twb - \xi}$, 
or $\encb{(\nabla\qa - \rho_A)^*}$ and $\encb{\nabla\qa - \rho_A}$) 
are computationally indistinguishable 
from the perspective of \host. 
Therefore, they can be perfectly simulated. 

The other values in the view of \host 
are computed locally. 
For instance, denoting the simulated version of 
$\encb{\nabla Z}$, $\encb{(\nabla Z - \eps) \va^T}$ 
as $\encb{\nabla Z}^*$, $\encb{(\nabla Z - \eps) \va^T}^*$, 
respectively, 
the simulator can simulate $\encb{\ea}$ by computing 
$\encb{\ea}^* = \encb{\nabla Z}^* \ua^T + 
\encb{(\nabla Z - \eps) \va^T}^* + 
\eps \encb{\va^T}$. 
Again, they are computationally indistinguishable 
from the perspective of \host. 
Consequently, the view of \host can be simulated perfectly. 
\end{proof}

\section{Extension to Multi-Party Learning}
\label{sec:ext_multi}

\newcommand{\hosti}{\textit{Party A(i)}\xspace}

\newcommand{\ai}{{A(i)}}
\newcommand{\bi}{{B(i)}}
\newcommand{\xai}{X_\ai}
\newcommand{\twai}{W_\ai}
\newcommand{\uai}{U_\ai}
\newcommand{\vai}{V_\ai}
\newcommand{\vbi}{V_\bi}
\newcommand{\qai}{Q_\ai}
\newcommand{\sai}{S_\ai}
\newcommand{\tai}{T_\ai}
\newcommand{\tbi}{T_\bi}
\newcommand{\eai}{E_\ai}
\newcommand{\ebi}{E_\bi}

\begin{algorithm}[!t]
\caption{The \texttt{MatMul} source layer under the multi-party setting. We assume there are $M$ \host's.}
\label{alg:matmul_multi}
\DontPrintSemicolon
\SetKwFunction{MatMulInit}{MatMulInit}
\SetKwFunction{MatMulFw}{MatMulFw}
\SetKwFunction{MatMulBw}{MatMulBw}
\SetKwFunction{MultiPartyMatMulInit}{MultiPartyMatMulInit}
\SetKwFunction{MultiPartyMatMulFw}{MultiPartyMatMulFw}
\SetKwFunction{MultiPartyMatMulBw}{MultiPartyMatMulBw}
\SetKwFunction{HEtoSS}{HE2SS}
\SetKwProg{Fn}{Function}{:}{}

\Fn{\MultiPartyMatMulInit{$IN_\ai, IN_B, OUT$}}
{
	\uIf {is \textup{\guest}}
	{
		Initialize $\ub \in \mathbb{R}^{IN_B \times OUT}$\;
		\ForEach{$i \gets 1$ \KwTo $M$}
		{
			Initialize $\vai \in \mathbb{R}^{IN_\ai \times OUT}$\;
			Enc and Send $\vai$; $\;$ 
			Recv $\enc{\vbi}_\ai$\; 
		}
		\Return $\ub, \enc{\vbi}_\ai, \vai$
	}
	\uElse (\tcp*[h]{\hosti})
	{
		Initialize $\uai \in \mathbb{R}^{IN_\ai \times OUT}, 
			\vbi \in \mathbb{R}^{IN_B \times OUT}$\;
		Enc and Send $\vbi$; $\;$
		Recv $\enc{\vai}_B$\;
		\Return $\uai, \enc{\vai}_B, \vbi$
	}
}

\Fn{\MultiPartyMatMulFw{$\xdot, \udot, \enc{\vdot}_\otherdot$}}
{
	\uIf {is \textup{\guest}}
	{
		\ForEach{$i \gets 1$ \KwTo $M$}
		{
			\tcc{Line 5-8 of Figure~\ref{fig:matmul_routine}}
			$Z_i =\ $\MatMulFw{$\xb, \ub / M, \enc{\vbi}_\ai$}\;
		}
		\Return $Z = \sum_i Z_i$
	}
	\uElse (\tcp*[h]{\hosti})
	{
		\tcc{Line 5-8 of Figure~\ref{fig:matmul_routine}}
		\MatMulFw($\xa, \uai, \encb{\vai}$)\;
		\Return null\;
	}
}

\Fn{\MultiPartyMatMulBw{$\xdot, \udot, \vodot$, $\nabla Z$}}
{
	\uIf {is \textup{\guest}}
	{
		Enc $\nabla Z$ into $\encb{\nabla Z}$\;
		\ForEach{$i \gets 1$ \KwTo $M$}
		{
			Send $\encb{\nabla Z}$\; 
			$\nabla \twai - \phi_\ai =\ $\HEtoSS{}\;
			Update $\vai$ via $\nabla \twai - \phi_\ai$, 
				Enc and Send $\vai$\;
		}
		Update $\ub$ via $\nabla \twb = \xb^T \nabla Z$
	}
	\uElse (\tcp*[h]{\hosti})
	{
		Recv $\encb{\nabla Z}$\; 
		$\phi_\ai =\ $\HEtoSS{$\xai^T\encb{\nabla Z}$}\;
		Recv $\encb{\vai}$
	}
}

\end{algorithm}

Although we focus on the two-party setting 
throughout this paper, 
the proposed federated source layers of \alg 
can also be generalized to the multi-party setting 
where there are two or more \host's. 
Here we provide an example of how to 
adapt the \texttt{MatMul} source layer 
to more than one \host's. 
The adaption for \texttt{Embed-MatMul} 
can be achieved similarly. 

The multi-party \texttt{MatMul} source layer 
is presented in Algorithm~\ref{alg:matmul_multi}. 
The idea is to secretly share the model weights of 
each \host with \guest one by one and 
let all \host's execute the same routines. 
To be specific, for the $i$-th \host (denoted as \hosti), 
the model weights are secretly shared as 
$\twai = \uai + \vai$, where $\vai$ is managed by \guest. 
For \guest, the model weights are broken into more pieces, 
i.e., $\twb = \ub + \sum_i \vbi$, where $\vbi$ is managed by \hosti.

\begin{figure*}[!t]
\begin{minipage}{.62\textwidth}
\centering
\small
\captionof{table}{{Averaged training time cost of one mini-batch (in seconds). We only record the time cost of matrix multiplication for a fair comparison.}}
{\begin{tabular}{|c|c|c|c|c|}
\hline
\multirow{3}*{\specialcell{\textbf{Dataset \&}\\\textbf{Sparsity}}}
& \multirow{3}*{\textbf{Model}} 
& \multicolumn{3}{c|}{\textbf{Time Cost/Batch (Seconds)}} 
\\
\cline{3-5}
& & \textbf{\alg} & \textbf{SecureML} 
& \specialcell{\textbf{SecureML}\\\textbf{(Client-aided)}}
\\
\hline
\hline
\textit{fmnist} (Dense) 
& MLP 
& 16.741
& 38.913 
& 0.006 
\\
\hline
\end{tabular}}
\label{tb:fmnist_efficiency}
\end{minipage}
\begin{minipage}{.01\textwidth}
$ $
\end{minipage}
\begin{minipage}{.35\textwidth}
\centering
\includegraphics[width=1.6in]{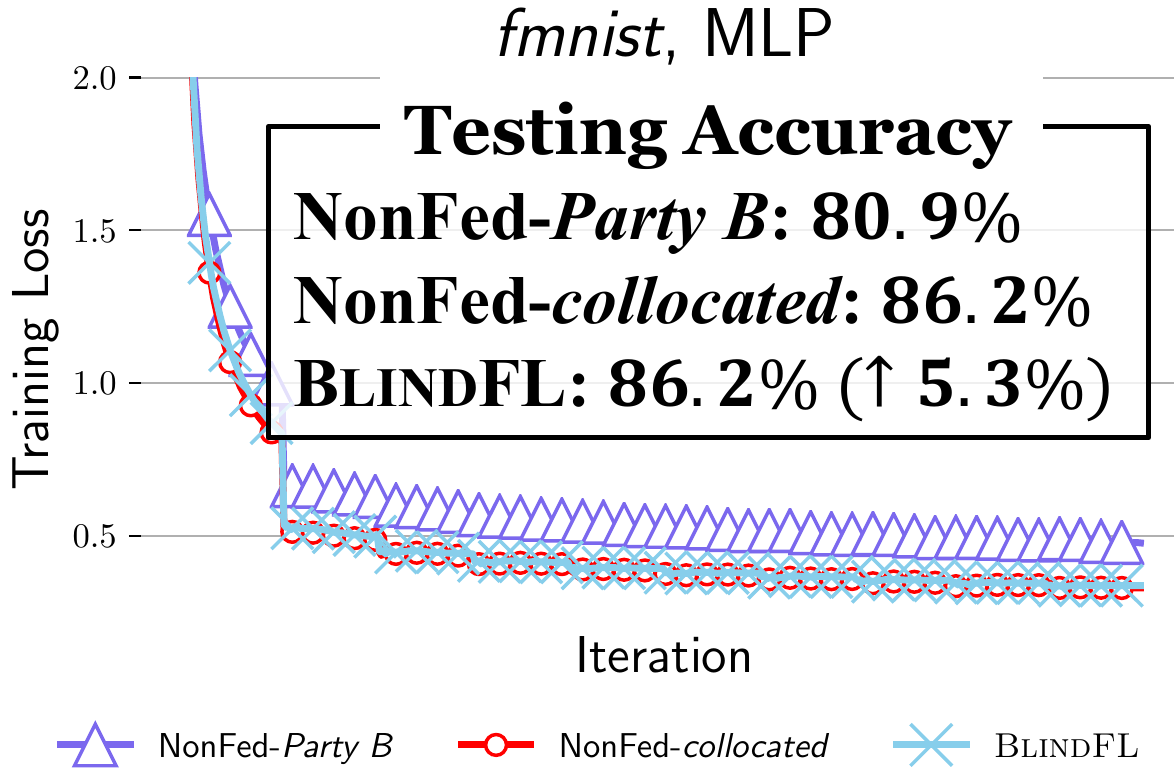}
\captionof{figure}{{Training loss in terms of iterations and testing AUC or accuracy metrics.}}
\label{fig:fmnist_converge}
\end{minipage}
\end{figure*}

\begin{figure*}[!t]
\begin{minipage}{.48\textwidth}
\small
\centering
\captionof{table}{{Scalability w.r.t. the output dimensionality of the \texttt{MatMul} source layer (\textit{connect-4}, 3-layer MLP).}}
{\begin{tabular}{|c|c|c|c|c|}
\hline
\textbf{Hidden Dim} & 32 & 64 & 128 & 256 \\
\hline
\hline
\textbf{Related Time Cost} 
& 1.00$\times$
& 1.91$\times$
& 3.94$\times$
& 8.06$\times$
\\
\hline
\textbf{Validation Accuracy} 
& 79.8\%
& 80.6\%
& 81.8\%
& 82.3\%
\\
\hline
\end{tabular}}
\label{tb:scalability_hidden} 
\end{minipage}
\begin{minipage}{.01\textwidth}
$ $
\end{minipage}
\begin{minipage}{.48\textwidth}
\small
\centering
\captionof{table}{{Scalability w.r.t. the number layers (\textit{connect-4}, MLP).}}
{\begin{tabular}{|c|c|c|c|c|}
\hline
\textbf{\# Layers} & 3 & 4 & 5 & 6 \\
\hline
\hline
\textbf{Related Time Cost} 
& 1.00$\times$
& 1.01$\times$
& 1.02$\times$
& 1.02$\times$
\\
\hline
\textbf{Validation Accuracy} 
& 80.6\%
& 80.9\%
& 81.0\%
& 80.6\%
\\
\hline
\end{tabular}}
\label{tb:scalability_layer}
\end{minipage}
\end{figure*}

Consequently, the goal of forward propagation is to compute 
the following results: 
\begin{equation*}
\begin{aligned}
Z &= \sum_i \xai\twai + \xb\twb \\
&= \sum_i \xai(\uai + \vai) + \xb(\ub + \sum_i \vbi) \\
&= \sum_i \large(\xai\uai + \xai\vai + \xb\frac{\ub}{M} + \xb\vbi \large), 
\end{aligned}
\end{equation*} 
where $M$ is the number of \host's. 
This can be reckoned as performing 
federated matrix multiplication 
between \guest and each \hosti, respectively, 
and summing up the $M$ intermediate results 
to achieve the final results. 
The detailed routines are presented 
in Line 12-19 of Algorithm~\ref{alg:matmul_multi}. 

For the backward propagation, 
\guest can still compute model gradients 
$\nabla\twb = \xb^T\nabla Z$ on its own. 
For \hosti, it only needs to achieve the secretly shared 
model gradients $\ss{\phi_\ai, \nabla\twai - \phi_\ai}$ 
via a similar routine as Figure~\ref{fig:matmul_routine}. 
Line 20-31 of Algorithm~\ref{alg:matmul_multi} 
shows the detailed procedure. 

Finally, it is worthy to note that 
here we only provide a simple method 
to extend \alg to the multi-party setting. 
It is interesting to explore 
how to optimize the algorithm protocols 
with smaller computation complexity 
and fewer communication rounds. 
We would like to leave them as our future work.

\section{More experiments}

\subsection{Experiments on Fashion MNIST}
In Section~\ref{sec:expr}, 
we conduct our experiments on tabular datasets. 
To evaluate whether our work also supports 
more kinds of datasets, 
we conduct experiments on 
the Fashion MNIST (\textit{fmnist}) dataset 
in this section. 
To be specific, we split each image into 
two 14$\times$28 subfigures 
to simulate the feature partitioning in VFL. 
Then, we train the MLP models 
over the partitioned dataset. 
As shown in Table~\ref{tb:fmnist_efficiency} and Figure~\ref{fig:fmnist_converge}, 
the results are consistent with 
those on other datasets. 
First, \alg runs faster than SecureML without client aided 
but slower than the client-aided SecureML. 
Second, \alg has a similar convergence as 
non-federated learning on collocated data 
(NonFed-\textit{collocated}), 
and achieves better performance than 
non-federated learning on the features of \guest only 
(NonFed-\guest), 
verifying the lossless property of our work.

\subsection{Scalability}
In this section, 
we train the MLP models on the \textit{connect-4} dataset 
to empirically evaluate the scalability of our work. 

\textbf{Scalability w.r.t. \#hidden dim.\xspace}
We first fix the number of hidden layers as 3 
and vary the output dimensionality of the first layer, 
i.e., the \texttt{MatMul} source layer. 
(The output dimensionalities of the second and third layers 
are fixed as 16 and 3, respectively.)
As shown in Table~\ref{tb:scalability_hidden}, 
when the first layer is wider, 
the validation accuracy increases slightly, 
whilst the training time cost increases almost proportionally. 
This is because the number of cryptography operations 
in the \texttt{MatMul} source layer 
also increases correspondingly. 

\textbf{Scalability w.r.t. \#layers.\xspace}
Then, we assess the effect of the number of layers. 
To do so, we fix the output dimensionalities of 
the first and the last but one layers as 64 and 16, 
and vary the number of layers by 
inserting 32-unit layers in the middle. 
For instance, in the 4-layer case, 
the output dimensionalities of all layers 
are 64, 32, 16, and 3, respectively. 
As shown in Table~\ref{tb:scalability_layer}, 
the number of layers does not affect 
the training time cost significantly. 
This is reasonable since the top model is 
a non-federated module 
and the major overhead is the \texttt{MatMul} source layer.

\fi

\end{document}
\endinput
